\theoremstyle{plain}
\newtheorem{theorem}{Theorem}[section]
\newtheorem{proposition}[theorem]{Proposition}
\newtheorem{lemma}[theorem]{Lemma}
\theoremstyle{definition}
\newtheorem{definition}[theorem]{Definition}
\theoremstyle{remark}
\definecolor{mygray}{gray}{.9}
\newcommand\blue{\textcolor{blue}}
\newcommand\red{\textcolor{red}}
\title{Interpret Your Decision: Logical Reasoning Regularization for Generalization in Visual Classification}
\author{%
 Zhaorui Tan$^{1,2}$, Xi Yang$^1$$^*$, Qiufeng Wang$^1$, Anh Nguyen$^2$, Kaizhu Huang$^3$\thanks{Corresponding authors.}\\
  $^1$ Xi'an-Jiaotong Liverpool University\\
  $^2$ University of Liverpool \\
  $^3$Duke Kunshan University
  % examples of more authors
  % \And
  % Coauthor \\
  % Affiliation \\
  % Address \\
  % \texttt{email} \\
  % \AND
  % Coauthor \\
  % Affiliation \\
  % Address \\
  % \texttt{email} \\
  % \And
  % Coauthor \\
  % Affiliation \\
  % Address \\
  % \texttt{email} \\
  % \And
  % Coauthor \\
  % Affiliation \\
  % Address \\
  % \texttt{email} \\
}
\begin{document}

\maketitle

\begin{abstract}
  Vision models excel in image classification but struggle to generalize to unseen data, such as classifying images from unseen domains or discovering novel categories. In this paper, we explore the relationship between logical reasoning and deep learning generalization in visual classification. A logical regularization termed L-Reg is derived which bridges a logical analysis framework to image classification. 
  Our work reveals that L-Reg reduces the complexity of the model in
  terms of the feature distribution and classifier weights. 
  Specifically, we unveil the interpretability brought by L-Reg, as it enables the model to extract the salient features, such as faces to persons, for classification. 
  Theoretical analysis and experiments demonstrate that L-Reg enhances generalization across various scenarios, including multi-domain generalization and generalized category discovery. In complex real-world scenarios where images span unknown classes and unseen domains, L-Reg consistently improves generalization, highlighting its practical efficacy.
\end{abstract}

\section{Introduction}

% Vision models have been validated to be powerful for image classification tasks~\cite{}. 
% However, training a model that can generalize to unseen data seems to be a remaining challenge. 
% For example, a model trained with real images of different animals is expected to classify sketches of animals; moreover, we may further expect that the model can discover the categories of animals that have not been included in the training set.  
% Compared to vision models,
% human brains are powerful image classifiers and can easily generalized to unseen samples, which may count to strong logical reasoning capability. 
% A person can easily tell a sketch is about a cat, though the sketch only contains Triangular ears and a pointy whiskers. 
% One person can also tell that a photo of an animal may be from a currently discovered species since it is nothing close to the animals he or she already knows.

% \textbf{vapnik}

% Visual models have demonstrated significant prowess in image classification tasks. 
% However, 
\begin{wrapfigure}{r}{0.6\linewidth}
\vspace{-0.6cm}
    % \begin{figure}[t]
  \centering
    \includegraphics[width=\linewidth]{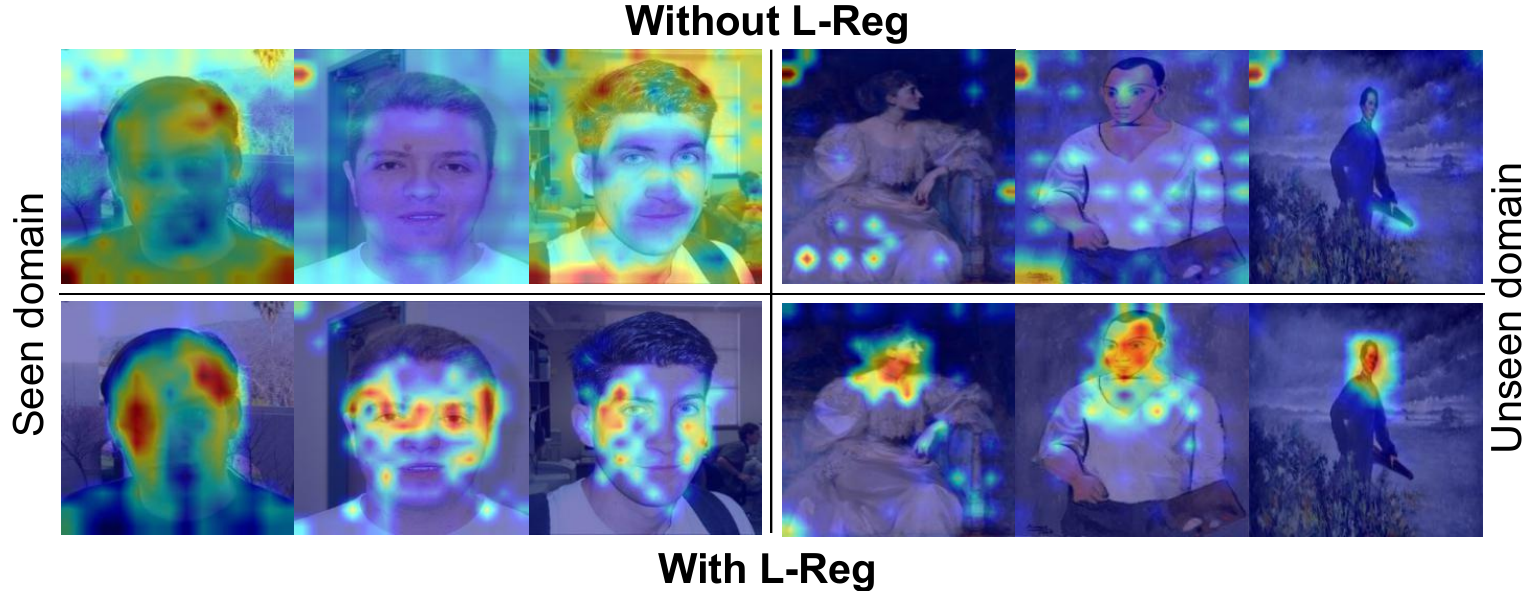}
  \caption{GradCAM~\cite{selvaraju2017grad} visualizations for the unknown class `person' across seen and unseen domains
  of {the GMDG baseline} with $L_2$ regularization that is trained without and with L-Reg, respectively. {Both experiments share the same hyper-parameters, except the latter uses the L-Reg.}}
  \label{fig:banner}
% \end{figure}
\vspace{-0.5cm}
\end{wrapfigure}
One critical challenge in visual classification models is their ability to generalize effectively to unseen samples or unknown classes. 
%unseen domains and unknown classes.
For instance, a model trained on real images of various animals should ideally classify animal sketches accurately (referred to as multi-domain generalization classification~\cite{ganin2016domain,li2018deep,li2018domain,hu2020domain,cha2022miro,li2022simple,tan2024rethinking}) or discover novel categories not present in the training set (referred to as generalized category discovery~\cite{vaze2022generalized,chiaroni2023parametric}). These problems are prevalent in real-world scenarios, where training data-target pairs are usually insufficient, and labeling is time-consuming so that not every data is paired with a label. Meanwhile,
test data is likely to contain shifts in both data and targets, making it essential to propose methods that generalize to border scenarios. 

% It is a widely adopted approach to use regularization terms, such as $L_2$ regularization that results in weight decay, during the training to improve the model's generalization. 
Regularization terms, such as $L_2$ regularization leading to weight decay, are commonly employed during training to improve a model's generalization capabilities.
However,  the $L_2$ regularization is \textit{parametric-based} rather than \textit{sample-based}, which may lead to ambiguous interpretability~\cite{wu2018beyond}.
As illustrated in~\cref{fig:banner}, the model trained solely with $L_2$ regularization exhibits low interpretability. Other regularization terms~\cite{wu2017improving,wu2018beyond,wu2020regional} attempt to improve the interpretability of deep learning models for sequential signals rather than vision, 
whereas~\cite{moshe2022improving} proposes a regularization term to enhance interpretability for robustness in visual classification models rather than generalization. 
% (\textbf{Why? or some supporting reference can be added here--by KH}). 
%re g term is standard, why use logic?  parametric-based vs sample-based
%Logic reasoning interpretability
Drawing inspiration from logical reasoning has shown promise for better generalization and interpretability in various tasks. 
Current work unveils the effectiveness of logical reasoning in generalization tasks, such as boosting performance in length generalization~\cite{abbe2023generalization,ahuja2024provable,abbe2024provable,xiao2024theory} and abstract symbol relational reasoning~\cite{boix2023can,li2024neuro} (e.g., mathematical solving and psychological tests).
% Moreover, current studies have explored the connection between deep learning models' logical reasoning and generalization ability, especially on transformers, including length generalization~\cite{abbe2023generalization,ahuja2024provable,abbe2024provable,xiao2024theory},  However, it is difficult for these approaches to be applied directly to image classification.
Several efforts, such as~\cite{barbiero2022entropy}, explore the explicit entropy-based logical explanations of neural networks for image classification, confirming the presence and interpretability of logical reasoning within visual tasks. %the existing logical reasoning and its interpretability under the skin of visual tasks.
Yet, there are limited studies tackling the generalization of visual classification tasks through the lens of logical reasoning.

% or abstract symbol relational reasoning~\cite{boix2023can,li2024neuro}. 
% Those studies are under language-related tasks and may not be able to be applied to image classification directly.
% Several attempts, such as~\cite{barbiero2022entropy}, explore the explicit entropy-based logic explanations of neural networks for image classification, implying the existing logical reasoning under the skin of visual tasks. However, very few studies have tackled the generalization of visual classification tasks through the scope of logic reasoning.

% However, current works seem to lack close connections to visual tasks and may not be able to be applied to image classification directly.

This paper studies two pivotal questions corresponding to the above:
\textit{1) How does logical reasoning relate to visual tasks such as image classification? 2) How can we derive a logical reasoning-based regularization term to benefit generalization?}
To achieve these, we correlate the image classification procedure in computer vision with the framework of logic studies~\cite{andreka2017universal}, positing that training an image classifier involves learning a \textit{good general} logical relationship between images and labels via an encoder. This good general logic is attained when the semantics generated by the encoder and classifier can be combined to form {atomic formulas}. %findings: a), b)
Our exploration leads to the introduction of a sample-based Logical regularization term named L-Reg.
%Derived from forming the atomic formulas, a sample-based logic regularization termed L-Reg is proposed.
We reveal that L-Reg efficiently reduces the \textit{complexity} of the model from two aspects: 
1) L-Reg leads to a balanced feature distribution in the semantic space;
2) L-Reg reduces the number of weights with extreme values in the classifier.
% , L-Reg yields better generalization.  

%sample-based reg, called L-reg.  atomic formulas
%ituatively, banner interpret...
% Derived from forming the atomic formulas,
% we propose that logic regularization be termed L-Reg.
% The derived form is that we provide the logic regularization (L-Reg).
% We reveal that L-Reg efficiently reduces the complexity of the model, yielding better generalization.  
Intuitively, the complexity reduction achieved by L-Reg stems from its ability to filter out redundant features or semantics, focusing instead on the minimal yet sufficient semantics for classification - defined as semantic support in \cref{def:semantic_support}, where the interpretability also emerges.  %aresults from the fact that L-Reg filters the redundant features or semantics and only uses the minimal sufficient semantics (defined as semantic support in \cref{def:semantic_support}) for classification, where the interpretability also emerges.
% Sepcifically, this not only benefits the
This filtering feature benefits the generalization when there is a domain shift in data where the domain-dependent features are ignored for classification. Moreover, it further promotes generalization when unlabeled data from the unknown classes is present. If such data lacks the semantic support associated with known classes, it is then classified as belonging to an unknown class, and its corresponding semantic supports are extracted.
These capabilities equip L-Reg with explicit interpretability. 
% it reduces the number of semantic permutations that support classification, benefiting the generalization.
As \cref{fig:banner} shows, with L-Reg, the model can identify the unknown class `person', and pinpoint faces which are the crucial features for classifying this category.
% the model only pays attention to the essential features for the classification, such as the faces for classifying the unknown class `person'.
In contrast, the model trained solely with $L_2$ (without L-Reg) focuses on the ambiguous features for classification.  
% This explains that L-Reg improves generalization by only extracting the minimal semantics sufficient for the classification.    
% to hairs without using L-Reg. 
% Therefore, for unseen domains, other features would not affect the classification quality as long as the image contains human faces. 

Rigorous theoretical analysis and experimental results validate that L-Reg yields better generalization across diverse scenarios. 
Specifically, 
L-Reg facilitates better performance under the aforementioned multi-domain generalization and generalized category discovery tasks, whose settings are presented in \cref{fig:settings}~(a)(b). 
Furthermore, to evaluate L-Reg's robustness, we introduce a more complex real-world scenario, as shown in \cref{fig:settings}~(c), where unlabeled images may not only belong to unknown classes, but also originate from unseen domains. Even in this challenging context, L-Reg is still able to consistently demonstrate notable improvements in generalization, underscoring its practical utility and effectiveness.
Our code is available at \url{https://github.com/zhaorui-tan/L-Reg_NeurIPS24}.

% Current approaches for image classifications are commonly based on an information perspective.

% Current works such as XXX and XXXX provides a logical perspective for generalization, which is more consistent with human behavior. 

% Several attempts were made based on logic in an explicit way, and there was a lack of theoretical support.

\section{Preliminaries and generalization settings for visual classification}\label{sec-setting}
\begin{wrapfigure}{r}{0.45\linewidth}
\vspace{-0.5cm}
  \centering
    \includegraphics[width=0.9\linewidth]{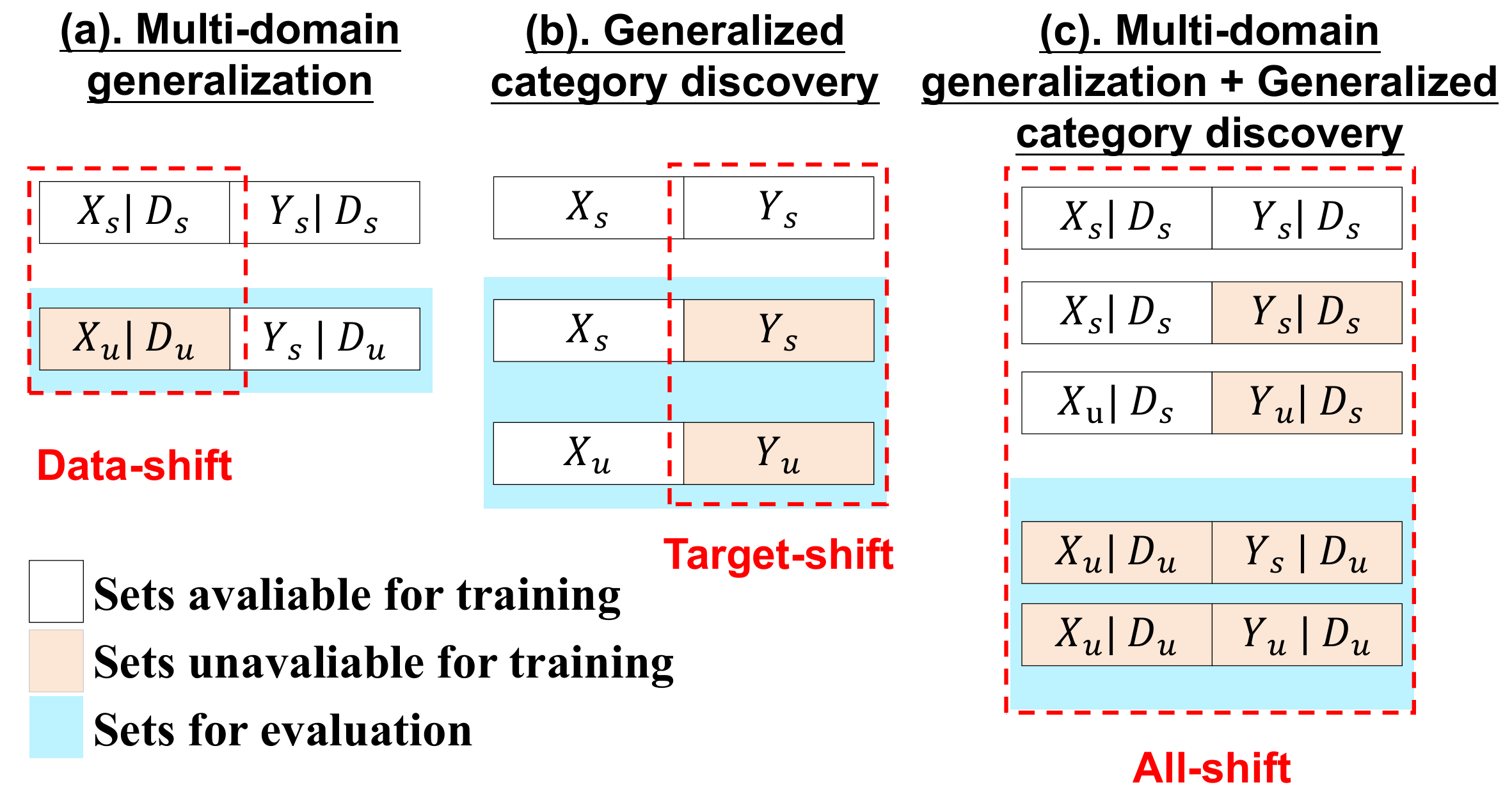}
  \captionof{figure}{Diagrams of different generalization settings in visual classification tasks.}
  \label{fig:settings}
\vspace{-0.2cm}
\end{wrapfigure}
% Basic notations can be found in \cref{tab:notations}.  
Consider paired $(X, Y )\sim (\mathcal{X}, \mathcal{Y})$, $(X_s, Y_s) \sim (\mathcal{X}_s, \mathcal{Y}_s)$, and $(X_u, Y_u) \sim (\mathcal{X}_u, \mathcal{Y}_u)$ denote all sets of inputs and labels, seen paired subsets of $(X, Y)$, and unseen paired subsets of $(X, Y)$, respectively. 
Note that $X_u, Y_u$ may be accessible for the model separately, but their pairing relationships are not accessible. 
Let $D$ denote the possible domains, with $D_{s}, D_{u} \subset D$ representing the seen and unseen domains.
In classification tasks, an encoding function  $g(x) \to Z \in \mathbb{R}^M$ is commonly introduced to map  $X$ into the latent feature set $Z$, where each latent feature has $M$ dimensions.  A predictor $h(Z) \to \hat{Y} \in \mathbb{R}^K $ maps $Z$ to predictions $\hat{Y}$, where $K$ 
% (K???)
denotes the number of classes and the dimensions of predictions.
%a mapping $g(x) \to Z \in \mathbb{R}^M$ is introduced as the encoder for mapping  $X$ into the latent feature set $Z$, and each latent feature has $M$ dimensions. 
%There is also another mapping $h(Z) \to \hat{Y} \in \mathbb{R}^K $, which is termed as the predictor that maps $Z$ to predictions $\hat{Y}$, where 
% $\hat{Y}$ is constrained by $Y$ and 
%$N$ denotes the number of classes or the dimensions of predictions. 
$P(\cdot)$ and $H(\cdot)$  symbolize probability and entropy, respectively.
This paper discusses two typical cases for generalization in image classification tasks: (1) \textit{Data-shift generalization:} $X_s$ and $X_u$ have distribution shifts, such as multi-domain generalization (mDG); and (2) \textit{Target-shift generalization:} $Y_s$ and $Y_u$ have distribution shifts, which stands for tasks like generalized category discovery (GCD). We additionally explore a challenging scenario called \textit{All-shift generalization:}  both $X_s$ and $X_u$, $Y_s$ and $Y_u$ have distribution shifts, which is a combination of mDG and GCD tasks (mDG + GCD).
The following lists the detailed settings for generalization. Please refer to \cref{fig:settings} for brief diagrams.

\textbf{Data-shift generalization: Problem setting for mDG.} 
Illustrated in \cref{fig:settings}~(a),
mDG~\citep{blanchard2011generalizing} intends to generalize well to unseen domains having the objective of $ \min H({X_s,Y_s \mid D_s} ) $ and expecting the model to be generalized to $X_u$ when predicting $Y_u$ from the unseen domain $D_u$.  In such cases, $Y_u$ is fully accessible to the model since $Y_s$ and $Y_u$ share the same domain: $\mathcal{Y}_s = \mathcal{Y}_u$ but there are shifts in $X$ where $\mathcal{X}_s \neq \mathcal{X}_u$.
% We call this kind of setting as \textit{data-shift generalization}.

% $P(\rmX,\rmY|D)$:
% \begin{align}
%     \label{eq:obj1}
%     \max P({\rmX,\rmY \mid \mathcal{D}} ) 
%     = & P({\rmY \mid \mathcal{D}}) P ({\rmX \mid \rmY,\mathcal{D}})\nonumber \\
%     =& P({\rmX \mid \mathcal{D}}) P ({\rmY \mid \rmX,\mathcal{D}}).
% \end{align}
% \begin{equation}
%     % \begin{split}
% \label{eq:obj1}
%     \min H({X_s,Y_s \mid \mathcal{D}} ) 
%     % = & F({\rmY \mid \mathcal{D}}) G ({\rmX \mid \rmY,\mathcal{D}}) \\
%     % =& P({\rmX \mid \mathcal{D}}) P ({\rmY \mid \rmX,\mathcal{D}}),
%     % \end{split}
% \end{equation}
% and expecting the model to be generalized to the $X_u$. 

\textbf{Target-shift generalization: Problem setting for GCD.}
GCD~\cite{vaze2022generalized} (\cref{fig:settings}~(b)) aims to discover possible unseen labels among unlabeled datasets $X_u$. The challenge is that 
the samples in $X_u$ may belong to known classes or unknown classes: $\mathcal{Y}_s \neq \mathcal{Y}_u$ and probably $\mathcal{Y}_s \cap \mathcal{Y}_u \neq \emptyset$. The model should be able to distinguish the samples from the known classes and cluster the samples for unknown classes simultaneously. Note that $X_u$ is used for model training, but the relationship between $X_u$ and $Y_u$ is unseen for the model. In summary, shifts exist between $Y_s$ and $Y_u$ but not between $X_s$ and $X_u$.
% This type of setting is termed as  \textit{target-shift generalization}.

\textbf{All-shift generalization: Problem setting for mDG + GCD.} 
To explore the generalization problem further, we introduce a setting that is the combination of mDG and GCD as shown in \cref{fig:settings}~(c). Specifically, the model is trained on the labeled pairs $(X_s, Y_s)$ and unlabeled set $X_u$ from the seen domains $D_s$;  $X_u$ may belong to known and unknown classes. Furthermore,
the model is tested on $X_u$ from the unseen domain $D_u$, where $X_u$ may also come from the known and unknown classes.
% referring to the model being generalized across domains. 
In this setting, the model is expected to 1) classify samples to the seen classes and discover the unseen classes among unlabeled samples from seen domains and 2) generalize this ability to the samples from the unseen domain. In this scenario, $X_s$ and $X_u$ have shifts, and so do $Y_s$ and $Y_u$.

For all aforementioned generalization settings, the objective can be summarized as minimizing the \textit{generalization loss}:
\begin{definition}[Generalization loss] 
    Let the target model $f^*: f^*(X, Y): X \to Y$, can generalize across both seen and unseen sets $X, Y$. Denote its trainable $\red{f}$, which is only trained on the seen sets. The generalization loss for the unseen sets is defined as:
    \begin{equation}
        GL(\red{f}, f^*, (X_u, Y_u)) = \mathbb{E}_{(x, y)\in ({X}_u, {Y}_u)} || 
        f(x, y) - f^*(x, y)||_2.
    \end{equation}
\end{definition}

% We mainly discuss two cases for image classification, one

% In this paper, we mainly discuss the cases where $Y$ is an infinite set, and the domain of $Y$ is known even though not all elements in $Y$ are assigned to certain samples in $X$. 

% \begin{minipage}[c]{0.58\linewidth}

% \subsection{Toy example}

\begin{figure}[t]
  \centering
  % \includegraphics[width=0.99\linewidth]{}
  % \caption{Visualizations of classifiers' weights of the GMDG baseline that are trained without and with L-Reg under mDD+GCD setting, respectively. 
  % \red{abscissa} 
  % }
  \includegraphics[width=\linewidth]{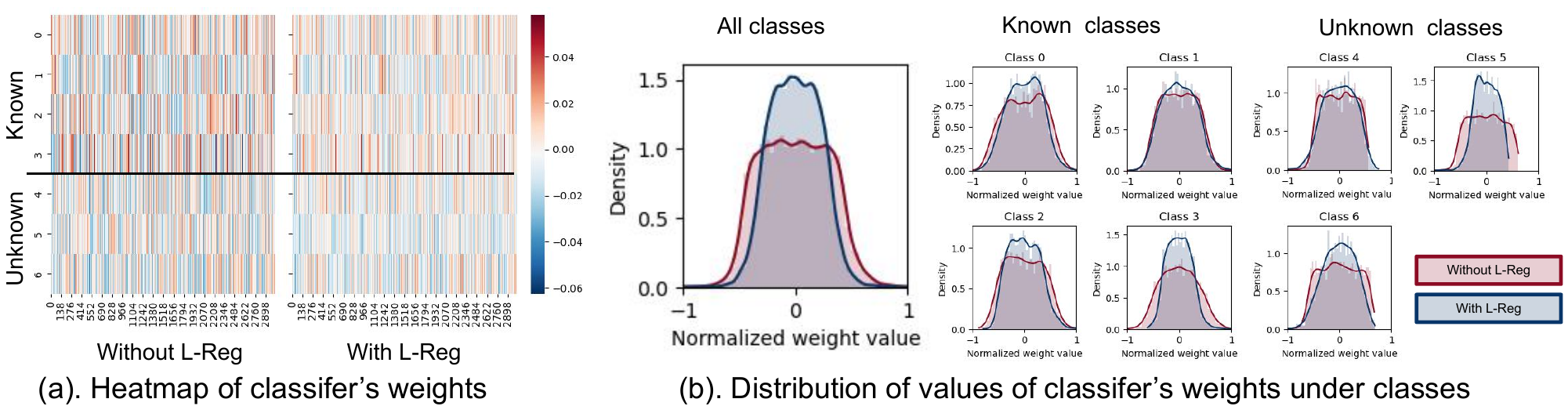}
    \captionof{figure}{Visualizations of classifiers' weights form models trained using GMDG on PACS dataset without and with L-Reg under mDG+GCD setting, respectively. Both experiments share the same hyper-parameters using Regnety-16g backbone, except the latter uses additional L-Reg. }
  \label{fig:classifier}
\end{figure}

\section{Logical regularization for generalization in image classification}
% \subsection{Logic regularization}
% Finally, 
Under the problem settings defined in Section~\ref{sec-setting},
we introduce Logic regularization (L-Reg) targeting the objective:  
% using \cref{eq:general_obj} for the final objective:
% \begin{equation}
%      % \min \mathbb{E}_{i=1}^M H(Y|g(z_i), D), \max H(Y|g(z)), D) \Longleftrightarrow 
%       % \min 
%      % \cref{eq:general_obj}+ 
%      \red{\min_{h,g} \sum_{i=1}^{M}[H(Y|h(z_i), D)] - H(Y|h(z), D), }
% \end{equation}
\begin{equation}
\label{eq:final_reg_obj}
         % \red{\min_{h,g} \mathbb{E}_{z_i \sim z, z\sim Z}
     % [ \mathbb{E}_{y_j \sim Y, Y \sim \mathcal{Y}} [H(y|h(z_i), D)]] - H(Y|h(z), D), }
    \min_{h,g} \mathbb{E}_{z_i \in z, z \in  Z} [H(\hat{Y}|z_i, D)] \! -\!
    \mathbb{E}_{ z \in  Z} [H(\hat{Y}|Z, D)]
    % \mathbb{E}_{\hat{Y} \in \mathcal{Y}} [H(\hat{Y}|Z, D)], 
    ,
\end{equation}
where $\hat{Y} \in \mathbb{R}^{K}= h\circ g (X)$ is the prediction set. 
The corresponding Logic regularization loss (L-Reg)  is defined as:
% \red{loss form}
\begin{equation}
\label{eq:loss}
\begin{split}
        L_{L-Reg} = 
        & - \frac{1}{M}\sum_{i=1}^{M}\left[\sum_{j=1}^{K}\sigma_{j,i}(\hat{Y}^{T} Z)\log \sigma_{j,i}(\hat{Y}^{T} Z)\right] \\
        & + \sum_{j=1}^{K}\left[\frac{1}{M}\sum_{i=1}^{M}\sigma_{j,i}(\hat{Y}^{T} Z)\log(\frac{1}{M}\sum_{i=1}^{M}\sigma_{j,i}(\hat{Y}^{T} Z))\right]
        ,
\end{split}
\end{equation}
% where $\sigma(\hat{Y}_j^{T}Z_i)$ denotes the value at the $ i,j$ position of $softmax(\hat{Y}^{T}Z)$ and the soft-max function is applied at the last dimension. 
{where $\sigma_{j,i}(\hat{Y}^{T}Z)$ denotes the value at the $ i,j$ position of $softmax(\hat{Y}^{T}Z)$ and the soft-max function is applied at the last dimension. }
By incorporating other existing methods' losses denoted by $L_{main}$, the overall loss is formulated as: %combine  other losses $L_{main}$ that are already used in existing methods, the overall loss can be obtained:
\begin{equation}
    L_{all} = L_{main} + \alpha L_{L-Reg},
\end{equation}
with a weight $\alpha$ applied to balance two losses.
% \red{\textbf{Insights.}}
% As shown in \cref{fig:banner}, L-Reg helps models extract the most important features for image classification, which is beneficial for generalization. 
% This is resulted from two key features: 
As depicted in \cref{fig:banner}, L-Reg plays a pivotal role in extracting crucial features for image classification, thus enhancing generalization capabilities. This beneficial outcome can be attributed to two primary factors: 
%This enhancement can be attributed to two key features:

\textbf{Reducing classifier complexity:} 
% As \cref{fig:classifier}~(a) shows, the complexity of the classifier is reduced with L-Reg. 
% It can be seen that the number of extremely valued weights of the classifier is reduced, as the color in the heat map of the model with L-Reg appears less blue and red. 
% This implies that the model would focus on the semantics that are sufficient for the classification and ignore irrelevant semantics. \cref{fig:classifier}~(b) suggests that the semantics used to classify each class are reduced. Combined with the balanced feature distribution, the classifier with less complexity results in better generalization across various settings. 
%Regarding the classifier's complexity, as depicted in \cref{fig:classifier}~(a), L-Reg leads to a reduction in complexity. 
L-Reg streamlines the complexity of the classifier itself, as depicted in \cref{fig:classifier}~(a).
Notably, the heat map of the model with L-Reg displays fewer extremely valued weights, evidenced by the diminished presence of intense blue and red colors.
This reduction implies that the classifier focuses on leveraging semantically rich and relevant features for decision-making (classification), sidelining the less relevant ones. 
%This reduction suggests that the model prioritizes semantics sufficient for classification while disregarding less relevant ones. Furthermore,
Additionally, \cref{fig:classifier}~(b) reveals a reduction in the number of semantic features used to classify each class. 

\begin{wrapfigure}{r}{0.4\linewidth}
\vspace{-0.5cm}
  \centering
  % \includegraphics[width=\linewidth]{}
  % \caption{Distribution histogram of features from the GMDG baseline trained without and with L-Reg under mDG + GCD settings. PCA is conducted for visualization. Different colors represent the classes. }
  \includegraphics[width=\linewidth]{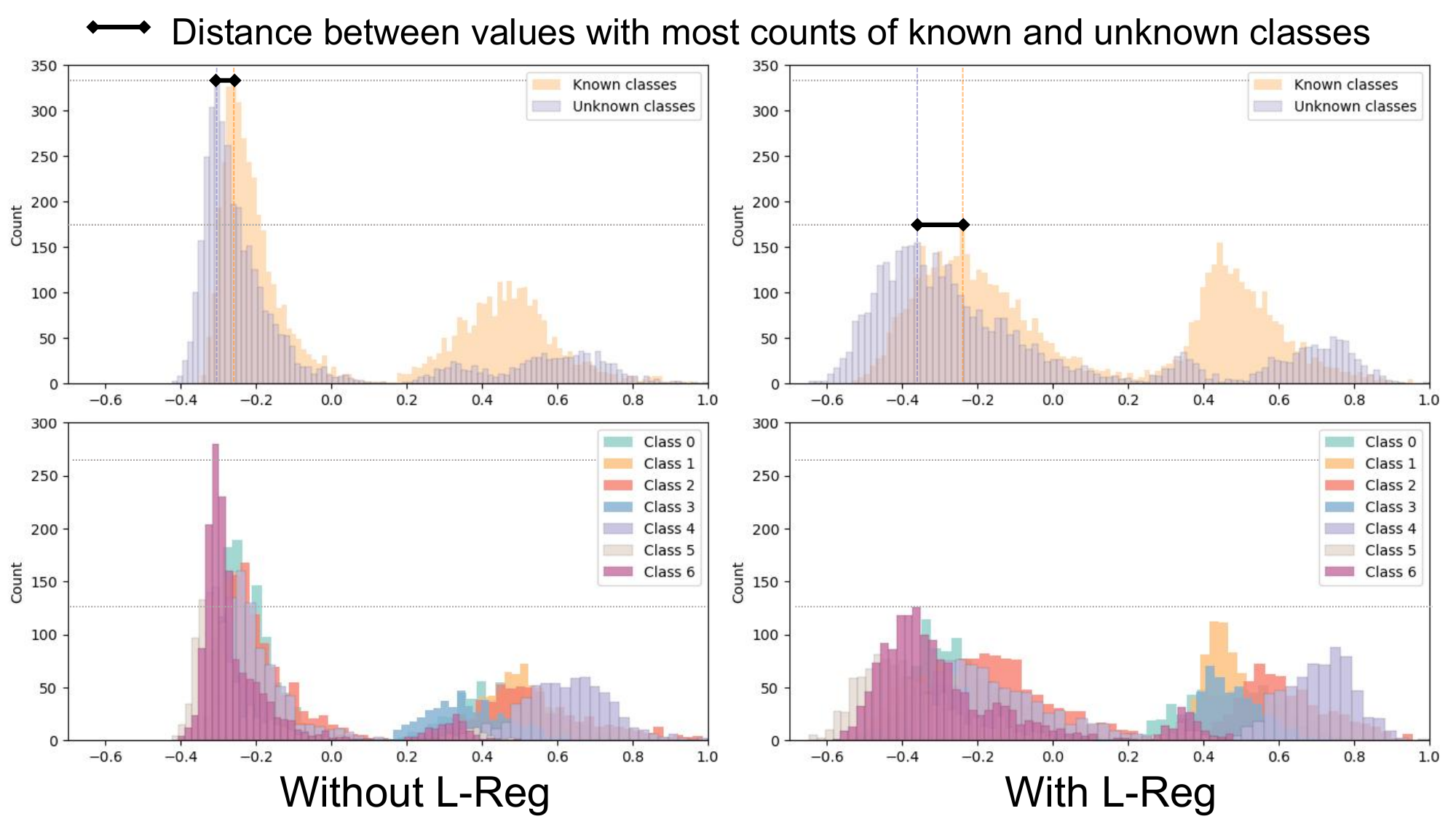}
    \caption{Visualizations of latent features form models trained using GMDG on PACS dataset without and with L-Reg under mDD+GCD setting using RegNetY-16G backbone, respectively.}
  \label{fig:feature}
\vspace{-0.5cm}
\end{wrapfigure}
\textbf{Balancing feature complexity:} 
% \textbf{L-Reg balances the feature distributions:} 
% As shown in \cref{fig:feature}, L-Reg leads to a more balanced distribution of the features in comparison to the baseline. The balanced feature distribution infers the elimination of some extracted semantics with dominant frequencies across all samples. 
% The high-frequent semantics across samples are usually not decidable for classification; thus, reducing them leads to a more expressive and less complex feature space.
L-Reg results in a more balanced distribution of features compared to the baseline, as illustrated in \cref{fig:feature}. This balanced distribution suggests the elimination of certain extracted semantics characterized by dominant frequencies across all samples.
Semantics that occur frequently across samples often lack decisiveness for classification. Hence, reducing their prominence contributes to more expressive feature space and less complex feature distributions. Coupled with the reduced classifier complexity, a simplified classifier achieved through L-Reg facilitates improved generalization across various settings.
{Specifically, the top row also indicates the distance between the feature distributions of the known and unknown classes, which is enlarged; thus, they are more dividable, leading to classification improvements.}
% dominance of 
% minor extracted semantics, 
% in the distribution 
% for classification, leading to a more expressive and less complex feature space. 
% As the principle of InfoMax~\cite{}
% With L-Reg, more expressive feature space, 
% Therefore, 
% L-Reg alleviates the possible bias that may be introduced by the imbalanced feature distribution. 

% alleviates the bias where
% may caused by the some features
% leads to more expressive feature space, 

% % Following parts of the pare
% We provide the logical-based theoretical analysis in \cref{sec:logic_framework} and derivation details of L-Reg in \cref{sec:L-Reg_details}. We also discuss the efficacy of L-Reg under different generalization settings in \cref{sec:L-Reg_for_settings}. Moreover,
% L-Reg is a plug-and-play loss that is feasible for most existing frameworks. Experiments of applying L-Reg to various existing approaches under different generalization settings are presented in \cref{sec:exp}. 

We present a logical-based theoretical analysis in \cref{sec:logic_framework} and provide the derivation details of L-Reg in \cref{sec:L-Reg_details}. In addition, we discuss the efficacy of L-Reg under various generalization settings in \cref{sec:L-Reg_for_settings}.
Furthermore, L-Reg serves as a plug-and-play loss function that is compatible with most existing frameworks. We conduct experiments applying L-Reg to various established approaches across different generalization settings, as outlined in \cref{sec:exp}.

% {\textbf{Visualizations}}

% A = F.softmax(A, dim=1) # M, N
% A_entropy = (
%         A.mean(0)
%         * torch.log(A.mean(0) + 1e-12)
%     ).sum()

% A_dim_entropy = (
%             -((A + 1e-12)
%                     * torch.log(
%                 A + 1e-12
%             )
%             )
%             .sum(1)
%             .mean()
%     )

% whose loss form is defined as:
% % \red{loss form}
% \begin{equation}
%     L_{L-Reg} =  \sum_{i=1}^{M} [\sum_{j=1}^{N}[\sigma(Z_{i}^{T}  \hat{Y}_j)\log \sigma(Z_{{i}}^{T} \hat{Y}_j)] - [1/N \sum_{j=1}^{N}\sigma(Z^{T}_i \hat{Y}_i))\log (1/N \sum_{j=1}^{N}\sigma(Z^{T}_i\hat{Y}_i)]],
% \end{equation}
% and it can be plug-and-play to other losses $L_{main}$ for achieving $\cref{eq:general_obj}$ for the overall loss:
% \begin{equation}
%     L_{all} = L_{main} + \omega L_{L-Reg},
% \end{equation}
% where $\omega$ is the hyper-parameter weight. 
% \red{N to K}

% \subsection{Theoretical analysis}
\subsection{Logical 
framework for visual classification}
\label{sec:logic_framework}
This part provides the connections between logical reasoning and visual classification tasks.
We would like to remind readers of the framework for studying logics and link it with our practical scenarios.
\begin{definition} 
Following~\cite{andreka2017universal}, a logic $\mathcal{L}$ is defined as a five-tuple in the form: %is a five-tuple defined in the form:
\begin{equation}
\label{eq:logic}
    \mathcal{L}=\left\langle 
    F_{\mathcal{L}},  
    M_{\mathcal{L}}, 
    \models_{\mathcal{L}},
    mng_{\mathcal{L}}, 
    \vdash_{\mathcal{L}}
    \right\rangle,
\end{equation}
% In \cref{eq:logic},
% \begin{itemize}
where
     % $F_{\mathcal{L}}$ is a set of all formulas of $\mathcal{L}$.  $F_{\mathcal{L}}$ arbitrarily refers to any collections that can be `expressed' by language $\mathcal{L}$. 
     % Therefore,  
     1) $F_{\mathcal{L}}$ denotes the set of formulas formed by images and labels ($X,Y$); 
     2) $M_{\mathcal{L}}$ represents different domains $D$ of $X$;
    % such as natural languages
    % and intuitively could be a collection of texts such as natural languages or  ;
     3) $\models_{\mathcal{L}}$ is a binary relation relating the truth of whether the formulas are true or false, which has 
     $\models_{\mathcal{L}} \subseteq M_{\mathcal{L}} \times F_{\mathcal{L}}$;
     % called the validity relation of $\mathcal{L}$, treated as the ground truth. 
     % For example, in the known set, the ground truth label of the image is given as truth, which is the validity relation.  
     4)  $mng_{\mathcal{L}}: F_{\mathcal{L}} \times M_{\mathcal{L}} \longrightarrow \text { Sets }$  defines the meaning of  $X$ as determined by classifiers, where
     Sets indicate the class of all sets.   
    % $mng_{\mathcal{L}}$  .
    % is a function with domain $F_{\mathcal{L}} \times M_{\mathcal{L}}$, called the meaning function of $\mathcal{L}$: Intuitively, $mng_{\mathcal{L}}$
    % the encoder in our case.
    % (4) $\vdash_{\mathcal{L}}$ represents the provability relation of $\mathcal{L}$, telling us which formulas are `true' in which possible world and usually is definable from  $mng_{\mathcal{L}}$, such as the estimation criteria.  
    (5) $\vdash_{\mathcal{L}}$ symbolizes the provability relation of $\mathcal{L}$, evaluating formulas formed by $mng_{\mathcal{L}}$ is true or false in one possible world, such as the estimation criteria.  
    % $\vdash_{\mathcal{L}}$  is a binary relation between sets of formulas and individual formulas;
    More details of $\mathcal{L}$ can be seen in
    \cref{app:logical_framework}.
\end{definition}

For clarity, we specify $ \mathcal{L}_{(X_s,Y_s)}=\left\langle 
F_{(X_s,Y_s)},  
D, 
\models_{(X_s,Y_s)},
h, 
\vdash_{(h(X),Y)}
\right\rangle$ 
as the logic formed on the given $X, Y$ sets.  With the goal for logic to generalize across a broader scenario and  provide extrapolation across all possible formulas in $\mathcal{L}$,
%If the expectation of logic is that it can be generalized to broader scenarios and provide extrapolation for all possible formulas in $\mathcal{L}$, 
a good general logic $\mathcal{L}^*$ 
should be derived from $\mathcal{L}$ through the feature extractor $g$: 
\begin{equation}
    \label{eq:logic_cv}
    \mathcal{L}^*=\left\langle F_{(g(X_s),Y_s)}, D, \models_{(g(X_s), Y_s)},
    h, 
    \vdash_{(h \circ g(X), Y)}
    \right\rangle, s.t., \vdash_{(h \circ g(X), Y)} = \models_{(g(X_s), Y_s)}.
\end{equation}
Importantly, as a good general logic,  $F_{(g(X_s),Y_s)}$ and $h$ in $\mathcal{L}^*$ should form the \textit{atomic formulas}, i.e., the tuple of terms with a predicate: $ h\circ g(x) \text{ belongs/not belongs to class } y \text{ in domain } d \to Ture/False,\; \text{ where } x, y, d \in X,Y,D $, which makes that  $ \vdash_{(h \circ g(X_u), Y_u)} = \models_{(g(X_s), Y_s)}$ still holds. We simply denote one atomic formula in the form of $h(g(x), y, d)$ mapping to binary values.
{Additionally, 
% we provide more details about why 
$\vdash_{(h \circ g(X), Y)} = \models_{(g(X_s), Y_s)}$ in \cref{eq:logic_cv} can be safely omitted in the rest of the paper. 
Please see more details about the conditions of the good general logic in \cref{app:logical_framework}.
}

An additional tool is necessary to convert the logic problem into a continuous form, enabling the application of machine learning algorithms. The conditional entropy-based method enables a logically sound derivation of knowledge from the provided dataset with constraints \cite{rodder2000conditional}.
Specifically, the probabilistic inference process adheres to a probabilistic version of Modus Ponens: $A \to B, A \vdash B$ (if $A$ then $B$; not $A$ therefore not $B$). It is important to note that the logical propositions in probabilistic Modus Ponens are uncertain, with the conditional probability replacing the material implication $A\to B$.
This framework allows us to interpret logical deduction through the lens of entropy.
% In other words, 
Therefore, for \cref{eq:logic_cv} which implies 
\begin{equation}
    \exists h\circ g, \; \forall (x,y) \in (X,Y) , \; \forall d \in D, \; h\circ g(x) \to y,
\end{equation}
finding $h\circ g$ through optimization  
is equivalent to 
\begin{equation}
\label{eq:general_obj}
% \begin{split}
        \max_{h, g} \mathbb{E}_{(x,y) \in (X,Y) ,  d\in D} P (y|g(x),d) - \mathcal{R} \Longleftrightarrow \min_{h, g} \mathbb{E}_{(x,y) \in (X,Y) ,  d\in D} H(y|g(x),d) + \mathcal{R},
% \end{split}
\end{equation}
where $\mathcal{R}$ denotes any other possible regularization. 
% where finding $h\circ g$ is our main goal. 
% A similar form is also stated in GMDG~\cite{tan2024rethinking} where $|D| \ge 2$ is typically considered. 
% As the logical framework for image classification is established, one may notice that the unexplored question of how to find proper $g$ providing suitable atomic formulas stands out, and it is essential for making the good general $\mathcal{L}^*$ holds. 

As the logical framework for image classification takes shape, it becomes evident that the unresolved question of identifying an appropriate function $g$ to generate suitable atomic formulas emerges as a critical factor in ensuring the effectiveness of the overarching logic $\mathcal{L}^*$. 
This paper proposes L-Reg as the regularization to ensure 
$F_{(g(X_s),Y_s)}$ are formed by atomic formulas in \cref{sec:L-Reg_details}.
% The next part discusses the constraints on $g$.

\subsection{Constructing atomic formulas using L-Reg}
\label{sec:L-Reg_details}
% As stated in \cite{abbe2023generalization}, 
% even though current algorithms will have various implicit biases on the unseen and thus produce various solutions on the unseen,
% an algorithm cannot be expected to generalize to the unseen data domain by itself without any incentive. 
In this part, we show the derivation details of L-Reg 
the aims to ensure the formation of suitable atomic formulas, as depicted in  \cref{eq:logic_cv}.
% which embodies a good general logic for effective generalization. 
% As highlighted in~\cite{abbe2023generalization}, current algorithms may induce implicit biases towards unseen data, resulting in varied solutions for unseen data. However, an algorithm cannot be expected to generalize to unseen data domains without incentivization, such as specifically designed regularization.
% Therefore, we explore improving the generalization ability of models through a logic-based regularization approach. 
% To achieve that, we introduce the definition of \textit{semantic support} for image classification.
As highlighted in~\cite{abbe2023generalization}, current algorithms may induce implicit biases towards unseen data, resulting in varied solutions for such data.
However, expecting an algorithm to generalize effectively to unseen data domains without appropriate incentivization, such as specifically designed regularization, is unreasonable.
Therefore, we aim to enhance the generalization capability of models by employing a logic-based regularization approach. To this end, we introduce the concept of \textit{semantic support} for image classification.
% A specific design, such as regularization, is required for better generalization.
% Therefore, we discuss how to improve the generalization ability of the algorithms (models) by a logic-based regularization that ensures suitable atomic formulas for that \cref{eq:logic_cv} holds as a good general logic.
% Of course, on the unseen domain, all bets are off for generalization: one cannot hope for an algorithm trained on a given data domain to perform well on a larger data domain without any incentive to do so. Yet various algorithms will have various implicit biases on the unseen and thus produce various solutions on the unseen. Understanding this ‘bias on the unseen’ for different network architectures and Boolean target functions is the objective of this paper.
% By looking at $ \mathcal{L}^*$, readers may have already noticed that more constraints need to be put on $g$ and $h$ so that $\mathcal{L}^*$ becomes a `nice' logic on the given dataset; otherwise, the deduction from $\mathcal{L}^*$ may conflict the real world truth. We discuss the details of these constraints in this part. 

% As one hypothesis, $g(x)$ encodes $x\in X$ into the semantic space where each dimension represents a specific semantics and is independent of each other. 
\begin{definition}[Semantic support]
\label{def:semantic_support}
    % We denote $z = g(x), z\in Z$ as a set of compositions of these semantics: $z:= \{z^i\}_{i = 1}^M$ where $M$ is the number of dimensions or semantics. Particularly, not all semantics in $z$ may be useful for the deduction or the inference. We denote the subset $\gamma$ of $z$ that is extracted from the sample $x$ as semantic support of $x$ if $\gamma$ is sufficient for the deduction of the relationship better $x$ and a $y\sim \mathcal{Y}$.
    We denote $z = g(x)$, where $z\in Z$, as a set of compositions of these semantics: $z:= \{z^i\}_{i = 1}^M$, where $M$ is the number of dimensions or semantics. Notably, not all semantics in $z$ may be useful for deduction or inference. We define the subset $\gamma$ of $z$, extracted from the sample $x \sim \mathcal{X}$, as the semantic support of $x$ if $\gamma$ is sufficient for deducing the relationship between $x$ and a $y\sim \mathcal{Y}$.
\end{definition}

%  For example, if $\{z^1, z^2\} \in z$ is sufficient for the accurate inference, the value of other semantics $\{z^i\}_{i=3}^M$ will not affect the inference. If $\{z^1, z^2\} $ is the combination of semantics with the minimum length for the inference process, $\{z^1, z^2\}$ is termed as the semantic support.
% We denote  $\Gamma$ as the set of semantic supports of $X$ for deducting each individual class.

For instance, if the subset $\{z^1, z^2\} \subseteq z$ is sufficient for accurate inference, the values of other semantics $\{z^i\}_{i=3}^M$ will not impact the inference process. When $\{z^1, z^2\}$ constitutes the minimal combination of semantics required for inference, it is termed the semantic support.
We denote $\Gamma$ as the set of semantic supports of $X$ for deducing each individual class. 
% \red{independence need more explanation}

% \textbf{Regularization goal.}
\textbf{Derivation of L-Reg.}
Regarding \cref{eq:logic_cv}, if the semantic supports and their relationship with $Y$ form atomic formulas, \cref{eq:logic_cv} holds as a good general logic, and the generalization would be improved. 
% thus, for simplification, the redundant semantics in $z$ are omitted, and we denote $z := \{z^1, z^2\}$ if only using the $z^1, z^2$ is able to conduct the accurate inference.
% (e.g., from all the semantics [`young', `girl', ''], the semantic `girl' is sufficient for kowning the ) 
% Now, let us consider a simple example of a decision process. 
% Simultaneously,  
% referring that $\mathcal{L}^*$ is formed by atomic formulas, i.e., the tuple of terms with a predicate, where all $F_{(X,Y)}$ can be formed form and all the $\models_{(X, Y)}, \vdash_{(h(X),Y)}$ still hold. 
Thus,
we aim to learn the latent features $Z$, which contain sufficient semantic supports for the deduction of $Y$: 
% If $z$ is feasible for \cref{eq:logic_cv},
% \red{need refine}
% it meets
\begin{equation}
\label{eq:logic_c1}
    \exists \gamma\in \Gamma, \gamma \subseteq  z , \; \forall (z,y) \in (Z,Y),  \forall d \in D, \;  h(\gamma|d) \to y.
\end{equation}
% which 
% where $ \Gamma \in \{z^1, ..., z^M, \{z^1, z^2\}, ...,\{z^1, z^2, z^3\}, ..., \{z^i, ..., z^M\} \}$ 
% where $\Gamma$ is a set of all semantic support. 
Specifically,   
$g(\cdot)$ should meet the following:
\begin{equation}
\label{eq:logic_c2}
    \forall  (\Gamma_i,y_i), (\Gamma_j,y_j)  \in (Z,Y), 
    \forall d \in D,
    \; y_i \neq y_j \Longleftrightarrow \Gamma_i \neq \Gamma_j,
\end{equation}
i.e., the semantic support set for each class should be distinct. 
% for maintaining rich and useful semantics, 
% referring to the compositions of the semantics that are different for $z_i, z_j$.
% \red{For the case $|Y| \ge 2$,} i.e., it is a 
The multiple-class classification task has that $\forall \Gamma, |\Gamma| \leq M$. 
Under the constraints demonstrated in Eq.~(\ref{eq:logic_c1}) and Eq.~(\ref{eq:logic_c2}), we need to achieve the following through optimization:
\begin{equation}
\label{eq:reg_goal}
    \min_{h,g} H(Y|g(\Gamma), D), 
    \max_{h,g} H(Y|g(\Bar\Gamma), D) 
    \Longleftrightarrow  \min_{h,g} H(Y|g(\Gamma), D)- H(Y|g(\Bar\Gamma), D)
    ,
\end{equation}
% \red{More explanation}
where $\Bar\Gamma$ denotes the negation of $\Gamma$, i.e., the set of semantics which does not include semantic support. 

% Intuitively, 
% \cref{eq:reg_goal} regularizes that the model should be able to judge whether a sample belongs to a class by using a minimal set of semantic supports; simultaneously, the semantic support sets are also implicitly disentangled for each class, not only maintaining rich and useful semantics, but also enhancing the independence of deduction of each class. 

Intuitively, \cref{eq:reg_goal} regularizes that the model should be able to judge whether a sample belongs to a class by using a minimal set of semantic supports; simultaneously, the semantic support sets are also implicitly disentangled for each class, not only for maintaining rich and useful semantics but also for enhancing the independence of deduction of each class.
% \red{intutative explaination}
% \textbf{Empirical form.}
% The actual $\Gamma$ seems intractable during the optimization. Therefore, we derive the bounds.
% For the former in \cref{eq:reg_goal}, it further has:
The actual collection of $\Gamma$ appears to be intractable during optimization. Hence, we resort to deriving its bounds. Regarding \cref{eq:reg_goal}, its former term can be elaborated as follows:
\begin{equation}
    H(Y|g(\Gamma), D) \le H(Y|h(z_i), D) \le   \mathbb{E}_{z_i \sim z} [H(Y|g(z_i), D)] ,
    % \ge H(Y|g(\Gamma), D),
\end{equation}
where $z_i$ is minimal semantics form $z$,
% Thus, it turns out that:
% \begin{equation}
%     \mathbb{E}_{i=1}^M H(Y|g(z_i), D) \ge H(Y|g(\Gamma), D).
% \end{equation}
and $\mathbb{E}_{i=1}^M H(Y|g(z_i), D) $ is the upper-bound for $\min_{h,g} H(Y|g(\Gamma), D)$. Therefore, minimizing $\mathbb{E}_{i=1}^M H(Y|g(z_i), D)$ is equivalent to minimizing $H(Y|g(\Gamma), D)$.
Meanwhile, for the latter in \cref{eq:reg_goal}, we have:
\begin{equation}
    H(Y|g(\Bar\Gamma), D ) \ge H(Y|g(z), D),
\end{equation}
where $H(Y|h(z)), D)$ is the lower-bound for $\max_{h,g} H(Y|g(\Bar\Gamma), D)$.
Combining the aforementioned bounds, we have the L-Reg objective as \cref{eq:final_reg_obj}.

\textbf{Interpretability of semantic supports roots in forming atomic formulas.}
The atomic formula $\mathcal{A}^y$ is of the form $h(g(x), y, d)$.
Our aim is to find the good (most) general $\mathcal{A}^{y*} \in \mathcal{A}^y$ for $y$ class from which the interpretability of L-Reg is derived.
Consider $\mathcal{A}^y_1, \mathcal{A}^y_2 \in \mathcal{A}^y$, if $\mathcal{A}^y_1$ is more general than $\mathcal{A}^y_2$, there will be a substitution $\psi$ such that $\mathcal{A}^y_1\psi = \mathcal{A}^y_2$ \cite{tsapara1998learning}. 
$\mathcal{A}^{y*}$  should meet $\mathcal{A}^{y*}\psi = \mathcal{A}^y_i\in \mathcal{A}^y$, which infers that $\gamma^y\psi = z^y$ (cf. \cref{eq:logic_c1}) for predication of $y$ where $\gamma^y$ is the {semantic support}. Note here that the form of $\mathcal{A}^y$ is constructed for $y \in Y$, i.e., predicate whether the sample belongs to the $y$ class. 
Considering multiple classes $y_i, y_j \in Y, i \neq j$, it has $\mathcal{A}^{y_i*} \neq \mathcal{A}^{y_j*}$ thus $\gamma^{y_i} \neq \gamma^{y_j}$ (cf. \cref{eq:logic_c2}), which constrains that different minimal semantic supports should be used for predicting different classes. 
The interpretability of L-Reg is based on $\mathcal{A}^{y*}$, compelling the model to use distinct minimal semantic supports for each class.  These minimal semantic supports can be interpreted as the most critical features for efficient prediction.
For example, as shown in \cref{fig:banner}, the model with L-Reg has learned the facial features of the person class (see more examples in Appendix~\cref{fig:dog,fig:elephant,fig:giraffe,fig:guitar,fig:horse,fig:person,fig:elephant,fig:person}), forming the (informal) atomic formula $h(\text{has a human face}, \text{is person}, d\in D) \rightarrow \text{True}$. 
Similarly, it also leads to $h(\text{not has a human face}, \text{is person}, d\in D) \rightarrow \text{False}$.

%\cref{fig:dog,fig:elephant,fig:giraffe,fig:guitar} and unknown classes in \cref{fig:horse,fig:person}. Compromises in known sets, as discussed in the limitations, can be seen in \cref{fig: elephant,fig:person}.

% Finally, 
% we have the logic regularization (L-Reg) besides using \cref{eq:general_obj} for the final objective:
% \begin{equation}
%      % \min \mathbb{E}_{i=1}^M H(Y|g(z_i), D), \max H(Y|g(z)), D) \Longleftrightarrow 
%       % \min 
%      % \cref{eq:general_obj}+ 
%      \red{\min_{h,g} \mathbb{E}_{z\in \mathcal{Z}} [H(Y|h(z_i), D)] - H(Y|h(z), D), }
% \end{equation}
% whose loss form is defined as:
% % \red{loss form}
% \begin{equation}
%     L_{L-Reg} =  \sum_{i=1}^{M} [\sum_{j=1}^{N}[\sigma(Z_{i}^{T}  \hat{Y}_j)\log \sigma(Z_{{i}}^{T} \hat{Y}_j)] - [1/N \sum_{j=1}^{N}\sigma(Z^{T}_i \hat{Y}_i))\log (1/N \sum_{j=1}^{N}\sigma(Z^{T}_i\hat{Y}_i)]],
% \end{equation}
% and it can be plug-and-play to other losses $L_{main}$ for achieving $\cref{eq:general_obj}$ for the overall loss:
% \begin{equation}
%     L_{all} = L_{main} + \omega L_{L-Reg},
% \end{equation}
% where $\omega$ is the hyper-parameter weight. 
% \red{N to K}
% A = torch.matmul(logit.permute(1, 0).detach(), feat)
% # M, b  @  b, N -> M * N

\section{L-Reg under different generalization settings}
\label{sec:L-Reg_for_settings}
% We discuss how L-Reg improves the model's generalization under different settings:
% 1) Domain shift exists between $X_s$ and $X$, such as multi-domain generalization task.  
% 2) Domain shift exists between $Y_s$ and $Y$, such as generalized category discovery task.
% We also take one step further into a more challenging scenario, where 3) the combination of both aforementioned shits exits.

% We explore how L-Reg enhances the model's generalization across various scenarios: Data-, Target-, and All-shift generalization.

% Domain shift between $X_s$ and $X_u$, such as in multi-domain generalization tasks.
% Domain shift between $Y_s$ and $Y_U$, as encountered in generalized category discovery tasks.
% Furthermore, we extend our investigation to a more challenging scenario where:

% Both the aforementioned domain shifts are present simultaneously.

% Under a plain situation where training $X_s,Y_s$  and testing $X,Y$  have no domain shift, using L-Reg has a trivial influence since generalization is unnecessary. 
% Thus, we mainly focus on practical scenarios that require generalization and have caught the attention of researchers: 
% \textbf{L-Reg under the plain situation.}
% \red{need }

% We now discuss two practical scenarios that have caught the attention of researchers: 1) the multi-domain generalization when the $X$ has domain shift. 
% $D > 1$ where testing is conducted on the unseen domain,  and 2) the generalized category discovery where $Y$ is split into seen $Y_s$ and unseen $Y_u$ sets and the unseen sets is not accessible during the training. 

\textbf{L-Reg under data-shift generalization.}
% Multi-domain generalization (mDG) aims to enable a model to generalize on the unseen domains by learning invariance across seen domains~\cite{tan2024rethinking}. 
% In terms of mDG, it refers to that $|D| \ge 2$ in Eq.~\ref{eq:general_obj} is typically considered.  
% Typically, existing methods minimize the domain gaps~\cite{} and astonishing results have been achieved. Thus, we mainly show that when the domain gap is well minimized and can be considered as $|D| = 1$ for the latent features,
% using  L-Reg still promotes the generalization of $X_u$ from $D_u$.  
The task mDG endeavors to facilitate a model's ability to generalize to unseen domains by fostering invariance across seen domains~\cite{tan2024rethinking}. In the context of mDG, the term $|D| \ge 2$ in \cref{eq:general_obj} typically denotes multiple domains.
Traditionally, existing methods focus on minimizing domain gaps, leading to remarkable results~\cite{cha2022miro,tan2024rethinking}. However, it is noteworthy that even when the domain gap is effectively minimized, and $|D| = 1$ for the latent features can be considered, L-Reg still demonstrates its efficacy in promoting the generalization of $X_u$ from $D_u$.

\begin{proposition}[Effectiveness of L-Reg in enhancing data-shift generalization.] %L-Reg reduces the complexity of the model, promoting data-shift generalization performance.]
\label{prop:data} 
Assume the gap across all domains is well minimized. 
Let  $f^*$ denote the target model that generalizes to the data $X_u$ from the unseen domain with the lowest complexity.
For a model ${f}^{R}_{(X_s, Y_s)}, {f}_{(X_s, Y_s)}$ trained under the data-shift generalization setting (i.e., $(X_s, Y_s)$ 
is accessible and $\mathcal{Y}_s = \mathcal{Y}_u$). We have:
\begin{equation}
     GL({f}^{R}_{(X_s, Y_s)}, f^*, X_u) \le   GL({f}_{(X_s, Y_s)}, f^*, X_u).
\end{equation}
\end{proposition}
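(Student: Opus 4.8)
The plan is to reduce the claim to a single monotonicity relation between model complexity and the unseen-domain generalization loss, and then invoke the complexity-reduction property of L-Reg established earlier in the paper. Concretely, I would introduce a complexity functional $C(\cdot)$ that aggregates the two sources of complexity discussed above—the spread of the feature distribution in the semantic space and the presence of extreme-valued classifier weights—so that $f^*$, being by hypothesis the lowest-complexity model that generalizes to $X_u$, satisfies $C(f^*) \le C(f)$ for every model $f$ that fits $(X_s, Y_s)$. Under the standing assumption that the domain gap is well minimized (so that $|D| = 1$ for the latent features), the main objective in \cref{eq:general_obj} is shared by both trained models, and the only difference is the added term $\alpha L_{L\text{-}Reg}$; thus $f^R_{(X_s,Y_s)}$ and $f_{(X_s,Y_s)}$ attain the same fit on the seen pairs, while $f^R_{(X_s,Y_s)}$ carries less excess complexity.

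First I would show $C(f^R_{(X_s,Y_s)}) \le C(f_{(X_s,Y_s)})$. This follows from the derivation of L-Reg: minimizing \cref{eq:final_reg_obj} drives the model toward using minimal semantic supports $\Gamma$ (\cref{def:semantic_support}), which simultaneously balances the feature distribution and suppresses extreme classifier weights. Since both models minimize the same $L_{main}$ but only $f^R_{(X_s,Y_s)}$ additionally minimizes $L_{L\text{-}Reg}$, the regularized solution attains no larger complexity, yielding $C(f^*) \le C(f^R_{(X_s,Y_s)}) \le C(f_{(X_s,Y_s)})$.

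Second, and this is the crux, I would establish that $GL(\cdot, f^*, X_u)$ is monotonically nondecreasing in the excess complexity $C(\cdot) - C(f^*)$. The key observation is that, with $\mathcal{Y}_s = \mathcal{Y}_u$ and the domain gap minimized, the class-relevant semantic supports are invariant across seen and unseen domains (cf. \cref{eq:logic_c1,eq:logic_c2}), whereas the non-support semantics are precisely the domain-dependent directions whose values shift between $D_s$ and $D_u$. Any model fitting the training data agrees with $f^*$ on the invariant supports; the models differ only in the weight they place on the non-support, domain-dependent features. On $X_u$ from $D_u$, those features deviate from their seen-domain calibration, so the pointwise discrepancy $\|f(x,y) - f^*(x,y)\|_2$ is controlled by, and increasing in, the magnitude of the weight placed on non-support features—which is exactly the excess complexity. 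Taking expectations over $(X_u, Y_u)$ then makes $GL(f, f^*, X_u)$ monotone in $C(f)$, and combining with the inequality of the previous step delivers the conclusion.

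The main obstacle is formalizing this second step: making the informal identification ``non-support $=$ domain-dependent $=$ complexity excess'' precise enough that the generalization loss is provably monotone in $C$, rather than merely upper-bounded by it. I expect to need either a Lipschitz-type continuity assumption linking the function-space distance to the complexity functional, or an explicit decomposition of the latent representation into an invariant part (shared with $f^*$) and a domain-varying residual, together with a quantitative bound on how far the residual shifts between $D_s$ and $D_u$. Folding the classifier-weight component and the feature-distribution component of $C$ into a single scalar functional, and ensuring the associated constant is independent of the particular model, are the delicate points that the plan hinges on.
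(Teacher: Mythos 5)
There is a genuine gap, and it sits exactly where you flag it yourself: the claim that $GL(\cdot, f^*, X_u)$ is monotonically nondecreasing in the excess complexity $C(\cdot)-C(f^*)$ is the entire content of the proposition, not a reduction of it. Your first step ($C(f^R)\le C(f)$, because the regularized model additionally minimizes $L_{L\text{-}Reg}$) is fine but cheap; the second step is asserted via the informal chain ``non-support $=$ domain-dependent $=$ complexity excess'' and then left as an obstacle requiring either a Lipschitz link between function-space distance and $C$, or an explicit invariant/residual decomposition. Without one of those, nothing prevents a higher-complexity model from happening to land closer to $f^*$ on $X_u$, so the inequality does not follow. A proof that ends by naming the assumption it would need is a plan, not a proof.

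The paper avoids ever introducing an abstract complexity functional, and thereby avoids needing a monotonicity lemma. It assumes the fit is a least-squares solution, so each model has the closed form $(Z^T Z)^{-1} h(Z^T) Y$, and it compares the resulting operators $\mathcal{N}^* = (Z^TZ)^{-1}Z^T$, $\mathcal{N}$, and $\mathcal{N}^R$ directly. Decomposing $Z$ into semantic-support dimensions $\Gamma$ and non-support dimensions $\bar{\Gamma}$, the ideal $\mathcal{N}^*$ is the block matrix with $\Gamma^T\bar{\Gamma} = \bar{\Gamma}^T\Gamma = \mathbf{0}$ and $h(\bar{\Gamma}) = \mathbf{0}$; the unregularized $\mathcal{N}$ leaves these blocks unconstrained (\cref{lemma:filter}); and L-Reg, via \cref{eq:reg_goal}, explicitly minimizes $||\Gamma^T\bar{\Gamma}||_2 + ||\bar{\Gamma}^T\Gamma||_2 + ||h(\bar{\Gamma})||_2$, so $||\mathcal{N}^R - \mathcal{N}^*||_2 \le ||\mathcal{N} - \mathcal{N}^*||_2$ and the conclusion follows because $GL$ is by definition the distance to $f^*$. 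This is essentially the ``explicit decomposition into an invariant part and a domain-varying residual'' you mention as one possible repair — the paper's $\Gamma$/$\bar{\Gamma}$ block structure is that decomposition — but carried out concretely in the linear setting rather than postponed. If you want to salvage your route, the fix is to drop the scalar functional $C$ and the monotonicity claim, and instead show directly (as the paper does) that the regularized solution is closer to $f^*$ in the norm that defines $GL$.
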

% \textbf{Proof sketch.} L-Reg disentangles the semantic supports of each target whilst reducing the size of the semantic support set. 
Please see proof details in \cref{app_prop:data}.
% We provide an intuitive example for understanding \cref{prop:data}. 
% % (\textbf{Proposition 4.1? by KH}).
% Consider samples from the seen domains; all cats are black or white, and all dogs are brown. For the sample `a brown cat'  from the unseen domain, it may be classified as a dog by the model without L-Reg. 
% However, using L-Reg would force the model to use minimal semantics for the deduction; it would filter irrelevant semantics, such as the color terms.
To illustrate \cref{prop:data}, consider the following intuitive example:
In the seen domains, all cats are either black or white, while all dogs are brown. Now, imagine encountering a sample labeled `a brown cat' from an unseen domain. Without the application of L-Reg, the model might erroneously classify it as a dog.
However, with L-Reg in place, the model is compelled to rely on minimal semantics for classification. This means filtering out irrelevant features such as color terms, thus enabling more accurate deductions.

% A similar form is also stated in GMDG~\cite{tan2024rethinking} where $|D| \ge 2$ is typically considered. 

% During the training   

% goal of mDG 
% only eq6 is not enough
% why 
% remarks
% \newpage

\textbf{L-Reg under target-shift generalization.}
% We show how L-Reg improves generalized discovery where $Y_s$ that is available for training is only a subset of $Y$ and $\mathcal{Y}_u \cap \mathcal{Y}_s \neq \emptyset$. We denote $Y_u / Y_s$ as the novel classes that are not included in $Y_s$  and $Y_u \sim Y_s$ as the seen classes for $X_u$ classification where $|D| = 1$. 
% Along with \cref{prop:data}, 
% L-Reg further benefits generalized category discovery by promoting the generalization performance on $Y_u$.  
We demonstrate how L-Reg enhances generalized discovery in scenarios where only a subset of classes ($Y_s$) is available for training, and there may exist an overlap between the unseen classes ($Y_u$) and the seen classes ($Y_s$), denoted as $\mathcal{Y}_u \cap \mathcal{Y}_s \neq \emptyset$.
We define $Y_u / Y_s$ as the novel classes not included in $Y_s$, and $Y_u \sim Y_s$ as the seen classes for $X_u$ classification, where $|D| = 1$.
Building upon \cref{prop:data}, L-Reg further enhances GCG by improving the generalization performance on $Y_u$.

\begin{proposition} [L-Reg improves target-shift generalization]
\label{prop:target}
When $|D| = 1$, L-Reg promotes generalization performance on $Y_u$ under the target-shift scenario.
\end{proposition}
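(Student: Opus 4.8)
The plan is to build directly on \cref{prop:data} together with the atomic-formula characterization of a good general logic, splitting the target set into its overlapping seen-class part and its novel-class part and controlling each contribution to the generalization loss separately. First I would decompose $Y_u = (Y_u \sim Y_s) \cup (Y_u / Y_s)$ into the seen classes used for $X_u$ classification and the novel classes, and correspondingly write the loss as an additive split over the two subsets,
\begin{equation}
GL(f, f^*, (X_u, Y_u)) = \mathbb{E}_{(x,y),\, y \in Y_u \sim Y_s}\|f(x,y)-f^*(x,y)\|_2 + \mathbb{E}_{(x,y),\, y \in Y_u / Y_s}\|f(x,y)-f^*(x,y)\|_2.
\end{equation}
Because $|D|=1$ removes the data shift, the seen-class term is governed by essentially the argument of \cref{prop:data}: L-Reg drives both the classifier weights and the feature distribution toward lower complexity, so the L-Reg model $f^{R}_{(X_s,Y_s)}$ sits closer to the lowest-complexity target $f^*$ than the unregularized $f_{(X_s,Y_s)}$, yielding a no-larger seen-class contribution.

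The crux is the novel-class term, which I would control via the atomic formulas of \cref{eq:logic_c1,eq:logic_c2}. Under L-Reg, $f^{R}$ assigns $x$ to a class $y$ only when $x$ carries the minimal semantic support $\gamma^y$, and by the distinctness constraint each known class uses a distinct $\gamma^y$. For a novel sample $x$ with true label in $Y_u / Y_s$, the good-general-logic hypothesis forces its semantic support to differ from every seen $\gamma^y$, so $h(g(x),y,d)\rightarrow\text{False}$ for all $y\in Y_s$; the model therefore rejects $x$ from all known classes and routes it to the unknown cluster, exactly as $f^*$ does. By contrast, $f$ without L-Reg relies on redundant, non-minimal semantics, so a novel $x$ may share spurious features with some seen class and be absorbed into it, deviating from $f^*$. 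Hence the novel-class contribution is no larger for $f^{R}$, and adding the two terms gives $GL(f^{R}_{(X_s,Y_s)}, f^*, (X_u,Y_u)) \le GL(f_{(X_s,Y_s)}, f^*, (X_u,Y_u))$, i.e.\ L-Reg promotes target-shift generalization.

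I expect the main obstacle to be the novel-class step: translating the discrete logical rejection (the True/False evaluation of the atomic formula) into the continuous $\ell_2$ generalization loss, and justifying the extrapolation that the distinctness of \cref{eq:logic_c2}, enforced only on seen classes during training, continues to hold for novel classes at test time. This extrapolation is precisely what the good-general-logic condition $\vdash_{(h\circ g(X_u),Y_u)} = \models_{(g(X_s),Y_s)}$ is meant to supply, so I would make that step explicit and specify how $f^*$ and $f$ represent the \emph{unknown} outcome (e.g.\ via a rejection option or an augmented label/cluster space) so that the loss on $Y_u / Y_s$ is measured consistently for both models.
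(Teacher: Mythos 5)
Your argument reaches the paper's conclusion by a genuinely different route. The paper never decomposes the generalization loss over $Y_u \sim Y_s$ and $Y_u / Y_s$; its entire proof is a single identity applied to the L-Reg objective itself: when $|D|=1$ and the labels form a closed set, the term $\min_{h,g} -H(Y_s\mid g(\bar\Gamma))$ from \cref{eq:reg_goal} is rewritten as $\min_{h,g} H(\bar{Y_s}\mid g(\bar\Gamma))$ with $\bar{Y_s}=Y_u/Y_s$, so that maximizing the uncertainty of seen-class predictions on the non-support semantics is literally the same objective as sharpening the assignment of such samples to the novel classes; the claim that samples lacking semantic support for any seen class are routed to $Y_u/Y_s$ then follows immediately, and the paper stops there. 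You instead substitute the atomic-formula distinctness of \cref{eq:logic_c2} for this entropy complementation, add an additive split of $GL$ over the two class subsets, and invoke \cref{prop:data} for the overlapping part. What your route buys is an explicit account of where the unknown-class gain appears in the generalization loss, which the paper leaves implicit; what it costs is exactly the two obstacles you flag yourself --- translating the discrete True/False rejection into the continuous $\ell_2$ loss, and extrapolating support distinctness from seen to novel classes --- neither of which the paper has to confront because its identity operates on the training objective rather than on test-time logical evaluation. Given that the paper's own proof is equally informal about why better assignment implies smaller $GL$, your proposal is an acceptable alternative; the one step you would need to add to recover the paper's mechanism is the observation that over a closed label set the negation of $Y_s$ is precisely $Y_u/Y_s$, which turns the second L-Reg term into a direct objective on the novel classes.
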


\begin{proof}
    When $|D| = 1$, 
    since all $Y$ belongs to a close set, minimizing $ - H(Y_s|g(\Bar\Gamma), D)$ is equivalent to the following:
    \begin{equation}
        \min_{h,g} - H(Y_s|g(\Bar\Gamma))
        \Longleftrightarrow \min_{h,g} H(\Bar{Y_s}|g(\Bar\Gamma)),
    \end{equation}
    where $\Bar{Y_s}$ is the negation of $Y_s$, i.e., $Y_u / Y_s$. 
    In this situation, if one sample does not contain sufficient semantic support to be classified under $Y_s$, it otherwise will be assigned under $Y / Y_s$, promoting performance for both  $Y_u / Y_s$  and  $Y_u \sim Y_s$.
    Therefore, the generalization performance on the unseen classes will be improved by L-Reg.  
\end{proof}

\textbf{L-Reg under all-shift generalization.}
% If the domain gap is well minimized where $|D| = 1$, by combining \cref{prop:data} and \cref{prop:target}, it is obvious that L-Reg promotes generalization performance on $Y_u / Y_s$  and  $Y_u \sim Y_s$ for $X_u$ from other domains. Our experiments validate that L-Reg with well-minimized domain gaps certifies consistent improvements for the all-shift generalization. 
When the domain gap is sufficiently minimized and $|D| = 1$ can be considered, the combination of \cref{prop:data} and \cref{prop:target} demonstrates that L-Reg enhances generalization performance on both novel classes ($Y_u / Y_s$) and seen classes ($Y_u \sim Y_s$) for $X_u$ from other domains.
Our experiments validate that L-Reg, when applied in scenarios with well-minimized domain gaps, consistently improves generalization across all shifts.

\begin{table}[t]
\centering
% \caption{Main results: the proposed method compared with previous non-ensemble and ensemble mDG methods. The best results for each group are highlighted in \textbf{bold}.
% and \textbf{\textit{italic}}, respectively.
% {cite}
% }
% \label{tab:main_res1}
% \begin{center}
% \small
% \vspace{-2.5cm}
\caption{MDG results: Comparison between the proposed and previous non-ensemble and ensemble mDG methods. The best results for each group are highlighted in \textbf{bold}.
Improvement and degradation in our approach from GMDG are highlighted in \red{red}.
}
% % and \textbf{\textit{italic}}, respectively. 
\label{tab:mDG_results}
% \scriptsize
\resizebox{0.8\linewidth}{!}{%
% \begin{tabular}{lccccc|p{0.4cm}}
\begin{tabular}{llllll|l}
 \toprule
% & \multicolumn{6}{c}{\bf Non-ensemble methods} \\ 
Test domain & \bf PACS & \bf VLCS & \bf OfficeHome & \bf TerraIncognita & \bf DomainNet & \bf Avg. \\ \midrule
 % & \multicolumn{6}{c}{\bf No oracle model.} \\ 
MMD~\citep{li2018domainMMD} & 84.7±0.5 & 77.5±0.9 & 66.3±0.1 & 42.2±1.6 & 23.4±9.5 & 58.8 \\
Mixstyle~\citep{zhou2021domain} & 85.2±0.3 & 77.9±0.5 & 60.4±0.3 & 44.0±0.7 & 34.0±0.1 & 60.3 \\
GroupDRO~\citep{sagawa2019distributionally} & 84.4±0.8 & 76.7±0.6 & 66.0±0.7 & 43.2±1.1 & 33.3±0.2 & 60.7 \\
IRM~\cite{arjovsky2019invariant} & 83.5±0.8 & 78.5±0.5 & 64.3±2.2 & 47.6±0.8 & 33.9±2.8 & 61.6 \\
ARM~\citep{zhang2021adaptive} & 85.1±0.4 & 77.6±0.3 & 64.8±0.3 & 45.5±0.3 & 35.5±0.2 & 61.7 \\
VREx~\citep{krueger2021out} & 84.9±0.6 & 78.3±0.2 & 66.4±0.6 & 46.4±0.6 & 33.6±2.9 & 61.9 \\
CDANN~\citep{li2018deep} & 82.6±0.9 & 77.5±0.1 & 65.8±1.3 & 45.8±1.6 & 38.3±0.3 & 62.0 \\
DANN~\citep{ganin2016domain} & 83.6±0.4 & 78.6±0.4 & 65.9±0.6 & 46.7±0.5 & 38.3±0.1 & 62.6 \\
RSC~\citep{huang2020self} & 85.2±0.9 & 77.1±0.5 & 65.5±0.9 & 46.6±1.0 & 38.9±0.5 & 62.7 \\
MTL~\citep{blanchard2021domain} & 84.6±0.5 & 77.2±0.4 & 66.4±0.5 & 45.6±1.2 & 40.6±0.1 & 62.9 \\
% Mixup & 84.6±0.6 & 77.4±0.6 & 68.1±0.3 & 47.9±0.8 & 39.2±0.1 & 63.4 \\
MLDG~\citep{li2018learning} & 84.9±1.0 & 77.2±0.4 & 66.8±0.6 & 47.7±0.9 & 41.2±0.1 & 63.6 \\
Fish~\citep{shi2021gradient} & 85.5±0.3 & 77.8±0.3 & 68.6±0.4 & 45.1±1.3 & 42.7±0.2 & 63.9 \\
ERM~\citep{Vapnik1998ERM} & 84.2±0.1 & 77.3±0.1 & 67.6±0.2 & 47.8±0.6 & 44.0±0.1 & 64.2 \\
SagNet~\citep{nam2021reducing} & {86.3}±0.2 & 77.8±0.5 & 68.1±0.1 & 48.6±1.0 & 40.3±0.1 & 64.2 \\
SelfReg~\citep{kim2021selfreg} & 85.6±0.4 & 77.8±0.9 & 67.9±0.7 & 47.0±0.3 & 42.8±0.0 & 64.2 \\
CORAL~\citep{sun2016deep} & 86.2±0.3 & 78.8±0.6 & 68.7±0.3 & 47.6±1.0 & 41.5±0.1 & 64.5 \\
mDSDI~\cite{bui2021exploiting} & 86.2±0.2 & 79.0±0.3 & 69.2±0.4 & 48.1±1.4 & 42.8±0.1 & 65.1 \\ 
%  & \multicolumn{6}{c}{Use ResNet-50~\citep{he2016deep} as oracle model.} \\ 
%  \hline
% Style Neophile~\cite{kang2022style} & \textbf{89.11} & - & 65.89 & - & 44.60 & - \\
% MIRO~\citep{cha2022miro} (\textbf{GReg1}) & 85.4±0.4 & 79.0±0.3 & 70.5±0.4 & 50.4±1.1 & 44.3±0.2 & 65.9 \\ 
% {GMDG} & 85.6±0.3 & \textbf{79.2}±0.3 &\textbf{70.7}±0.2 & \textbf{51.1}±0.9 & \textbf{44.6}±0.1 & \textbf{66.3} \\ 
\midrule
 & \multicolumn{6}{c}{Use RegNetY-16GF~\citep{singh2022revisiting} as oracle model.} \\ 
MIRO~\cite{cha2022miro} (ECCV23) & {97.4}±0.2 & 79.9±0.6 & 80.4±0.2 & 58.9±1.3 & 53.8±0.1 & 74.1 \\
{GMDG}~\cite{tan2024rethinking} (CVPR24) & 97.3±0.1 & {82.4}±0.6 & {80.8}±0.6 & {60.7}±1.8 & {54.6}±0.1 & {75.1} \\
\rowcolor{mygray}\textbf{GMDG + L-Reg} & \textbf{97.4}±0.2\red{$^{0.1\uparrow}$}  &  \textbf{82.4}±0.0\red{$^{0.1\uparrow}$} &  \textbf{80.9}±0.5\red{$^{0.1\uparrow}$} & \textbf{62.9}±0.9\red{$^{2.2\uparrow}$} & \textbf{55.3}±0.0\red{$^{0.8\uparrow}$}  & \textbf{75.8}\red{$^{0.7\uparrow}$} \\
\bottomrule 
% \toprule
% & \multicolumn{6}{c}{\bf Ensemble methods} \\ 
%  & \bf PACS & \bf VLCS & \bf OfficeHome & \bf TerraIncognita & \bf DomainNet & \bf Avg. \\ \hline
% & \multicolumn{6}{c}{ Use multiple oracle models.} \\ 
% SIMPLE~\citep{li2022simple} & 88.6±0.4 & 79.9±0.5 & 84.6±0.5 & 57.6±0.8 & 49.2±1.1 & 72.0 \\
% SIMPLE++~\citep{li2022simple} & \textbf{99.0}±0.1 & \textbf{82.7}±0.4 & \textbf{87.7}±0.4 & \textbf{59.0}±0.6 & \textbf{61.9}±0.5 & \textbf{{78.1}} \\ \midrule 
% & \multicolumn{6}{c}{Use ResNet-50~\citep{he2016deep} as oracle model.} \\ 
% MIRO + SWAD  &  \textbf{88.4}±0.1 & {{79.6}}±0.2 & {{72.4}}±0.1 & {{52.9}}±0.2 & {{47.0}}±0.0 & {{68.1}} \\
% \rowcolor{mygray}\begin{tabular}[c]{@{}l@{}} \blue{\textbf{GMDG}} + SWAD \end{tabular} & \textbf{{88.4}}±0.1 & \textbf{79.6}±0.1 & \textbf{72.5}±0.2 & \textbf{53.0}±0.7 & \textbf{47.3}±0.1 &  \textbf{68.2} \\ 
% \hline
% & \multicolumn{6}{c}{Use RegNetY-16GF~\citep{singh2022revisiting} as oracle model.} \\ 
% MIRO + SWAD  & 96.8±0.2 & 81.7±0.1 & 83.3±0.1 & {{64.3}}±0.3 & 60.7±0.0 & 77.3 \\
% \rowcolor{mygray}\blue{\textbf{GMDG}} + SWAD
% & \textbf{{97.9}}±0.3 & \textbf{{82.2}}±0.3 & \textbf{{84.7}}±0.2 & \textbf{65.0}±0.2 & \textbf{{61.3}}±0.2 & \textbf{78.2} \\ 
% \bottomrule
\end{tabular}%
}
% \end{center}
 % \caption{Main results: The proposed method compared with previous ensemble DG methods that use the pretrianed models as oracle models. The best results in each setting are highlighted as \textbf{bold}.}  
 \vspace{-.5cm}
\end{table}

\section{Experiments}
\label{sec:exp}

To validate L-Reg, three groups of experiments under the three kinds of settings are conducted. Notably, all baselines we used already incorporate the $L_2$ regulation in the form of weight decay. 
{We also compare other commonly used regularization terms, such as independence or sparsity regularization on $Z$. More results in
\cref{app:compare_to_more}indicate that our L-Reg also surpasses them.}

% All experiments validate that using L-Reg additionally promotes the generalization performance. 

% \begin{figure}[!t]
% % \vspace{-0.2cm}
%   \centering
%   \includegraphics[width=0.8\linewidth]{all_fig.pdf}
%   \caption{Averaged results across all datasets of 
%     k-means~\cite{macqueen1967classification},
%     RankStats+~\cite{han2021autonovel},
%     UNO+~\cite{fini2021unified},
%     ORCA~\cite{cao2022openworld},
%     GCD~\cite{vaze2022generalized},
%     RIM~\cite{krause2010discriminative},
%     TIM~\cite{boudiaf2020information},
%     PIM~\cite{chiaroni2023parametric},
%     and our proposed RPIM of all classes, known classes, and unknown classes.  
%   }
%   \label{fig:all_res_avg}
% \end{figure}

\subsection{Experiments on mDG}

\textbf{Experimental settings.}
We operate on the DomainBed suite~\cite {gulrajani2020search} and leverage standard leave-one-out cross-validation as the evaluation protocol.
We test L-Reg with GMDG~\cite{tan2024rethinking} on $5$ real-world benchmark datasets: PACS~\cite{li2017deeper}, VLCS~\cite{fang2013unbiased}, OfficeHome~\cite{venkateswara2017deep}, TerraIncognita~\cite{beery2018recognition}, and DomainNet~\cite{peng2019moment}.
Following MIRO~\citep{cha2022miro} and GMDG~\cite{tan2024rethinking}, the RegNetY-16GF backbone with SWAG pre-training~\cite{singh2022revisiting}) is used.
Specifically, we train the backbone using GMDG with L-Reg.
% The backbones are trained with our proposed objective barely and further with SWAD~\cite{cha2021swad}, respectively.
Accuracy is adopted as the evaluation metric, and the results of the averages from three trials of each experiment, with standard deviations, are presented. 
See Supplementary~\ref{app:experiments} for more experimental details.

\textbf{Results.}
% \cref{tab:mDG_results} exhibits experiment results for mDG classification tasks, showing that L-Reg improves the performance of GMDG across all datasets.  
% More significant improvements occur when the GMDG baseline
% achieves relatively low accuracy. The improvements support  \cref{prop:data}.
% Please see details of each domain in each dataset in \cref{app:mDG}.
% % mainly occurs 
The experimental results presented in \cref{tab:mDG_results} demonstrate the efficacy of L-Reg in improving the performance of GMDG across all datasets in mDG classification tasks. Notably, more substantial improvements are observed when the GMDG baseline achieves relatively low accuracy. These observed enhancements provide empirical support for \cref{prop:data}. 
Please see using L-Reg with basic ERM in \cref{app:compare_to_erm}.
For detailed insights into each domain within each dataset, please refer to \cref{app:mDG}.

\subsection{Experiments on GCD}
 
\begin{wraptable}{r}{0.4\linewidth}
\vspace{-1.5cm}
\caption{GCD results: Average results across all datasets of PIM with L-Reg. Improvements and degradation are highlighted in \red{red} and \blue{blue}, respectively.}
\label{tab:GCD_res}
% \captionof{table}{GCD results: Average results across all datasets of PIM with L-Reg. Improvements and degradation are highlighted in \red{red} and \blue{blue}, respectively.}
\resizebox{\linewidth}{!}{%
\begin{tabular}{l|lll}
\toprule
% \multicolumn{1}{c}{}  & \multicolumn{3}{c}{\textbf{Average}}          \\
\multicolumn{1}{c|}{Average}  & {All}  & {Known}  & {Unknown}  \\
\midrule
K-means \cite{macqueen1967classification}               & 44.7          & 46.0          & 43.9          \\
RankStats+ \cite{han2021autonovel} (TPAMI-21) & 38.6          & 54.6          & 25.6          \\
UNO+ ~\cite{fini2021unified} (ICCV-21)        & 51.2          & 74.5          & 36.7          \\
ORCA \cite{cao2022openworld} (ICLR-22)        & 46.3          & 51.3          & 41.2          \\
ORCA - ViTB16         & 56.7          & 65.6          & 49.9          \\
GCD \cite{vaze2022generalized} (CVPR-22)         & 60.4          & 71.8          & 52.9          \\
RIM \cite{krause2010discriminative} (NeurIPS-10)      & 62.0          & 72.5          & 55.4          \\
TIM \cite{boudiaf2020information} (NeurIPS-20)      & 62.7          & 72.6          & 56.4          \\
\midrule
PIM \cite{chiaroni2023parametric} (ICCV-23)         & 67.4          & \textbf{79.3}          & 59.9          \\
\rowcolor{mygray}  \textbf{PIM + L-Reg}  & \textbf{68.8}\red{$^{1.4\uparrow}$} & {79.0}\blue{$^{0.3\downarrow}$} & \textbf{62.7}\red{$^{2.8\uparrow}$}\\
\bottomrule
\end{tabular}%
}

\vspace{0.3cm}
% \end{wraptable} 
% \begin{wraptable}{r}{0.4\linewidth}
% \vspace{-1.4cm}
% \\~\\
\caption{MDG+GCD results: Averaged accuracy scores for all, known and unknown classes across all five datasets. Improvements and degradation are highlighted in \red{red} and \blue{blue} respectively.
}
% \captionof{table}{MDG+GCD results: Averaged accuracy scores for all, known and unknown classes across all five datasets. Improvements and degradation are highlighted in \red{red} and \blue{blue}, respectively.
% }
\label{tab:MDG+GCD_res}
% \begin{small}
% \begin{sc}
% \scriptsize
\resizebox{\linewidth}{!}{%
\begin{tabular}{cl|ccc}
\toprule
               % && \multicolumn{3}{c}{Average}                                                              \\ \hline
Method         &Domain gap& All                         & Known                         & Unknown                          \\ \hline
ERM             &Not& 44.69                       & 59.33                       & 23.54                        \\
\textbf{+L-Reg} &minimized& 45.50                       & 61.43                       & 21.63                        \\
\rowcolor{mygray}Imp.   && \red{0.81} & \red{2.09} & \blue{-1.91} \\ \hline
PIM            &Not& 46.95                       & 60.35                       & 26.90                        \\
\textbf{+L-Reg} &minimized& 47.27                       & 60.83                       & 26.34                        \\
\rowcolor{mygray}Imp.  && \red{0.32} & \red{0.48} & \blue{-0.57} \\ \hline
MIRO           &Not sufficiently& 49.67                       & 68.86                       & 25.79                        \\
\textbf{+L-Reg }         &minimized& 52.11                       & 71.26                       & 26.49                        \\
\rowcolor{mygray}Imp.   && \red{2.44} & \red{2.39} & \red{0.71}  \\ \hline
GMDG           &Sufficiently& 47.94& 	68.75& 	20.68             \\
\textbf{+L-Reg} &minimized&51.94	&69.87&	27.68                \\
\rowcolor{mygray}Imp.   && \red{4.00} & \red{1.12} & \red{7.01} \\
\bottomrule
\end{tabular}%
}
\vspace{-.5cm}
\end{wraptable}

\textbf{Experimental settings.}
% \textbf{Competitors.}
We validate our approach through training PIM additionally with L-Reg.
% The results of the modified novel category discovery methods are reported in~\cite{vaze2022generalized}, and the modified information maximization methods are reported in~\cite{chiaroni2023parametric}.
% \textbf{Datasets.}
Six image datasets are adopted to validate the feasibility of our proposed RPIM compared to other competitors, including three generic object recognition datasets, CIFAR10~\cite{krizhevsky2009learning},
CIFAR100~\cite{krizhevsky2009learning} 
and 
ImageNet-100~\cite{deng2009imagenet};
% Additionally, 
% we have  
two fine-grained datasets CUB~\cite{wah2011caltech} and Stanford Cars~\cite{krause20133d}; and the long-tail dataset Herbarium19~\cite{tan2019herbarium}.
% \newpage
% These diverse datasets are intended to validate our approach's effectiveness compared to other competitors comprehensively.
% Following GCD and PIM~\cite{vaze2022generalized,chiaroni2023parametric}, the initial training set of each dataset is partitioned into labeled and unlabeled subsets. To elaborate, half of the image samples affiliated with the known classes are allocated to the labeled subset, while the remaining half are assigned to the unlabeled subset. The unlabeled subset also includes all image samples from the remaining classes in the original dataset, designated as novel classes. Consequently, the unlabeled subset comprises instances from $K$ different classes.
% Please see more details in \cref{app:experiments}.
Following prior works \cite{vaze2022generalized,chiaroni2023parametric}, we use the proposed accuracy metric from \cite{vaze2022generalized} of all classes, known classes, and unknown classes for evaluation. 
% Our training procedure follows PIM. 
Please see a detailed description of the experimental setup in \cref{app:GCD}.
% with existing generalized category discovery methods:
% GCD~\cite{vaze2022generalized},
% and
% PIM~\cite{chiaroni2023parametric}; traditional machine learning method, k-means~\cite{macqueen1967classification}; novel category discovery methods:
% RankStats+~\cite{han2021autonovel},
% UNO+~\cite{fini2021unified},
% ORCA~\cite{cao2022openworld}; and several information maximization methods:
% RIM~\cite{krause2010discriminative},
% and TIM~\cite{boudiaf2020information}.

\textbf{Results.}
% The average results across all datasets of using L-Reg with PIM are presented in \cref{tab:GCD_res}, while details of each dataset are presented in \cref{app:experiments}~\cref{tab:GCD_res_details}. It can be seen that L-Reg improves the accuracy of all unknown classes across all datasets, verifying \cref{prop:target}. However, L-Reg may compromise the performance of the known classes slightly since it reduces the size of semantic support for deducing $Y$, in which approach the information used for classification is reduced. This compromise is acceptable due to significant improvements for the unknown classes. 
The average results across all datasets for utilizing L-Reg with PIM are presented in \cref{tab:GCD_res}, while detailed dataset-specific information is available in \cref{app:experiments}~\cref{tab:GCD_res_details}. The results highlight that L-Reg consistently increases the accuracy of all unknown classes across all datasets, thus confirming the validity of \cref{prop:target}.
However, it is notable that L-Reg may marginally compromise the performance of known classes, as it reduces the size of semantic support for deducing $Y$, thereby reducing the information available for known classification. Nevertheless, this compromise is deemed acceptable given the significant improvements observed for the unknown classes.

% \end{minipage}
% \hfill
% \begin{minipage}[c]{0.4\linewidth}

% \end{minipage}

\subsection{Experiments on mDG + GCD}

\textbf{Experimental settings.}
% We use the datasets for mDG tasks to conduct mDG + GCD. During the training stage, only samples from the seen domains are available; half of the labels are masked as unknown classes, and only the unlabeled data is used. 
% Specifically, though the unlabeled data comes from the unknown classes in the training phase, we assume this is not known as prior and not constrained; thus, the setting could be aligned with GCD.
% Similar to mDG, experiments follow the leave-one-out cross-validation evaluation method while using the GCD metrics. This infers that each domain in each dataset is tested as the unseen domain. 
% Note that we use the data from the unseen domains that belong to all known and unknown classes for testing.
% For better validation of L-Reg's efficacy, we re-implemented four methods under the mDG + GCD setting, testing them without and with L-Reg, respectively. 
% The four methods tested are ERM, PIM, MIRO, and GMDG. Among them, ERM is the basic approach without other regularization; PIM employs information maximization without minimizing domain gaps. MIRO and GMDG minimize domain gaps, and GMDG provides a sufficient domain gap-minimizing approach. 
% Note here PIM is re-implemented, and more experimental details can be seen in \cref{app:mDG+GCD}. 
% % \textbf{Experimental settings.}
We utilize datasets designed for mDG tasks to conduct mDG + GCD experiments. During the training stage, only samples from seen domains are available, with half of the classes masked as unknown, and only their unlabeled data are utilized. 
Notably, even though all the unlabeled data originates from unknown classes during training, this prior knowledge is not assumed or constrained, aligning the setting with GCD.
Similar to mDG, we adopt the leave-one-out cross-validation method. This entails testing each domain in each dataset as the unseen domain.  The performance is tested on unseen domains by employing GCD metrics. 
% It's important to note that data from unseen domains encompassing both known and unknown classes are utilized for testing.
To validate L-Reg's efficacy comprehensively, we re-implement four methods under the mDG + GCD setting, testing them both with and without L-Reg. The four methods include ERM, PIM, MIRO, and GMDG. ERM serves as the baseline approach without additional regularization, while PIM maximizes information without minimizing domain gaps. MIRO and GMDG focus on minimizing domain gaps, with GMDG offering a comprehensive approach in this regard.
It is worth noting that PIM has been re-implemented. For further experimental details, please refer to \cref{app:mDG+GCD}.

\textbf{Results.}
% Averaged results across all unseen domains of all datasets are presented in \cref{tab:MDG+GCD_res}. Please see \cref{app:mDG+GCD} for the results of each domain in each dataset, as demonstrated in \cref{prop:data,prop:target}. 
% It can be noticed that as the domain gap is gradually minimized, the improvements for the unknown classes increase, and the best results are achieved by using GMDG with L-Reg.
% \cref{fig:gmdg_res_vis} suggests that the model trained with L-Reg focuses on the minimal semantics that is sufficient for the given class. For example, in terms of the known class, the model with L-Reg focuses on the neck of guitars for their classification, ignoring the bias introduced by the colors or the body shapes. Similar to the unknown class, L-Reg helps the model to extract the semantic supports, e.g., the face of the class person, 
% during the semi-supervised learning.  
% Please see details of each domain in each dataset in \cref{app:mDG+GCD}.
The averaged results across all unseen domains of all datasets are summarized in \cref{tab:MDG+GCD_res}. For a detailed breakdown of results for each domain in each dataset, please refer to \cref{app:mDG+GCD}. As discussed in \cref{prop:data} and \cref{prop:target},
a noticeable trend is observed wherein, as the domain gap is gradually minimized, the improvements for unknown classes increase, with the best results achieved using GMDG with L-Reg. 
% Furthermore, \cref{fig:gmdg_res_vis} provides visual insights into the behavior of models trained with L-Reg. Evidently, these models tend to focus on minimal semantics sufficient for class distinctions. For instance, concerning known classes, the model with L-Reg prioritizes features like the neck of guitars for classification while ignoring biases introduced by colors or body shapes. Similarly, for unknown classes, L-Reg aids the model in extracting semantic supports, such as focusing on the face for the 'person' class during semi-supervised learning.
% For comprehensive details regarding each domain in each dataset, please refer to \cref{app:mDG+GCD}.

\begin{wrapfigure}{r}{0.45\linewidth}
\vspace{-1cm}
% \begin{figure}
  \centering
  \includegraphics[width=\linewidth]{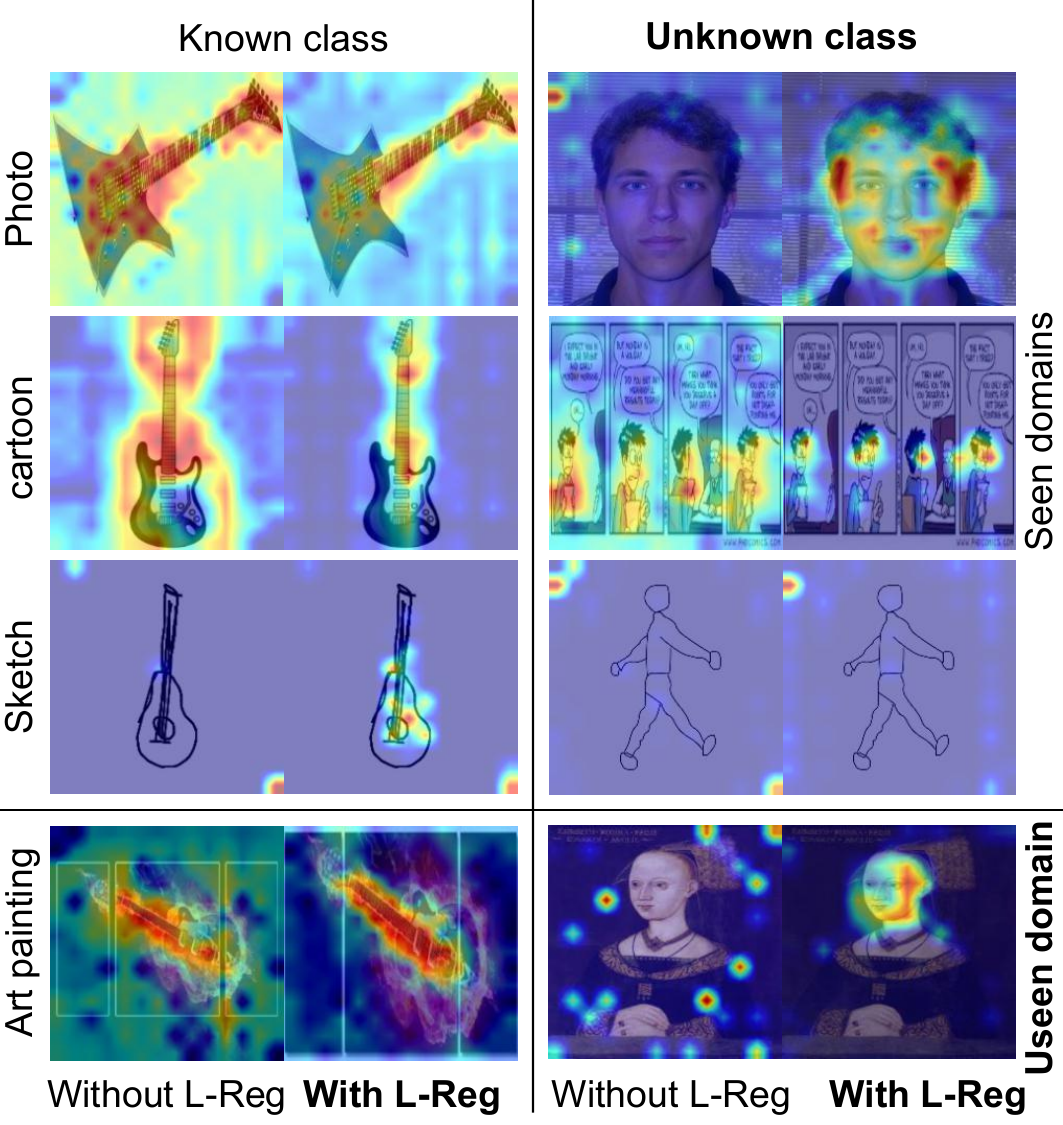}
  % \caption{Diagrams of features.}
  \captionof{figure}{GradCAM visualizations of GMDG trained without and with L-Reg. The seen, unseen domains and known, unknown classes are denoted. 
  }
  \label{fig:gmdg_res_vis}
% \end{figure}
% \\~\\
\vspace{-.4cm}
\end{wrapfigure}

\textbf{L-Reg forms atomic formulas and improves interpretability.}
Furthermore, \cref{fig:gmdg_res_vis} provides visual insights into the behavior of models trained with L-Reg. Evidently, these models tend to focus on minimal semantics sufficient for class distinctions.
% For instance, concerning known classes, the model with L-Reg prioritizes features like the neck of guitars for classification while ignoring biases introduced by colors or body shapes. Similarly, for unknown classes, L-Reg aids the model in extracting semantic supports, such as focusing on the face for the 'person' class during semi-supervised learning.
% For comprehensive details regarding each domain in each dataset, please refer to \cref{app:mDG+GCD}.
{For the known classes, the efficacy of L-Reg can be intuitively understood as extracting the minimal semantic supports for a given class label.
For instance, the presence of a guitar's fingerboard, even in unseen domains, helps classify a sample as belonging to the guitar category, whose informal forms can be denoted as $h(\text{has fingerboard}, \text{is guitar}, d \in D) \to\text{True}$ and $h(\text{not has fingerboard}, \text{is guitar}, d \in D) \to\text{False}$. 
For all known classes, samples with these minimal semantic supports are recognized accordingly.
% if the fingerboard of the known guitar class appears even from unseen domains, this sample should belong to the guitar category. In terms of all known classes, samples that contain those minimal semantic supports will be recognized as one of the known classes. 
In contrast, if a sample lacks these minimal supports for any known class, it is very likely categorized as an unknown class. This behavior stems from Paper Eq.10 which ensures $\mathcal{A}^{y_i*} \neq \mathcal{A}^{y_j*}$ through constraining $\gamma^{y_i} \neq \gamma^{y_j}$. 
L-Reg further enhances the model's ability to identify minimal supports for unknown classes by filtering out co-covariant features associated with other classes and thus generalizing to unseen domains.
Therefore, the very interpretable features for unknown classes from unseen domains can be extracted using L-Reg. \cref{fig:gmdg_res_vis} (right side) demonstrates that the model with L-Reg can even extract facial features for the unknown person class and can generalize this to the unseen domain. Similarly, here we obtain (informal) atomic formulas as $h(\text{has a face}, \text{is person}, d \in D)\to\text{True}$,  $h(\text{not has a face}, \text{is person}, d \in D)\to\text{False}$.
}
% (refer to 
% Among these samples, L-Reg further helps the model extract the minimal supports for the unknown classes, filtering those co-covariant features with other classes and thus generalizing them to the unseen domains. 
% For the unknown classes, since  
% Paper Fig. 5 (right side) demonstrates that the model with L-Reg can even extract facial features for the unknown person class and can generalize this to the unseen domain.

% \textbf{Fail case.} 
{However, as shown in Row 3, significant domain shifts, such as those between the sketch domain and other domains, pose challenges. 
Specifically, the differences between the stick-figure style of sketches of persons and figures from other domains can hinder the model's ability to cluster sketches with other domains' figures when the class label is unknown. 
% However, as shown in Row 3, due to a great domain shift between the sketch domain and the rest of the domains, 
% especially for the person class where most sketches are in matchstick style, it is very challenging for the model to cluster those sketches together with person figures from other domains when the class label is unknown. 
Thus, under this circumstance, the model may fail to extract meaningful features from those sketches.
We acknowledge this limitation and will explore solutions in future work.
% We hypothesis this problem can be 
% a compromise may appear 
}

\begin{wraptable}{r}{0.45\linewidth}
\vspace{-.8cm}
\captionof{table}{Averaged results of applying L-Reg to different layers across domains in PACS.}
\label{tab:different_layer}
\resizebox{\linewidth}{!}{%
\begin{tabular}{l|ccc}
\toprule
& \multicolumn{1}{l}{All} & \multicolumn{1}{l}{Known} & \multicolumn{1}{l}{Unkown} \\
\midrule
GMDG                                                                            & 58.33                   & 91.46                     & 10.18                      \\
\rowcolor{mygray}\begin{tabular}[c]{@{}l@{}}L-Reg: Deep layer \end{tabular}             & \textbf{67.82}                   & \textbf{91.86}                     & 31.33                      \\
\rowcolor{mygray}\begin{tabular}[c]{@{}l@{}}L-Reg: Earlier and the deep layers\end{tabular} & 58.97                   & 80.73                     & \textbf{35.05}     \\
\bottomrule
\end{tabular}%
}
\vspace{-0.2cm}
\end{wraptable}
\textbf{L-Reg should be applied to features from deep layers.}
% As shown is 
% One precondition mentioned in the theoretical analysis is that L-Reg works with $Z$, in which each dimension should represent independent semantics. 
One crucial precondition highlighted in the theoretical analysis is that L-Reg operates effectively with a representation $Z$, where each dimension represents independent semantics.
% The semantic features usually come from the deep layers of the models. 
% As shown in \cref{tab:different_layer},
% if L-Reg applies to the features from the earlier layers that are not semantics, the performance degrades for known classes, though it improves the performance of the unknown classes. 
% This phenomenon occurs because features from the earlier layers may not be independent of each other, and penalizing them may cause insufficient capturing of the semantic supports of the known classes. Therefore, to certify generalization improvements without severely compromising the performance of the seen sets,
% we suggest applying L-Reg to the features from the deep layers, such as the bottleneck. These results may also indicate that the compromised results in \cref{tab:GCD_res} may be caused by the less deep model structure that is unable to provide expected semantic features.  
The semantic features usually come from the deeper layers of the model architecture~\cite{tan2024semantic}. However, 
% as shown in 
\cref{tab:different_layer} shows that applying L-Reg to features from earlier layers, which may not necessarily represent semantics, leads to a degradation in performance for known classes, albeit improving performance for unknown classes. This phenomenon arises due to the potential interdependence among features from earlier layers, resulting in penalization that may hinder the capture of semantic supports essential for known classes.
To ensure generalization improvements without significant compromise to the performance of known classes, we advocate for applying L-Reg specifically to features extracted from deeper layers, such as the bottleneck layer. These suggest that the compromised results observed in \cref{tab:GCD_res} could be attributed to the less depth of the model structure, which fails to provide the expected semantic features.

\subsection{Apply L-Reg to congestion prediction for circuit design.}

\textbf{Experimental settings.}
We also test L-Reg in Congestion prediction on the CircuitNet \cite{chai2023circuitnet} dataset by using CircuitFormer \cite{zou2024circuit} backbone. 
The congestion prediction is for circuit design and benefits from logical reasoning-based approaches.
All parameters, except for L-Reg, remain consistent with CircuitFormer, and we follow its metrics.

\begin{wraptable}{r}{0.5\linewidth}
    \centering
    \centering
    % \vspace{-0.8cm}
    \captionof{table}{\textbf{Results of Congestion prediction:} Congestion prediction is proposed for circuit design.}
    \label{Tab:circuitNet_res}
    \resizebox{\linewidth}{!}{%
    \begin{tabular}{c|c c c}
    \hline
       & pearson & spearman & kendall \\
        \hline
      Gpdl with UNet++ & 0.6085 & 0.5202 & 0.3855 \\
      CircuitFormer (SOTA) & 0.6374 & 0.5282 & 0.3935\\
      \rowcolor{mygray}\textbf{\begin{tabular}[c]{@{}l@{}}CircuitFormer + L-Reg (\textbf{Ours}) \end{tabular}} &  \textbf{0.6553}  & \textbf{0.5289}  & \textbf{0.3944} \\
      \hline
    \end{tabular} %
    }
\vspace{-0.3cm}
\end{wraptable}

\textbf{Results.}
% We use the CircuitNet dataset. We follow the experimental setting proposed in CircuitFormer, including the hyperparameters, to conduct the CircuitFormer + L-Reg experiment. 
\cref{Tab:circuitNet_res} shows the results of prediction results on the CircuitNet dataset. 
We also include the results of  Gpdl with UNet++ and CircuitFormer for better comparison. 
Notably, the improvements brought by CircuitFormer with L-Reg across all metrics, especially for the pearson metric can be observed. 
% Results in \cref{Tab:circuitNet_res} show 
The consistent improvement with L-Reg across all metrics indicates L-Reg's feasibility. 
% (0.6374 to 0.6553). 

\section{Related work}
\textbf{Logical reasoning for deep learning.} 
Current studies focus on length generalization or symbolic reasoning in the logic-based scope.
For length generalization, 
\cite{abbe2023generalization} proposes the generalization to the unseen setting, theoretically verifying that commonly used models can generalize to the unseen and degree curriculum promotes the generalization ability of the transformer, followed by~\cite{ahuja2024provable,abbe2024provable,xiao2024theory}. 
Another branch is to improve the logical reasoning ability for abstract symbols, such as learning the logical-based temples and expecting the model to generalize to unseen samples~\cite{boix2023can,li2024neuro}. These studies are closely related to languages, such as generating longer answering sequences or solving mathematical problems in large language models, lacking explicit connections to visual tasks.
% For the image classification,
\cite{barbiero2022entropy} 
delves into the logical explanations in image classification by explicitly extracting logical relationships. While this logical-based approach sheds light on the interpretability of image classification models, its specific benefits for visual generalization remain relatively unexplored.
% explores logical explanations behind it through explicitly extracting logical relationships, but how dose the logical-based approach benefits the visual generalization is rarely explored. 

% In their work, Barbiero et al. (2022) 
% This paper unveils the relationship between logical reasoning and generalization in image classification, showing that based on the concept of forming a good general logic in image classification improves the models' generalization ability. 

\textbf{Multi-domain generalization.} 
Current approaches for mDG in image classification focus on learning invariant representation across domains. 
Previous approaches like DANN~\cite{ganin2016domain} minimize feature divergences between source domains. CDANN~\cite{li2018deep}, CIDG~\cite{li2018domain}, and MDA~\cite{hu2020domain} consider conditions for learning conditionally invariant features. MIRO~\cite{cha2022miro} and GMDG~\cite{tan2024rethinking} take advantage of pre-trained models to improve generalization. 
Specifically, in comparison to 
MIRO, GMDG proposes a general entropy-based learning objective for mDG and sufficiently minimizes the domain gaps, yielding better generalization results.

% \red{Most current Multi-domain generalization (mDG) methods focus only on classification tasks.
% To learn better domain-independent representations for mDG, DANN~\cite{ganin2016domain} minimizes feature divergences between the source domains. 
% CDANN~\cite{li2018deep}, CIDG~\cite{li2018domain}, and MDA~\cite{hu2020domain} additionally take conditions into consideration and aim to learn conditionally invariant features across domains.
% \cite{bui2021exploiting,chattopadhyay2020learning,cha2022miro,li2022simple} point out that learning invariant representation to source domains is insufficient for mDG. Thus, MIRO~\cite{cha2022miro} and SIMPLE~\cite{cha2022miro} adopt pre-trained models as an oracle for seeking better general representations across various domains, including unseen target domains.
% GMDG~\cite{tan2024rethinking}  proposes a general learning objective from the perspective of information entropy for the mDG task. }

\textbf{Generalized category discovery.}
Generalized category discovery, pioneered by~\cite{vaze2022generalized}, addresses unlabeled samples with both known and unknown classes. 
% ORCA~\cite{cao2022openworld} tackles a similar open-world semi-supervised learning problem, emphasizing handling intra-class variance during training and excelling at generalized category discovery.
% GCD estimates the number of categories and employs contrastive training alongside a semi-supervised k-means-based clustering algorithm. 
% However, its lack of a probabilistic parametric classifier necessitates optimizing all parameters in the pre-trained encoder, leading to computational overhead. Furthermore, GCD's performance suffers on unbalanced datasets.
Furthermore, PIM~\cite{chiaroni2023parametric} integrates InfoMax into generalized category discovery, effectively handling imbalanced datasets and surpassing GCD on both short- and long-tailed datasets.
% . Unlike GCD, PIM freezes the encoder and trains only a one-layer probabilistic parametric classifier, reducing computational complexity. Moreover, its introduced losses effectively handle imbalanced datasets, surpassing GCD on both short- and long-tailed datasets.

\section{Conclusion}

% \red{Here is the discussion.}
% Vision models encounter challenges in generalizing to novel data. Recent scholarly investigations delve into the synergy between logical reasoning and the generalization capabilities of deep learning.
This paper presents L-Reg, a logical regularization approach tailored for image classification tasks using logic analysis frameworks.
L-Reg yields better generalization across different settings by fostering balanced feature distributions and streamlining the classification model's complexity. Rigorous theoretical analyses and empirical validations underscore its efficacy, as L-reg consistently improves generalization performance with different frameworks under various scenarios.

\textbf{Limitation.}  L-Reg narrows the extent of semantic supports, potentially diminishing the amount of information available for classification and leading to certain trade-offs in the performance of seen datasets. %L-Reg reduces the size of semantics supports , alleviating the noises from the irrelevant semantics and significantly benefits generalization. However, this approach causes less information to contribute to the classification, yielding possible compromises in seen sets. 
This effect is evidenced by the slight decline in the accuracy of known classes when L-Reg is applied, as shown in \cref{tab:GCD_res}. A similar phenomenon is observed in \cref{fig:gmdg_res_vis}, where the model fails to recognize a person in the sketch domain lacking facial features. % the accuracy of known classes slightly decreases when L-Reg is applied.  
% As the analysis about \cref{tab:different_layer} discussed, the compromises may result form the improper $Z$,
% Future work may focus on alleviating the possible compromises on the seen sets by approaches such as obtaining better $Z$ through better-designed model architecture. 
%As discussed in the analysis of \cref{tab:different_layer}, 
Analysis from \cref{tab:different_layer} suggests that these compromises may result from improper $Z$. Future work should focus on mitigating potential compromises on seen datasets by exploring strategies for better capturing $Z$ through improved model architecture design. We offer more experimental results of possible solutions to this limitation in \cref{app:limiation}, such as further constraining the independence of each dimension in $Z$. Those results may suggest a direction for future work.

\section*{Acknowledgments}
% This research was supported by
% National Natural Science Foundation of China under No. 92370119 and No. 62206225; Natural Science Foundation of the Jiangsu Higher Education Institutions of China under no. 22KJB520039；
{
The work was partially supported by the following: National Natural Science Foundation of China under No. 92370119, No. 62376113, No. 62206225, and No. 62276258.
% The work was partially supported by the following: National Natural Science Foundation of China under No. 92370119, No. 62376113, and No. 62206225; 
% Jiangsu Science and Technology Program (Natural Science Foundation of Jiangsu Province) under No.  BE2020006-4. 
% Natural Science Foundation of the Jiangsu Higher Education Institutions of China under No. 22KJB520039.
}

{
\small

\bibliographystyle{plain}
\bibliography{main.bib}

\begin{thebibliography}{10}

\bibitem{abbe2023generalization}
Emmanuel Abbe, Samy Bengio, Aryo Lotfi, and Kevin Rizk.
\newblock Generalization on the unseen, logic reasoning and degree curriculum.
\newblock In {\em International Conference on Machine Learning}, pages 31--60. PMLR, 2023.

\bibitem{abbe2024provable}
Emmanuel Abbe, Elisabetta Cornacchia, and Aryo Lotfi.
\newblock Provable advantage of curriculum learning on parity targets with mixed inputs.
\newblock {\em Advances in Neural Information Processing Systems}, 36, 2024.

\bibitem{ahuja2024provable}
Kartik Ahuja and Amin Mansouri.
\newblock On provable length and compositional generalization.
\newblock {\em arXiv preprint arXiv:2402.04875}, 2024.

\bibitem{andreka2017universal}
Hajnal Andr{\'e}ka, Istv{\'a}n N{\'e}meti, and Ildik{\'o} Sain.
\newblock Universal algebraic logic.
\newblock {\em Studies in Logic, Springer, due to}, 2017.

\bibitem{arjovsky2019invariant}
Martin Arjovsky, L{\'e}on Bottou, Ishaan Gulrajani, and David Lopez-Paz.
\newblock Invariant risk minimization.
\newblock {\em arXiv preprint arXiv:1907.02893}, 2019.

\bibitem{barbiero2022entropy}
Pietro Barbiero, Gabriele Ciravegna, Francesco Giannini, Pietro Li{\'o}, Marco Gori, and Stefano Melacci.
\newblock Entropy-based logic explanations of neural networks.
\newblock In {\em Proceedings of the AAAI Conference on Artificial Intelligence}, volume~36, pages 6046--6054, 2022.

\bibitem{beery2018recognition}
Sara Beery, Grant Van~Horn, and Pietro Perona.
\newblock Recognition in terra incognita.
\newblock In {\em Proceedings of the European conference on computer vision (ECCV)}, pages 456--473, 2018.

\bibitem{blanchard2021domain}
Gilles Blanchard, Aniket~Anand Deshmukh, {\"U}run Dogan, Gyemin Lee, and Clayton Scott.
\newblock Domain generalization by marginal transfer learning.
\newblock {\em The Journal of Machine Learning Research}, 22(1):46--100, 2021.

\bibitem{blanchard2011generalizing}
Gilles Blanchard, Gyemin Lee, and Clayton Scott.
\newblock Generalizing from several related classification tasks to a new unlabeled sample.
\newblock {\em Advances in neural information processing systems}, 24, 2011.

\bibitem{boix2023can}
Enric Boix-Adsera, Omid Saremi, Emmanuel Abbe, Samy Bengio, Etai Littwin, and Joshua Susskind.
\newblock When can transformers reason with abstract symbols?
\newblock {\em arXiv preprint arXiv:2310.09753}, 2023.

\bibitem{boudiaf2020information}
Malik Boudiaf, Imtiaz Ziko, J{\'e}r{\^o}me Rony, Jos{\'e} Dolz, Pablo Piantanida, and Ismail Ben~Ayed.
\newblock Information maximization for few-shot learning.
\newblock {\em Advances in Neural Information Processing Systems}, 33:2445--2457, 2020.

\bibitem{bui2021exploiting}
Manh-Ha Bui, Toan Tran, Anh Tran, and Dinh Phung.
\newblock Exploiting domain-specific features to enhance domain generalization.
\newblock {\em Advances in Neural Information Processing Systems}, 34:21189--21201, 2021.

\bibitem{cao2022openworld}
Kaidi Cao, Maria Brbic, and Jure Leskovec.
\newblock Open-world semi-supervised learning.
\newblock In {\em International Conference on Learning Representations}, 2022.

\bibitem{caron2021emerging}
Mathilde Caron, Hugo Touvron, Ishan Misra, Herv{\'e} J{\'e}gou, Julien Mairal, Piotr Bojanowski, and Armand Joulin.
\newblock Emerging properties in self-supervised vision transformers.
\newblock In {\em Proceedings of the IEEE/CVF international conference on computer vision}, pages 9650--9660, 2021.

\bibitem{chai2023circuitnet}
Zhuomin Chai, Yuxiang Zhao, Wei Liu, Yibo Lin, Runsheng Wang, and Ru~Huang.
\newblock Circuitnet: An open-source dataset for machine learning in vlsi cad applications with improved domain-specific evaluation metric and learning strategies.
\newblock {\em IEEE Transactions on Computer-Aided Design of Integrated Circuits and Systems}, 42(12):5034--5047, 2023.

\bibitem{chiaroni2023parametric}
Florent Chiaroni, Jose Dolz, Ziko~Imtiaz Masud, Amar Mitiche, and Ismail Ben~Ayed.
\newblock Parametric information maximization for generalized category discovery.
\newblock In {\em Proceedings of the IEEE/CVF International Conference on Computer Vision}, pages 1729--1739, 2023.

\bibitem{deng2009imagenet}
Jia Deng, Wei Dong, Richard Socher, Li-Jia Li, Kai Li, and Li~Fei-Fei.
\newblock Imagenet: A large-scale hierarchical image database.
\newblock In {\em 2009 IEEE conference on computer vision and pattern recognition}, pages 248--255. Ieee, 2009.

\bibitem{fang2013unbiased}
Chen Fang, Ye~Xu, and Daniel~N Rockmore.
\newblock Unbiased metric learning: On the utilization of multiple datasets and web images for softening bias.
\newblock In {\em Proceedings of the IEEE International Conference on Computer Vision}, pages 1657--1664, 2013.

\bibitem{fini2021unified}
Enrico Fini, Enver Sangineto, St{\'e}phane Lathuili{\`e}re, Zhun Zhong, Moin Nabi, and Elisa Ricci.
\newblock A unified objective for novel class discovery.
\newblock In {\em Proceedings of the IEEE/CVF International Conference on Computer Vision}, pages 9284--9292, 2021.

\bibitem{ganin2016domain}
Yaroslav Ganin, Evgeniya Ustinova, Hana Ajakan, Pascal Germain, Hugo Larochelle, Fran{\c{c}}ois Laviolette, Mario Marchand, and Victor Lempitsky.
\newblock Domain-adversarial training of neural networks.
\newblock {\em The journal of machine learning research}, 17(1):2096--2030, 2016.

\bibitem{gulrajani2020search}
Ishaan Gulrajani and David Lopez-Paz.
\newblock In search of lost domain generalization.
\newblock {\em arXiv preprint arXiv:2007.01434}, 2020.

\bibitem{han2021autonovel}
Kai Han, Sylvestre-Alvise Rebuffi, Sebastien Ehrhardt, Andrea Vedaldi, and Andrew Zisserman.
\newblock Autonovel: Automatically discovering and learning novel visual categories.
\newblock {\em IEEE Transactions on Pattern Analysis and Machine Intelligence}, 2021.

\bibitem{hu2020domain}
Shoubo Hu, Kun Zhang, Zhitang Chen, and Laiwan Chan.
\newblock Domain generalization via multidomain discriminant analysis.
\newblock In {\em Uncertainty in Artificial Intelligence}, pages 292--302. PMLR, 2020.

\bibitem{huang2020self}
Zeyi Huang, Haohan Wang, Eric~P Xing, and Dong Huang.
\newblock Self-challenging improves cross-domain generalization.
\newblock In {\em Computer Vision--ECCV 2020: 16th European Conference, Glasgow, UK, August 23--28, 2020, Proceedings, Part II 16}, pages 124--140. Springer, 2020.

\bibitem{cha2022miro}
Cha Junbum, Lee Kyungjae, Park Sungrae, and Chun Sanghyuk.
\newblock Domain generalization by mutual-information regularization with pre-trained models.
\newblock {\em European Conference on Computer Vision (ECCV)}, 2022.

\bibitem{kim2021selfreg}
Daehee Kim, Youngjun Yoo, Seunghyun Park, Jinkyu Kim, and Jaekoo Lee.
\newblock Selfreg: Self-supervised contrastive regularization for domain generalization.
\newblock In {\em Proceedings of the IEEE/CVF International Conference on Computer Vision}, pages 9619--9628, 2021.

\bibitem{krause2010discriminative}
Andreas Krause, Pietro Perona, and Ryan Gomes.
\newblock Discriminative clustering by regularized information maximization.
\newblock {\em Advances in neural information processing systems}, 23, 2010.

\bibitem{krause20133d}
Jonathan Krause, Michael Stark, Jia Deng, and Li~Fei-Fei.
\newblock 3d object representations for fine-grained categorization.
\newblock In {\em Proceedings of the IEEE international conference on computer vision workshops}, pages 554--561, 2013.

\bibitem{krizhevsky2009learning}
Alex Krizhevsky, Geoffrey Hinton, et~al.
\newblock Learning multiple layers of features from tiny images.
\newblock 2009.

\bibitem{krueger2021out}
David Krueger, Ethan Caballero, Joern-Henrik Jacobsen, Amy Zhang, Jonathan Binas, Dinghuai Zhang, Remi Le~Priol, and Aaron Courville.
\newblock Out-of-distribution generalization via risk extrapolation (rex).
\newblock In {\em International Conference on Machine Learning}, pages 5815--5826. PMLR, 2021.

\bibitem{li2018learning}
Da~Li, Yongxin Yang, Yi-Zhe Song, and Timothy Hospedales.
\newblock Learning to generalize: Meta-learning for domain generalization.
\newblock In {\em Proceedings of the AAAI conference on artificial intelligence}, volume~32, 2018.

\bibitem{li2017deeper}
Da~Li, Yongxin Yang, Yi-Zhe Song, and Timothy~M Hospedales.
\newblock Deeper, broader and artier domain generalization.
\newblock In {\em Proceedings of the IEEE international conference on computer vision}, pages 5542--5550, 2017.

\bibitem{li2018domainMMD}
Haoliang Li, Sinno~Jialin Pan, Shiqi Wang, and Alex~C Kot.
\newblock Domain generalization with adversarial feature learning.
\newblock In {\em Proceedings of the IEEE conference on computer vision and pattern recognition}, pages 5400--5409, 2018.

\bibitem{li2018domain}
Ya~Li, Mingming Gong, Xinmei Tian, Tongliang Liu, and Dacheng Tao.
\newblock Domain generalization via conditional invariant representations.
\newblock In {\em Proceedings of the AAAI conference on artificial intelligence}, volume~32, 2018.

\bibitem{li2018deep}
Ya~Li, Xinmei Tian, Mingming Gong, Yajing Liu, Tongliang Liu, Kun Zhang, and Dacheng Tao.
\newblock Deep domain generalization via conditional invariant adversarial networks.
\newblock In {\em Proceedings of the European conference on computer vision (ECCV)}, pages 624--639, 2018.

\bibitem{li2024neuro}
Zenan Li, Yunpeng Huang, Zhaoyu Li, Yuan Yao, Jingwei Xu, Taolue Chen, Xiaoxing Ma, and Jian Lu.
\newblock Neuro-symbolic learning yielding logical constraints.
\newblock {\em Advances in Neural Information Processing Systems}, 36, 2024.

\bibitem{li2022simple}
Ziyue Li, Kan Ren, Xinyang Jiang, Yifei Shen, Haipeng Zhang, and Dongsheng Li.
\newblock Simple: Specialized model-sample matching for domain generalization.
\newblock In {\em The Eleventh International Conference on Learning Representations}, 2022.

\bibitem{macqueen1967classification}
J~MacQueen.
\newblock Classification and analysis of multivariate observations.
\newblock In {\em Proceedings of the 5th Berkeley Symposium on Mathematical Statistics and Probability}, pages 281--297, 1967.

\bibitem{moshe2022improving}
Ofir Moshe, Gil Fidel, Ron Bitton, and Asaf Shabtai.
\newblock Improving interpretability via regularization of neural activation sensitivity.
\newblock {\em arXiv preprint arXiv:2211.08686}, 2022.

\bibitem{nam2021reducing}
Hyeonseob Nam, HyunJae Lee, Jongchan Park, Wonjun Yoon, and Donggeun Yoo.
\newblock Reducing domain gap by reducing style bias.
\newblock In {\em Proceedings of the IEEE/CVF Conference on Computer Vision and Pattern Recognition}, pages 8690--8699, 2021.

\bibitem{panousis2023sparse}
Konstantinos~Panagiotis Panousis, Dino Ienco, and Diego Marcos.
\newblock Sparse linear concept discovery models.
\newblock In {\em Proceedings of the IEEE/CVF International Conference on Computer Vision}, pages 2767--2771, 2023.

\bibitem{peng2019moment}
Xingchao Peng, Qinxun Bai, Xide Xia, Zijun Huang, Kate Saenko, and Bo~Wang.
\newblock Moment matching for multi-source domain adaptation.
\newblock In {\em Proceedings of the IEEE/CVF international conference on computer vision}, pages 1406--1415, 2019.

\bibitem{rodder2000conditional}
Wilhelm R{\"o}dder.
\newblock Conditional logic and the principle of entropy.
\newblock {\em Artificial Intelligence}, 117(1):83--106, 2000.

\bibitem{sagawa2019distributionally}
Shiori Sagawa, Pang~Wei Koh, Tatsunori~B Hashimoto, and Percy Liang.
\newblock Distributionally robust neural networks for group shifts: On the importance of regularization for worst-case generalization.
\newblock {\em arXiv preprint arXiv:1911.08731}, 2019.

\bibitem{selvaraju2017grad}
Ramprasaath~R Selvaraju, Michael Cogswell, Abhishek Das, Ramakrishna Vedantam, Devi Parikh, and Dhruv Batra.
\newblock Grad-cam: Visual explanations from deep networks via gradient-based localization.
\newblock In {\em Proceedings of the IEEE international conference on computer vision}, pages 618--626, 2017.

\bibitem{shi2021gradient}
Yuge Shi, Jeffrey Seely, Philip~HS Torr, N~Siddharth, Awni Hannun, Nicolas Usunier, and Gabriel Synnaeve.
\newblock Gradient matching for domain generalization.
\newblock {\em arXiv preprint arXiv:2104.09937}, 2021.

\bibitem{singh2022revisiting}
Mannat Singh, Laura Gustafson, Aaron Adcock, Vinicius de~Freitas~Reis, Bugra Gedik, Raj~Prateek Kosaraju, Dhruv Mahajan, Ross Girshick, Piotr Doll{\'a}r, and Laurens Van Der~Maaten.
\newblock Revisiting weakly supervised pre-training of visual perception models.
\newblock In {\em Proceedings of the IEEE/CVF Conference on Computer Vision and Pattern Recognition}, pages 804--814, 2022.

\bibitem{sun2016deep}
Baochen Sun and Kate Saenko.
\newblock Deep coral: Correlation alignment for deep domain adaptation.
\newblock In {\em Computer Vision--ECCV 2016 Workshops: Amsterdam, The Netherlands, October 8-10 and 15-16, 2016, Proceedings, Part III 14}, pages 443--450. Springer, 2016.

\bibitem{tan2019herbarium}
Kiat~Chuan Tan, Yulong Liu, Barbara Ambrose, Melissa Tulig, and Serge Belongie.
\newblock The herbarium challenge 2019 dataset.
\newblock {\em arXiv preprint arXiv:1906.05372}, 2019.

\bibitem{tan2024rethinking}
Zhaorui Tan, Xi~Yang, and Kaizhu Huang.
\newblock Rethinking multi-domain generalization with a general learning objective.
\newblock In {\em Proceedings of the IEEE/CVF Conference on Computer Vision and Pattern Recognition (CVPR)}, pages 23512--23522, June 2024.

\bibitem{tan2024semantic}
Zhaorui Tan, Xi~Yang, and Kaizhu Huang.
\newblock Semantic-aware data augmentation for text-to-image synthesis.
\newblock In {\em Proceedings of the AAAI Conference on Artificial Intelligence}, volume~38, pages 5098--5107, 2024.

\bibitem{tsapara1998learning}
Irene Tsapara and Gy{\"o}rgy Tur{\'a}n.
\newblock Learning atomic formulas with prescribed properties.
\newblock In {\em Proceedings of the eleventh annual conference on Computational learning theory}, pages 166--174, 1998.

\bibitem{Vapnik1998ERM}
Vladimir~N. Vapnik.
\newblock {\em Statistical Learning Theory}.
\newblock Wiley-Interscience, 1998.

\bibitem{vaze2022generalized}
Sagar Vaze, Kai Han, Andrea Vedaldi, and Andrew Zisserman.
\newblock Generalized category discovery.
\newblock In {\em Proceedings of the IEEE/CVF Conference on Computer Vision and Pattern Recognition}, pages 7492--7501, 2022.

\bibitem{venkateswara2017deep}
Hemanth Venkateswara, Jose Eusebio, Shayok Chakraborty, and Sethuraman Panchanathan.
\newblock Deep hashing network for unsupervised domain adaptation.
\newblock In {\em Proceedings of the IEEE conference on computer vision and pattern recognition}, pages 5018--5027, 2017.

\bibitem{wah2011caltech}
Catherine Wah, Steve Branson, Peter Welinder, Pietro Perona, and Serge Belongie.
\newblock The caltech-ucsd birds-200-2011 dataset.
\newblock 2011.

\bibitem{wu2017improving}
Chunyang Wu, Mark~JF Gales, Anton Ragni, Penny Karanasou, and Khe~Chai Sim.
\newblock Improving interpretability and regularization in deep learning.
\newblock {\em IEEE/ACM Transactions on Audio, Speech, and Language Processing}, 26(2):256--265, 2017.

\bibitem{wu2018beyond}
Mike Wu, Michael Hughes, Sonali Parbhoo, Maurizio Zazzi, Volker Roth, and Finale Doshi-Velez.
\newblock Beyond sparsity: Tree regularization of deep models for interpretability.
\newblock In {\em Proceedings of the AAAI conference on artificial intelligence}, volume~32, 2018.

\bibitem{wu2020regional}
Mike Wu, Sonali Parbhoo, Michael Hughes, Ryan Kindle, Leo Celi, Maurizio Zazzi, Volker Roth, and Finale Doshi-Velez.
\newblock Regional tree regularization for interpretability in deep neural networks.
\newblock In {\em Proceedings of the AAAI conference on artificial intelligence}, volume~34, pages 6413--6421, 2020.

\bibitem{xiao2024theory}
Changnan Xiao and Bing Liu.
\newblock A theory for length generalization in learning to reason.
\newblock {\em arXiv preprint arXiv:2404.00560}, 2024.

\bibitem{zhang2021adaptive}
Marvin Zhang, Henrik Marklund, Nikita Dhawan, Abhishek Gupta, Sergey Levine, and Chelsea Finn.
\newblock Adaptive risk minimization: Learning to adapt to domain shift.
\newblock {\em Advances in Neural Information Processing Systems}, 34:23664--23678, 2021.

\bibitem{zhou2021domain}
Kaiyang Zhou, Yongxin Yang, Yu~Qiao, and Tao Xiang.
\newblock Domain generalization with mixstyle.
\newblock {\em arXiv preprint arXiv:2104.02008}, 2021.

\bibitem{zou2024circuit}
Jialv Zou, Xinggang Wang, Jiahao Guo, Wenyu Liu, Qian Zhang, and Chang Huang.
\newblock Circuit as set of points.
\newblock {\em Advances in Neural Information Processing Systems}, 36, 2024.

\end{thebibliography}

}

\newpage
\appendix

\section{Broader impact}
Our regularization term based on logic for image classification offers significant potential beyond academia. By integrating logical constraints, our approach enhances model robustness, interpretability, and ethical alignment. This translates into improved performance on real-world tasks such as disease diagnosis in healthcare and mitigating biases in decision-making systems. Our work fosters interdisciplinary collaboration and contributes to the responsible deployment of AI technologies, ultimately benefiting society through enhanced efficiency, fairness, and transparency in machine learning applications.

\section{Details of the logical framework for visual classification task}
\label{app:logical_framework}
We provide more details of the connections between logical reasoning and visual classification tasks.
% We remind readers of the framework for studying logics and link it with our practical scenarios.
\begin{definition} 
\label{def:logic}
Following~\cite{andreka2017universal}, a logic $\mathcal{L}$ is a five-tuple defined in the form:
\begin{equation}
\label{eq:logic}
    \mathcal{L}=\left\langle 
    F_{\mathcal{L}},  
    M_{\mathcal{L}}, 
    \models_{\mathcal{L}},
    mng_{\mathcal{L}}, 
    \vdash_{\mathcal{L}}
    \right\rangle,
\end{equation}
% In \cref{eq:logic},
% \begin{itemize}
where
\begin{itemize}
    \item $F_{\mathcal{L}}$ is a set of all formulas of $\mathcal{L}$.  $F_{\mathcal{L}}$ arbitrarily refers to any collections that can be `expressed' by language $\mathcal{L}$. 
     Therefore,  $F_{\mathcal{L}}$ could be not only a collection of languages but also images and labels ($X,Y$) for computer vision cases.
    \item $M_{\mathcal{L}}$ is a class called the class of all models (or possible worlds) of $\mathcal{L}$; intuitively, this can be considered as different domains $D$ of $X$.
    % such as natural languages
    % and intuitively could be a collection of texts such as natural languages or  ;
     \item $\models_{\mathcal{L}}$ is a binary relation, $\models_{\mathcal{L}} \subseteq M_{\mathcal{L}} \times F_{\mathcal{L}}$, called the validity relation of $\mathcal{L}$. For example, in the known set, the ground truth label of the image is given as truth, which is the validity relation.  
     \item $mng_{\mathcal{L}}: F_{\mathcal{L}} \times M_{\mathcal{L}} \longrightarrow \text { Sets }$
    where Sets is the class of all sets.      
    $mng_{\mathcal{L}}$ is a function with domain $F_{\mathcal{L}} \times M_{\mathcal{L}}$, called the meaning function of $\mathcal{L}$: Intuitively, $mng_{\mathcal{L}}$ extracts the meaning of the expressions can be understood as the classifiers.
    % the encoder in our case.
    \item $\vdash_{\mathcal{L}}$ represents the provability relation of $\mathcal{L}$, telling us which formulas are `true' in which possible world and usually is definable from  $mng_{\mathcal{L}}$, such as the estimation criteria in the machine learning system.  
    % $\vdash_{\mathcal{L}}$  is a binary relation between sets of formulas and individual formulas;
\end{itemize}
\end{definition}

Accordingly and still following~\cite{andreka2017universal}, a good general logic is defined as:
\begin{definition}[General logic]: A general logic is a class:
% 3.3.27
\begin{equation}
   \mathcal{L}^* := \left\langle  \mathcal{L}^P:P\in Sig \right\rangle,
\end{equation}
    where $Sig$ is a class of sets; $ \mathcal{L}^P=\left\langle 
    F_{\mathcal{L}}^P,  
    M_{\mathcal{L}}^P, 
    \models_{\mathcal{L}}^P,
    mng_{\mathcal{L}}^P, 
    \vdash_{\mathcal{L}}^P, 
    \right\rangle$ is a compositional
    logic in the sense of \cref{def:logic} for $P\in Sig$, and for any sets $P, Q\in Sig$ satisfies the
    following conditions:
    \begin{enumerate}
        \item $P$ is the set of atomic formulas of $\mathcal{L}^P$.
        \item $Cn(\mathcal{L}^P ) = Cn(\mathcal{L}^Q):= Cn(\mathcal{L}^*)$ where $Cn(\cdot)$ is called the set of logical connectives of the given logic (these are operation symbols with finite or infinite ranks).
        \item  Any bijection $f : P → Q$ that extends to a bijection between the tautological formula algebras of $\mathcal{L}^P$ and $\mathcal{L}^Q$ induces an isomorphism
        between $\mathcal{L}^P$ and $\mathcal{L}^Q$.
        \item If $P \subseteq Q$, then $\mathcal{L}^P$ is a sublogic of $\mathcal{L}^Q$.
        \item  For any $P \in Sig$ and set $H$, there is a $P' \in Sig$ such that $P'$ is disjoint from $H$ and $\mathcal{L}^{P'}$ is an isomorphic copy of $\mathcal{L}^P$.
        \item  The union of a system $P_i, i \in {I} $ of pairwise disjoint sets $P_i$ from
        $Sig$ belongs to $Sig$, whenever $I$ is not empty.  Let $\mathfrak{F r}(\cdot)$ denotes free algebra,  $\operatorname{Alg}_m(\mathcal{L})$ represents $\left\{m n g_{\mathfrak{M}}(\mathfrak{F}): \mathfrak{M} \in M\right\}$ where $\mathfrak{F}$ denotes the term algebra.
        Further, the tautological
        congruence of the logic belonging to the disjoint union $P$ is generated
        in $\mathfrak{F r}\left(\operatorname{Alg}_m\left(\mathcal{L}^P\right), P\right)$ as a congruence by the union of the tautological
        congruence relations of the logics belonging to  $P_i, i \in {I} $. 
        \item $Sig$ contains at least one non-empty set.
    \end{enumerate}
\end{definition}
Our L-Reg aims to regularize the semantics extracted by $g$ and the classifier to satisfy condition~$1$.

\textbf{$\vdash_{(h \circ g(X), Y)} = \models_{(g(X_s), Y_s)}$ in \cref{eq:logic_cv} can be safely omitted in the rest of the paper. }
{Consider the logic formed on $X,Y$: $\mathcal{L}_{(X_s,Y_s)}=\left\langle 
F_{(X_s,Y_s)},  
D, 
\models_{(X_s,Y_s)},
h, 
\vdash_{(h(X),Y)}
\right\rangle$. 
Assume we want to study the logic of $\vdash$ which can be defined in the form of $\mathcal{L}_{\vdash}:
\stackrel{\text{def}}{=} \left\langle  
F_{X_s,Y_s}, D_{\vdash}, h_{\vdash}, 
\models_{\vdash}
\right\rangle 
$, where $D_{\vdash}, h_{\vdash}, 
\models_{\vdash}$ are pseudo-components associated with $\vdash$. 
Particularly, $D_{\vdash}$ is a subset of all possible world/domains from $F_{(X_s,Y_s)}$: $D_{\vdash} \stackrel{\text{def}}{=}\{T \subseteq F_{(X_s,Y_s)}: T$ is closed under $\vdash_{(h(X),Y)}\}$.
For any $T\in D_{\vdash}$ and $a \in F_{(X_s,Y_s)}$, it has
$h_{\vdash} (a, T)\stackrel{\text{def}}{=}\{ b \in F: T \vdash (a \leftrightarrow b)
\}$. Further, $\models_{\vdash}$ in $T \in D_{\vdash}$ is defined as $T\models_{\vdash} a  
\stackrel{\text{def}}{\Leftrightarrow} a \in T
$. 
% let denote its associated  $D_{\vdash}, h_{\vdash}$ and $\models_{\vdash}$: $\mathcal{L}_{\vdash}\stackrel{\text{def}}{=} \left\langle F 
% \right\rangle $. 
% Typically, $D_{\vdash}$ is a subset of all possible world/domains consists of subsets form $F_{(X_s,Y_s)}$: $D_{\vdash} \stackrel{\text{def}}{=}\{T \subseteq F_{(X_s,Y_s)}: T$ is closed under $\vdash_{(h(X),Y)}\}$.
% For any $T\in D_{\vdash}$ and $a \in F_{(X_s,Y_s)}$, it has
% $h_{\vdash} (a, T)\stackrel{\text{def}}{=}\{ b \in F: T \vdash (a \leftrightarrow b)
% \}$. Futher, $\models_{\vdash}$ in $T \in D_{\vdash}$ is defined as $T\models_{\vdash} a  
% \stackrel{\text{def}}{\Leftrightarrow} a \in T
% $. 
\cite{andreka2017universal} points out that the following condition is almost always satisfied: (Cond) $\forall a, b \in F_{\vdash}, d \in D_{\vdash}$, we have $(h_{\vdash}(a,d) = h_{\vdash}(b, d)) \text{ and } d \models_{\vdash} a \Rightarrow d \models_{\vdash} b$. 
% Cond infers that in $d$ domain, when the meaning of $a,b$ is equivalent, if we obtain the validity relation for $a$ in $d$, we also obtain the validity relation for $b$ in $d$.
Therefore, the semantical consequence relation induced by $\models_{\vdash}$ coincides with the original syntactical $\vdash_{(h \circ g(X), Y)}$ while Cond holds. Due to that $D_{\vdash} \subseteq D$, it infers that $\models_{(g(X_s), Y_s)}$ coincides with $\models_{\vdash}$. Therefore, $\vdash_{(h \circ g(X), Y)} = \models_{(g(X_s), Y_s)}$ can be safely omitted in the rest of the paper. }

\section{Details of proofs}
\label{app:proofs}

\begin{proposition}[L-Reg reduces the complexity of the model, promoting data-shift generalization performance.]
\label{app_prop:data} 
Assume the domain gap is well minimized. Consider a $f^*$ is the target model that generalizes to the unseen with the lowest complexity.
There are ${f}^{R}_{(X_s, Y_s)}, {f}_{(X_s, Y_s)}$ trained under the setting of data-shift generalization (i.e., $(X_s, Y_s)$ is accessible and $\mathcal{Y}_s = \mathcal{Y}_u$), it has that:
\begin{equation}
     GL({f}^{R}_{(X_s, Y_s)}, f^*, X_u) \le   GL({f}_{(X_s, Y_s)}, f^*, X_u),
\end{equation}
\end{proposition}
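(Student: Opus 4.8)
The plan is to reduce the statement to a monotonicity argument relating model complexity to the generalization loss $GL$, exploiting the two complexity-reduction properties that L-Reg was shown to enjoy. First I would make the notion of complexity precise by introducing a functional $C(\cdot)$ that aggregates the two established effects: the balancing of the feature distribution in the semantic space (\cref{fig:feature}) and the suppression of extreme classifier weights (\cref{fig:classifier}). The first nontrivial step is to show $C({f}^{R}_{(X_s, Y_s)}) \le C({f}_{(X_s, Y_s)})$, i.e., that adding L-Reg genuinely lowers complexity; this follows from the objective in \cref{eq:final_reg_obj}, since minimizing $\mathbb{E}_{z_i}[H(\hat{Y}\mid z_i, D)] - \mathbb{E}[H(\hat{Y}\mid Z, D)]$ shrinks the semantic support per class (\cref{def:semantic_support}) and thereby reduces both the number of active semantics and the number of weights driven to extreme values.

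Next I would invoke the domain-gap assumption to collapse the problem to the $|D|=1$ case in feature space: if the gap across all seen, and hence unseen, domains is well minimized, then $g$ maps $X_u$ into the same feature distribution as the seen data, so the only residual contribution to $GL$ is the label-assignment map evaluated on that shared feature support. Under this reduction the target $f^*$, defined as the generalizing model of \emph{lowest} complexity, satisfies the ordering $C(f^*) \le C({f}^{R}_{(X_s, Y_s)}) \le C({f}_{(X_s, Y_s)})$. The heart of the argument is then a generalization bound that is nondecreasing in $C$: on the unseen support the deviation from $f^*$ is controlled by a term increasing in the effective hypothesis complexity, e.g.\ a Rademacher- or VC-type quantity keyed to the size of the semantic support, while both trained models fit the seen pairs $(X_s,Y_s)$ to comparable empirical loss. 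Combining fit-equality on the seen data with the complexity ordering gives a pointwise-tighter bound on $\|{f}^{R}_{(X_s, Y_s)}(x,y) - f^*(x,y)\|$ than on $\|{f}_{(X_s, Y_s)}(x,y) - f^*(x,y)\|$ over $X_u$, and taking expectations yields the claimed inequality.

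The main obstacle will be making the complexity-to-generalization link rigorous rather than heuristic: I must supply a bound in which $GL(\cdot, f^*, X_u)$ is provably monotone in $C(\cdot)$, which requires either an explicit complexity measure admitting a uniform-convergence bound whose minimizer coincides with $f^*$, or an assumption that, once the domain gap is removed, the unique lowest-complexity interpolant is $f^*$ and the two trained models lie on a path of decreasing complexity toward it. A secondary technical point is checking that the two separately-argued reductions, feature balancing and weight shrinkage, combine into a single monotone functional compatible with that bound; I would handle this by bounding each contribution to $\|f - f^*\|$ separately and summing, rather than seeking one closed-form $C$.
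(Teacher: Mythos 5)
There is a genuine gap at the step you yourself flag as the main obstacle: the claim that $GL(\cdot, f^*, X_u)$ is monotone in your complexity functional $C(\cdot)$ is never established, and it does not follow from the tools you propose. Rademacher- or VC-type bounds control the gap between empirical and population \emph{risk}, not the $L_2$ distance to a particular designated target model $f^*$; two hypotheses with ordered complexities can sit at arbitrary distances from $f^*$ in function space, and a model can lower $C$ while moving \emph{away} from $f^*$. Declaring $f^*$ to be the lowest-complexity generalizing model gives you $C(f^*) \le C(f^R) \le C(f)$, but an ordering of complexities along a chain does not yield an ordering of distances to the chain's endpoint unless you additionally assume something like a unique lowest-complexity interpolant and a monotone path toward it --- which is precisely the assumption you would need to prove, so the argument is circular as it stands. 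The "combine the two complexity effects by bounding each contribution separately and summing" step inherits the same problem: neither summand is individually linked to $\|f - f^*\|$ by anything more than the heuristic that simpler is closer.

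The paper avoids this entirely by never passing through an abstract complexity measure. It assumes the fit is obtained by least squares, writes the trained maps in closed form as $(Z^T Z)^{-1} h(Z^T) Y$, and decomposes the feature dimensions into the semantic support $\Gamma$ and its complement $\bar\Gamma$. The target $\mathcal{N}^*$ is then characterized structurally: the cross blocks $\Gamma^T\bar\Gamma$, $\bar\Gamma^T\Gamma$ vanish and $h(\bar\Gamma) = \mathbf{0}$. The proof observes (via \cref{eq:reg_goal} and \cref{lemma:filter}) that L-Reg explicitly minimizes $\|\Gamma^T\bar\Gamma\|_2 + \|\bar\Gamma^T\Gamma\|_2 + \|h(\bar\Gamma)\|_2$ while the unregularized model leaves these unconstrained, so $\|\mathcal{N}^R - \mathcal{N}^*\|_2 \le \|\mathcal{N} - \mathcal{N}^*\|_2$ term by term, which gives the inequality on $GL$ directly. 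If you want to salvage your route, you would have to replace the monotonicity lemma with an explicit structural comparison of this kind; otherwise the central implication remains an unproven assertion.
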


\begin{proof}
    We assume the loss is achieved for the tractable form by minimizing the mean squared error. In that case, 
    we have ${f}^{*}_{(X_s, Y_s)}$ for the given training set as: 
    \begin{equation}
        {f}^{*}_{(X, Y)} = 
        (g (X)^T g (X))^{-1} h\circ g (X)^T Y_s = (Z^T Z)^{-1} h (Z^T) Y,
    \end{equation}
    % where we denote $ (Z^T Z)^{-1} Z^T$ as $\mathcal{N}^*$.
    In comparison to ${f}^{*}$, ${f}_{(X_s, Y_s)}$  for the given seen sets is as:
    \begin{equation}
        {f}_{(X_s, Y_s)} = (Z_s^T Z_s)^{-1} h (Z_s^T) Y_s, 
        % {f}^{R}_{(X_s, Y_s)}  = (Z^{R}_s^T Z_s)^{-1} Z^{R}_s^T Y_s,
    \end{equation}
    and ${f}^{R}_{(X_s, Y_s)}$ is derived from ${f}_{(X_s, Y_s)}$, where  $Z_s$ is constrained additionally by L-Reg and the constrained $Z_s$ is denoted as $Z_s^{R}$:
      \begin{equation}
        % {f}_{(X_s, Y_s)} = (Z_s^T Z_s)^{-1} Z_s^T Y_s, 
        {f}^{R}_{(X_s, Y_s)}  = (Z_s^{R\;T} Z_s)^{-1} h(Z_s^{R\;T}) Y_s.
    \end{equation}
    For simplification, we denote  $ (Z^T Z)^{-1} Z^T$, $(Z_s^T Z_s)^{-1} Z_s^T$, and $(Z_s^{R\;T} Z_s)^{-1} Z_s^{R\;T}$ as $\mathcal{N}^*$, $\mathcal{N}$, and $\mathcal{N}^R$, respectively. 
    
    \textbf{The form of  $\mathcal{N}$.} 
    % \red{need fix, need to say conditioned by one class}
    % We disscu
    For multi-domain generalization, the model is tested on the unseen domain, referring that $X_u$ contains some unseen semantics besides the seen: $Z_s \sim \mathcal{Z}_s, Z_u \sim \mathcal{Z}_u, \mathcal{Z}_s \neq \mathcal{Z}_u, \mathcal{Z}_s \cap \mathcal{Z}_u \neq \emptyset$. Considering each dimension of $Z$ represents a specific semantics, we denote $\Gamma$ as the dimensions of $Z$ that contain the seen semantics support in $X_s$ and $\Bar{\Gamma}$ for the unseen, we can decompose $\mathcal{N}$ as:
    \begin{equation}
        \mathcal{N} = \begin{bmatrix}
        \Gamma^T \Bar{\Gamma}  & \Gamma^T \Bar{\Gamma} \\
         \Bar{\Gamma}^T \Gamma & \Bar{\Gamma}^T \Bar{\Gamma}
        \end{bmatrix}^{-1} [h(\Gamma)\; h(\Bar{\Gamma})]^T.
    \end{equation}

    \textbf{The form of  $\mathcal{N}^*$.} 
    Assume $\Gamma$ already contains semantic support for deducting $Y$; thus, $\Bar{\Gamma}$ would not affect the deduction of $Y$. In such case, it has that $\Gamma^T \Bar{\Gamma} = \textbf{0}$ and $\Bar{\Gamma}^T\Gamma = \textbf{0}$ and $\Bar{\Gamma}^T \Bar{\Gamma}= \textbf{1}$ where $\textbf{0}, \textbf{1}$ denote zero matrix and identity matrix:
    \begin{equation}
        \begin{split}
            \mathcal{N}^* &= \begin{bmatrix}
        \Gamma^T \Gamma  & \textbf{0} \\
        \textbf{0} & \textbf{1}
        \end{bmatrix}^{-1}  [h(\Gamma)\; h(\Bar{\Gamma})]^T \\ 
        = & \begin{bmatrix}
        (\Gamma^T \Gamma)  & \textbf{0} \\
        \textbf{0} & \textbf{1}
        \end{bmatrix}^{-1} [h(\Gamma)\; h(\Bar{\Gamma})]^T  \\ 
        = &\begin{bmatrix}
        (\Gamma^T \Gamma)^{-1} h(\Gamma)\\
         h(\Bar{\Gamma})
        \end{bmatrix},
        \end{split}
    \end{equation}
    where we also expect $ h(\Bar{\Gamma}) = \textbf{0}$ so that $z_u$ does not influence the deduction.
    We now have $\mathcal{N}^*$:
    \begin{equation}
        \mathcal{N}^* = \begin{bmatrix}
        \Gamma^T \Gamma  & \textbf{0} \\
        \textbf{0} & \textbf{1}
        \end{bmatrix}^{-1} [h(\Gamma)\; h(\Bar{\Gamma})]^T, \; \text{s.t.}, h(\Bar{\Gamma}) = \textbf{0}.
    \end{equation}
    
    Note that for $\mathcal{N}$ in ${f}_{(X_s, Y_s)}$, $\Gamma^T \Bar{\Gamma}$ and $\Bar{\Gamma}^T \Gamma $ are not constrained. Please refer to \cref{lemma:filter}. Furthermore, $h(\Bar{\Gamma})$ is also not constrained.  
    
    \textbf{The form of $\mathcal{N}^{R}$.} Now we discuss the trainable  $\mathcal{N}^{R}$ obtained with the application of L-Reg. 
    The form of  $\mathcal{N}^{R}$ is similar to $\mathcal{N}^*$. However, \cref{eq:reg_goal} indicates that L-Reg minimizes $||\Gamma^T \Bar{\Gamma}||_2$ and $||\Bar{\Gamma}^T \Gamma ||_2$ through $- H(Y|g(\Bar{\Gamma})), D)$ and also minimizing $||h(\Bar{\Gamma})||_2$: 
   \begin{equation}
        \mathcal{N}^{R} = \begin{bmatrix}
      \Gamma^T \Bar{\Gamma}  & \Gamma^T \Bar{\Gamma} \\
         \Bar{\Gamma}^T \Gamma & \Bar{\Gamma}^T \Bar{\Gamma}
        \end{bmatrix}^{-1} [h(\Gamma)\; h(\Bar{\Gamma})]^T,\; \text{s.t.}, \min ||\Gamma^T \Bar{\Gamma}||_2 + ||\Bar{\Gamma}^T \Gamma ||_2 + |h(\Bar{\Gamma})||_2. 
    \end{equation}
    
    \textbf{Compare $ GL({f}^{R}_{(X_s, Y_s)}, f^*, X_u) $ with $   GL({f}_{(X_s, Y_s)}, f^*, X_u)$.}
    By comparing the forms of $ \mathcal{N}^{R},  \mathcal{N}$ and $ \mathcal{N}^*$, it is obvious that $||\mathcal{N}^{R} - \mathcal{N}^*||_2 \leq ||\mathcal{N} - \mathcal{N}^*||_2$. Therefore, we have that:
     $ GL({f}^{R}_{(X_s, Y_s)}, f^*, X_u) \leq GL({f}_{(X_s, Y_s)}, f^*, X_u)$.
\end{proof}

\begin{lemma}[Minimizing $ H(Y|g(X),D) + \mathcal{R}$ solely may cause generalization degradation]
\label{lemma:filter}
Minimizing $ H(Y|g(X),D) + \mathcal{R}$ solely without L-Reg may conflict with $\max_{h,g} H(Y|g(\Bar\Gamma), D)$, causing invalid semantics for decision process and degrading the generalization. 
\end{lemma}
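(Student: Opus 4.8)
The plan is to show that the single objective $H(Y|g(X),D)+\mathcal{R}$ admits minimizers that actively violate the complementary goal $\max_{h,g}H(Y|g(\Bar\Gamma),D)$, so the two can be in direct conflict. First I would decompose the latent feature along the split of \cref{def:semantic_support} into the semantic support $g(\Gamma)$ and its complement $g(\Bar\Gamma)$, writing $g(X)=(g(\Gamma),g(\Bar\Gamma))$. By the monotonicity of conditional entropy under additional conditioning, we have both $H(Y|g(X),D)\le H(Y|g(\Gamma),D)$ and $H(Y|g(X),D)\le H(Y|g(\Bar\Gamma),D)$. The key observation is that the value of the training objective depends only on the \emph{joint} information that $(g(\Gamma),g(\Bar\Gamma))$ carries about $Y$, and is completely insensitive to how that information is distributed between the two blocks; nothing in $H(Y|g(X),D)$ pushes $H(Y|g(\Bar\Gamma),D)$ toward its maximum $H(Y|D)$.

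Next I would make this insensitivity explicit by exhibiting two solutions attaining the same minimal $H(Y|g(X),D)$. In the benign solution $g(\Gamma)$ is sufficient, $H(Y|g(\Gamma),D)=0$, while $g(\Bar\Gamma)$ is uninformative so that $H(Y|g(\Bar\Gamma),D)=H(Y|D)$ is maximal, as \cref{eq:reg_goal} desires. In the adverse solution I would embed in $\Bar\Gamma$ a feature correlated with $Y$ on the seen set, for instance a domain-dependent cue such as the color term in the cat/dog example accompanying \cref{prop:data}; this forces $H(Y|g(\Bar\Gamma),D)<H(Y|D)$ while leaving $H(Y|g(X),D)$ pinned at its minimum. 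Since a parametric regularizer $\mathcal{R}$ such as $L_2$ weight decay does not measure $H(Y|g(\Bar\Gamma),D)$ and is blind to the $\Gamma/\Bar\Gamma$ split, both solutions are admissible minimizers of $H(Y|g(X),D)+\mathcal{R}$. Hence minimizing the training objective alone is not even aligned with $\max_{h,g}H(Y|g(\Bar\Gamma),D)$, which establishes the possible conflict. Invoking the implicit-bias phenomenon of \cite{abbe2023generalization}, I would further argue that, absent an explicit incentive, gradient-based training can settle on the adverse solution whenever the spurious correlation is the easier fit.

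Finally I would translate the conflict into generalization degradation. When $g(\Bar\Gamma)$ is made informative about $Y$, the predictor routes part of its decision through these non-support semantics; these are precisely the unconstrained cross terms $\Gamma^{T}\Bar\Gamma$, $\Bar\Gamma^{T}\Gamma$ and the nonzero $h(\Bar\Gamma)$ appearing in $\mathcal{N}$ in the closed-form analysis of \cref{app_prop:data}. On an unseen domain the $\Bar\Gamma$--$Y$ correlation need not persist, so a decision built on these invalid semantics fails and the generalization loss grows. I expect the main obstacle to be the second step: rigorously certifying that the adverse construction is a genuine global minimizer of $H(Y|g(X),D)+\mathcal{R}$ (rather than a mere stationary point) and that the optimizer can reach it, since this requires controlling the behavior of $\mathcal{R}$ and the training dynamics rather than relying on purely information-theoretic inequalities. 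A careful statement of the hypotheses on $\mathcal{R}$, namely that it is parametric and independent of the $\Gamma/\Bar\Gamma$ decomposition, will be needed to keep the construction valid and to sharpen the informal ``may conflict'' into a precise existence claim.
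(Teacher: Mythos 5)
Your proposal is correct and rests on the same conceptual pivot as the paper's proof: split $g(X)$ into the support block $g(\Gamma)$ and its complement $g(\Bar\Gamma)$, observe that $H(Y|g(X),D)+\mathcal{R}$ places no constraint on $H(Y|g(\Bar\Gamma),D)$, and conclude that the cross terms $\Gamma^{T}\Bar\Gamma$, $\Bar\Gamma^{T}\Gamma$ and $h(\Bar\Gamma)$ appearing in $\mathcal{N}$ in the proof of \cref{app_prop:data} remain free, which is the source of the degradation. The technical execution differs. The paper gets there by a chain-rule manipulation of $H(Y|g(\Bar\Gamma),g(\Gamma),D)$ and then asserts that, with the independence of $\{z_i\}_{i=1}^M$ unconstrained, $Y$ may be deducible from $\Bar\Gamma$; as printed, the displayed identity carries an extra $H(g(\Bar\Gamma)|g(\Gamma),D)$ term that does not follow from the standard chain rule, so the weight of the argument is really borne by the verbal remark that follows it. Your route --- monotonicity of conditioning, $H(Y|g(X),D)\le H(Y|g(\Bar\Gamma),D)$, followed by exhibiting a benign and an adverse minimizer that the objective cannot distinguish --- is more elementary, avoids that identity entirely, and upgrades the paper's ``may cause'' into an explicit existence claim, at the price of the hypothesis you correctly isolate: that $\mathcal{R}$ is parametric and blind to the $\Gamma/\Bar\Gamma$ split (true for the weight decay used throughout the paper). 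Your final step, routing the adverse minimizer through the unconstrained cross terms of $\mathcal{N}$ to a larger generalization loss on unseen domains, matches the paper's intent exactly. Neither you nor the paper certifies that training actually reaches the adverse solution; both arguments stop at ``the objective does not preclude it,'' which is all the lemma, with its explicit ``may,'' asserts, so the obstacle you flag is real but not one the paper resolves either.
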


\begin{proof}
We have the following relationship for $H(Y|g(z), D)$: 
% \begin{equation}
%      H(Y|g(z), D) = 
%      H(Y , g(\hat{\Gamma})|g(\Gamma),D) - H(g(\hat{\Gamma})|g(\Gamma),D),
% \end{equation}
% where it implies that 
% and the equivalence holds \textit{iff} $\Gamma$ is the support semantics. 
\begin{equation}
\begin{gathered}
    H(Y|g(z), D) = H(Y|g(\Bar{\Gamma}),g(\Gamma),D) \\
    H( Y, g(\Gamma)|g(\Bar{\Gamma}),D) - H(g(\Gamma)|g(\Bar{\Gamma}),D)
     % H(Y , g(\Gamma)|g(\Gamma) g(\hat{\Gamma}),D) - H(g(\Gamma)|g(\hat{\Gamma})),\\
     % =
     % \\
     = 
     H(Y|g(\Bar{\Gamma}),g(\Gamma),D) + H(g(\Bar{\Gamma})|g(\Gamma),D).
     % H(g(\hat{\Gamma})|g(\Gamma),D).
\end{gathered}
\end{equation}
Since the independence between $\{z_i\}_{i=1}^M$ is unconstrained, 
% where implies that if the learned $\Gamma$ and   $\hat{\Gamma}$ are not independent of each other, 
 $ H( Y, g(\Gamma)|g(\Bar{\Gamma}),D)$ may cause that $Y$ can be deducted from $\Bar{\Gamma}$. Therefore, $\Gamma^T \Bar{\Gamma}$ and $ \Bar{\Gamma}^T \Gamma $ are not constrained even when the domain gap is minimized where $|D| = 1$, 
causing the sub-optimal generalization. 
% H(Y|X) + H(X)  = H(X,Y) 
% H(Y|g(\hat{\Gamma}),g(\Gamma),D) + H(g(\hat{\Gamma})|g(\Gamma),D) 
% = H(Y , g(\hat{\Gamma})|g(\Gamma),D)
% \begin{equation}
%     H(Y , h(\hat{\Gamma})|h(\Gamma)) = H(h(\hat{\Gamma})|h(\Gamma)) + H(Y|h(\hat{\Gamma}), h(\Gamma))
% \end{equation}
% H(g(\hat{\Gamma}), Y|g(\Gamma),D) - H(g(\hat{\Gamma})|g(\Gamma),D)=  H(Y|g(\hat{\Gamma}), g(\Gamma),D)
\end{proof}

\section{One toy example}
% \red{toy here}
% We present an informal toy example in which we compare our L-Reg to L1 and L2 regularization.
% As shown in \cref{fig:example} GT image, 
% The data is generated as $f^*(x_1, x_2) = sin(2\pi x_1) \cdot sin(2\pi x_2)$m where $x_1, x_2$ are the horizontal coordinate and vertical coordinate values and $f^*(x_1, x_2)$ is reflected by the pixel color. The training domain is the area highlighted in the black box, and the testing domain is the area outside of the box.  

We present a simplified informal illustrative example to compare the efficacy of our proposed L-Reg against conventional L1 and L2 regularization methods. As depicted in \cref{fig:example}, the ground truth (GT) image represents the underlying data, generated according to $f^*(x_1, x_2) = \sin(2\pi x_1) \cdot \sin(2\pi x_2)$, where $x_1$ and $x_2$ denote the horizontal and vertical coordinates respectively, and the pixel color corresponds to the value of $f^*(x_1, x_2)$. The training domain is delineated by the black box, while the testing domain encompasses the area outside of this boundary.

For our experiments, we use a 6-linear-layer size-110 ReLU model network. Mean squared error serves as the loss function.

% and mean squared error as the loss. 
% It can be seen that regularization improves the extrapolating ability of the model. However, with our proposed L-Reg, the model can extrapolate further than others. This phenomenon reflects the generalization improvements brought by L-Reg. 

Our experimental results reveal that L-Reg enhances the model's ability to extrapolate beyond the training domain. Notably, our proposed L-Reg demonstrates superior extrapolative capabilities compared to traditional $L_1$ and $L_2$ regularization methods. This observation highlights the efficacy of L-Reg in fostering improved generalization.

\begin{figure}[t]
  \centering
  \includegraphics[width=\linewidth]{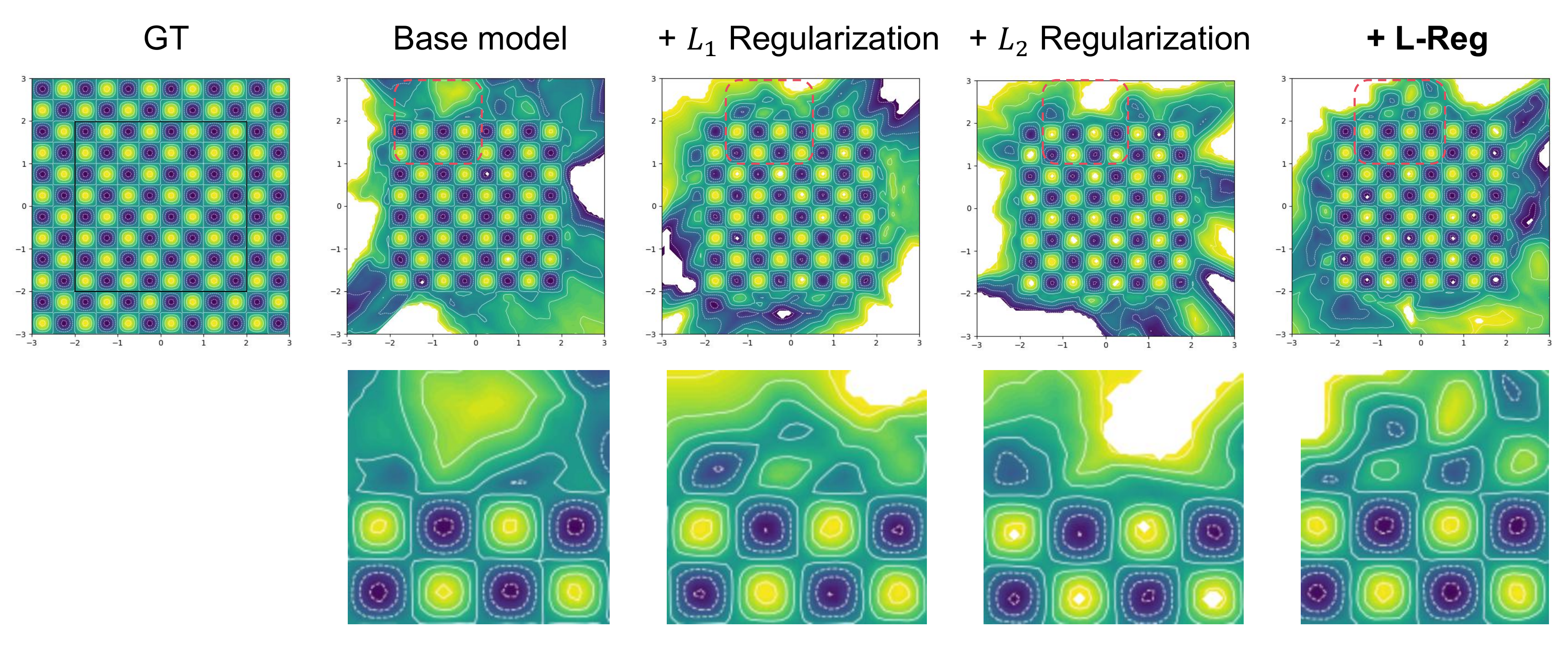}
  \caption{Prediction visualizations of MLP with different regularization terms. }
  \label{fig:example}
\end{figure}

\section{Apply L-Reg to ERM Baseline for mDG} 
\label{app:compare_to_erm}
To further validate L-Reg's efficacy for mDG, we use ERM as the baseline on the TerraIncognita dataset.
For a fair comparison, 
all experiments share the same hyperparameter settings and use the Regnety-16gf backbone. Original ERM results are also included alongside our reproduced results. The results in \cref{tab:more_mDG_ERM_results} reveal that  ERM  with L-Reg significantly improves mDG performance (from 49.9\% to 52.9\%).

\section{Compare L-Reg with more regularization terms} 
\label{app:compare_to_more}
We also compare L-Reg with other regularization terms:  The Ortho-Reg - the orthogonality regularization that constrains the independence of each dimension of the semantic feature $z$; and Sparsity - implemented as Bernoulli Sample of the latent features from the sparse linear concept discovery models \cite{panousis2023sparse} on our used PIM backbone.  
To investigate this fairly, we re-implemented  the Bernoulli
Sample of the latent features from the Sparse Linear Concept Discovery Models \cite{panousis2023sparse} on the same PIM backbone that we used, to achieve the sparsity. 
\cref{tab:more_gcd_compare}\&\cref{tab:more_gcd_compare_full} demonstrate that L-Reg outperforms  Ortho-Reg and Sparsity.

% We  discuss Sparsity more.
Especially, 
while a common sparse concept model may be able to achieve $\gamma^y\psi = z^y$ by filtering irrelevant features through the sparsity, 
it may not ensure $\gamma^{y_i} \neq \gamma^{y_j}$, which is crucial for disentangling features used for predicting different classes. 
This limitation can potentially lead to degradation in generalization performance for common sparse concept models.
\ref{tab:more_gcd_compare}\&\ref{tab:more_gcd_compare_full} indicate that while L-Reg consistently achieves overall improvement,
the sparse concept-based approach does not consistently improve generalization, validating the aforementioned difference.

\section{Limitation of L-Reg and possible solutions} 
\label{app:limiation}
As analyzed and discussed in the paper, L-Reg is based on the precondition that each dimension of the latent features represents an independent semantic. 

We hypothesize this is due to the fact that our L-Reg is derived based on the precondition that $z^i, z^j \in z, I \neq j$ is independent of each other. This condition holds for most deep-layer features but may not apply to shallow layers. Thus, applying L-Reg to the semantic features from the deep layers may improve the performance for unknown classes without negatively impacting known classes. 

Derived from this hypothesis,
another possible solution is further regularizing the independence, which may lead to further improvements.
% Thus, improper semantic features that do not meet this precondition may lead to sub-optimal results, and applying L-Reg to the semantic features from the deep layers may improve the performance for unknown classes without negatively impacting known classes. 
% To test this hypothesis, we conducted experiments with orthogonality regularization (Ortho-Reg) to enforce feature independence in mDG and GCD tasks.
To validate this hypothesis, we test L-Reg by reinforcing independence with Ortho-Reg. 
MDG results in \cref{tab:more_mDG_ERM_results} and GCD results in \cref{tab:more_gcd_compare}\&\cref{tab:more_gcd_compare_full} 
show that combining L-Reg with Ortho-Reg leads to further improvements, whereas Ortho-Reg alone may not guarantee improvements. 
These findings support our hypothesis and suggest that L-Reg, particularly when applied to deep layers or in conjunction with Ortho-Reg, is beneficial.
This suggests a direction for future work.

% Regarding sensitivity, as discussed in Paper Line 344-358  and illustrated in Paper Tab.4, applying L-Reg to the semantic features from the deep layers improves performance for unknown classes without negatively impacting known classes. 
%   As shown in PDF Tab.\ref{tab:more_gcd_compare},\ref{tab:more_gcd_compare_full}\&\ref{tab:more_mDG_ERM_results}, while using Ortho-Reg alone may not be very effective, combining L-Reg with Ortho-Reg leads to further improvements.
% These findings support our hypothesis and suggest that L-Reg, particularly when applied to deep layers or in conjunction with Ortho-Reg, is beneficial.

\begin{table}[t]
% \vspace{-2cm}
    % \begin{table}[h!]
    \centering
    \captionof{table}{\textbf{Results of GCD:} Averaged results across all datasets of PIM with different regularization applied to the latent features: Sparsity: achieved through Bernoulli Sample; Ortho-Reg: orthogonality regularization. +L-Reg outperforms other regularization terms when they are applied solely; +L-Reg+Ortho-Reg achieves the best performance and alleviates the performance degradation of unknown classes, validating our hypothesis in the paper that the improper $Z$ may result in compromises and constraining the independence of each $z^i\in z, z \in Z$ may be helpful.}
    \label{tab:more_gcd_compare}
    \small{
    \resizebox{0.5\linewidth}{!}{%
    \begin{tabular}{l|ccc}
    \hline
     & \multicolumn{3}{c}{Avg} \\ 
     & All & Known & Unknown \\ \hline
    PIM & 67.4 & 79.3 & 59.9 \\ \hline
    +Sparsity & 66.6 & 77.3 & 60.0 \\
    Improvements & -0.7 & -2.0 & 0.1 \\ \hline
    +Ortho-Reg & 68.4 & 79.2 & 61.9 \\
    Improvements & 1.0 & -0.1 & 2.0 \\ \hline
    \rowcolor{mygray}\textbf{+L-Reg} & 68.8 & 79.0 & 62.7 \\
    \rowcolor{mygray}Improvements & 1.4 & -0.3 & 2.8 \\ \hline
    \rowcolor{mygray}\textbf{+L-Reg+Ortho-Reg} & \textbf{69.3} & \textbf{79.6} & \textbf{63.4} \\
    \rowcolor{mygray}Improvements & \textbf{2.0} & \textbf{0.3} & \textbf{3.5} \\ \hline
    \end{tabular}%
    % \end{table}
    }
    }
\end{table}

\begin{table}[t]
\caption{\textbf{Results of GCD:} Detailed results across all datasets of PIM with different regularization applied to the latent features: Sparsity: achieved through Bernoulli Sample; Ortho-Reg: orthogonality regularization.}
\label{tab:more_gcd_compare_full}
\centering
\resizebox{\textwidth}{!}{%
\begin{tabular}{l|ccc|ccc|ccc}
\hline
 & \multicolumn{3}{c|}{\textbf{CUB}} & \multicolumn{3}{c|}{\textbf{Stanford Cars}} & \multicolumn{3}{c}{\textbf{Herbarium19}} \\
\multicolumn{1}{c|}{} & All & Known & Unknown & All & Known & Unknown & All & Known & Unknown \\ \hline
PIM & 62.7 & 75.7 & 56.2 & 43.1 & 66.9 & 31.6 & 42.3 & 56.1 & 34.8 \\ \hline
PIM + Sparsity & 60.1 & 72.7 & 53.8 & 40.4 & 61.7 & 30.1 & 42.0 & 53.7 & 35.8 \\
Improvements & -2.6 & -3.0 & -2.4 & -2.7 & -5.2 & -1.5 & -0.3 & -2.4 & 1.0 \\ \hline
PIM + Ortho-Reg & 64.9 & 76.7 & 58.9 & 44.3 & 65.6 & 34.1 & 42.9 & 57.2 & 35.1 \\
Improvements & 2.2 & 1.0 & 2.7 & 1.2 & -1.3 & 2.5 & 0.6 & 1.1 & 0.3 \\ \hline
\rowcolor{mygray}\textbf{PIM + L-Reg} & 65.3 & 76.0 & 60.0 & 44.8 & 66.0 & 34.6 & \textbf{43.7} & 55.8 & \textbf{37.2} \\
\rowcolor{mygray}Improvements & 2.6 & 0.3 & 3.8 & 1.7 & -0.9 & 3.0 & 1.4 & -0.3 & 2.4 \\ \hline
\rowcolor{mygray}\textbf{PIM + L-Reg + Ortho-Reg} & \textbf{66.8} & \textbf{77.3} & \textbf{61.6} & \textbf{45.8} & \textbf{67.3} & \textbf{35.5} & 43.3 & 57.5 & 35.6 \\
\rowcolor{mygray}Improvements & 4.1 & 1.6 & 5.4 & 2.7 & 0.4 & 3.9 & 1.0 & 1.4 & 0.8 \\ \hline \hline
 & \multicolumn{3}{c|}{\textbf{CIFAR10}} & \multicolumn{3}{c|}{\textbf{CIFAR100}} & \multicolumn{3}{c}{\textbf{ImageNet-100}} \\ 
\multicolumn{1}{c|}{} & All & Known & Unknown & All & Known & Unknown & All & Known & Unknown \\ \hline
PIM & 94.7 & 97.4 & 93.3 & 78.3 & 84.2 & 66.5 & 83.1 & \textbf{95.3} & 77.0 \\ \hline 
PIM + Sparsity & 94.2 & 97.4 & 92.6 & 79.7 & \textbf{84.6} & 69.7 & 83.4 & 93.7 & 78.2 \\
Improvements & -0.5 & 0.0 & -0.7 & 1.4 & 0.4 & 3.2 & 0.3 & -1.6 & 1.2 \\ \hline
PIM + Ortho-Reg & 95.1 & 97.4 & 93.9 & 80.2 & \textbf{84.6} & 71.4 & 83.0 & 93.4 & 77.7 \\
Improvements & 0.4 & 0.0 & 0.6 & 1.9 & 0.4 & 4.9 & -0.1 & -1.9 & 0.7 \\ \hline
\rowcolor{mygray}\textbf{PIM + L-Reg} & 94.8 & \textbf{97.6} & 93.4 & 80.8 & \textbf{84.6} & 73.2 & 83.4 & 94.0 & 78.0 \\
\rowcolor{mygray}Improvements & 0.1 & 0.2 & 0.1 & 2.5 & 0.4 & 6.7 & 0.3 & -1.3 & 1.0 \\ \hline
\rowcolor{mygray}\textbf{PIM + L-Reg + Ortho-Reg} & \textbf{95.1} & \textbf{97.6} & \textbf{93.9} & \textbf{81.2} & 84.2 & \textbf{75.0} & \textbf{83.7} & 93.6 & \textbf{78.7} \\
\rowcolor{mygray}Improvements & 0.4 & 0.2 & 0.6 & 2.9 & 0.0 & 8.5 & 0.6 & -1.7 & 1.7 \\
\hline
\end{tabular}%
}
\end{table}

\begin{table}[t]
\caption{\textbf{Results of mDG:} Results of using ERM as the baseline.  
We use the ERM method as the baseline to test L-Reg's efficacy. 
Ortho-Reg: orthogonality regularization. 
This table includes results: (1) The improved performance of L-Reg on ERM baseline. (2) Comparison between L-Reg and the Ortho-Reg on ERM baseline. 
(3) Using L-Reg and Ortho-Reg together yields further promotion, validating our `improper $z$' hypothesis in the Paper limitation part. 
The used dataset is TerraIncognita. All experiments share the same hyperparameters except the added regularization term. Each group of experiments is run with seeds [0,1,2], and the averaged results for each domain and additionally with the standard deviation (Std) are reported.}
\label{tab:more_mDG_ERM_results}
\resizebox{\textwidth}{!}{%
\centering
\begin{tabular}{l|ccccc}
\hline
\textbf{TerraIncognita} & \textbf{Location 100} & \textbf{Location 38} & \textbf{Location 43} & \textbf{Location 46} & \textbf{Avg ± Std.} \\ \hline
ERM & 54.3 & 42.5 & 55.6 & 38.8 & 47.8 \\
ERM Reproduced & 50.6 & 49.7 & 58 & 41.2 & 49.9±3.6 \\
+Ortho-Reg & 50.7 & 52.6 & 60.5 & 42.7 & 51.6±2.5 \\ \hline
\rowcolor{mygray}\textbf{+L-Reg} & 52.7 & 51.7 & 61.3 & 45.8 & 52.9±4.2 \\
\rowcolor{mygray}\textbf{+L-Reg+Ortho-Reg} & 61.5 & 48.6 & 60.3 & 44 & 53.6±0.5 \\ \hline
\end{tabular}%
}
\end{table}

\section{More experimental details and results}
\label{app:experiments}

All experiments can be conducted on one NVIDIA GeForce RTX 3090 GPU.

\subsection{Multi-domain generalization}
\label{app:mDG}

\textbf{Competitors.}
We listed results from previous important work in the mDG field for better validation. They are:  
MMD~\citep{li2018domainMMD},
Mixstyle~\citep{zhou2021domain},
GroupDRO~\citep{sagawa2019distributionally},
IRM~\cite{arjovsky2019invariant},
ARM~\citep{zhang2021adaptive},
VREx~\citep{krueger2021out},
CDANN~\citep{li2018deep},
DANN~\citep{ganin2016domain},
RSC~\citep{huang2020self},
MTL~\citep{blanchard2021domain},
MLDG~\citep{li2018learning},
Fish~\citep{shi2021gradient},
ERM~\citep{Vapnik1998ERM},
SagNet~\citep{nam2021reducing},
SelfReg~\citep{kim2021selfreg},
CORAL~\citep{sun2016deep},
mDSDI~\cite{bui2021exploiting},
MIRO~\cite{cha2022miro},
and
{GMDG}~\cite{tan2024rethinking}. Among them, GMDG is treated as our baseline since it sufficiently minimizes the domain gaps.

\textbf{Datasets.}
We use PACS (4 domains, 9,991
samples, $7$ classes)~\citep{li2017deeper}, VLCS ($4$ domains, $10,729$ samples, $5$ classes)~\citep{fang2013unbiased}, OfficeHome (4 domains, 15,588 samples, 65 classes)~\citep{venkateswara2017deep}, TerraIncognita (TerraIncognita, $4$ domains, $24,778$ samples, $10$ classes)~\citep{beery2018recognition}, and DomainNet (6 domains,
586,575 samples, 345 classes)~\citep{peng2019moment}.

\textbf{Training details.} 
We use GMDG~\cite{tan2024rethinking} as our baseline. Especially, we use all loss terms proposed in GMDG as $L_{main}$.
The training procedure is the same as MIRO~\cite{cha2022miro} and GMDG.
We use seeds $0,1,2$ for all three trails training. 
% However, we notice that the experiment 
\begin{table}[t]
    \centering
    \caption{Parameters for mDG task}
    \label{tab:mDG_paras}
    \begin{tabular}{lccccccccc}
    \toprule
         Use RegNetY-16GF & lr mult & $\alpha$ \\ \hline
         TerraIncognita &   2.5 &  1e-3 \\
         OfficeHome &  0.1 & 1e-3  \\
         VLCS &  0.1 & 1e-4   \\
         PACS &  0.1 & 5e-4   \\ 
         DomainNet & 5.0 & 1e-3 \\
    \bottomrule     
    \end{tabular}
\end{table}

\textbf{Parameters.} 
% We follow GMDG's parameters, especially for its proposed loss terms. Additionally, we list the hyper-parameters of L-Reg and tuned `lr mult' in \cref{tab:mDG_paras} to reproduce our results.
We adhere to the parameters proposed by GMDG, particularly focusing on its recommended loss terms. Furthermore, we provide a detailed listing of the hyper-parameters pertaining to L-Reg, along with the tuned `lr mult', as outlined in \cref{tab:mDG_paras}, to facilitate the reproducibility of our results.

\textbf{Evaluation metric.}
% The models are trained on seen domains and tested at the unseen domain. Each domain in the datasets is left as the unseen domain for one trial. 
% The prediction accuracy of the unseen domain is reported as the evaluation result. The averaged results of each unseen domain in the datasets reflect the derealization performance of the algorithm on this given dataset. 
The models undergo training on known domains and subsequent testing on unseen domains. For each trial, a distinct domain within the datasets is designated as the unseen domain. The evaluation metric reports the prediction accuracy achieved on these unseen domains. The aggregated results across all unseen domains within the datasets provide a comprehensive assessment of the algorithm's performance in domain generalization for the given datasets.

\textbf{More results.}
Results of each domain for each dataset are presented in \cref{tab:mDG_res_detial1,tab:mdg_res_detial2,tab:mdg_res_detial3,tab:mdg_res_detial4,tab:mdg_res_detial5}.

\begin{table}[t]
\centering
\caption{MDG experiments on TerraIncognita: More results of full GMDG+L-Reg for each category.}
\label{tab:mDG_res_detial1}
\resizebox{\linewidth}{!}{%
\begin{tabular}{l|cccc|c}
\toprule
 \textbf{TerraIncognita} & \multicolumn{1}{c}{\textbf{Location 100}} & \multicolumn{1}{c}{\textbf{Location 38}} & \multicolumn{1}{c}{\textbf{Location 43}} & \multicolumn{1}{c|}{\textbf{Location 46}} & \multicolumn{1}{c}{\textbf{Avg.}} \\ \midrule
ERM~\cite{gulrajani2020search} & 54.3 & 42.5 & 55.6 & {38.8} & 47.8\\
MIRO~\cite{cha2022miro} (use ResNet-50) & - & -&- &- &{50.4} \\
{GMDG}~\cite{tan2024rethinking} (use ResNet-50) & {59.8±1.0} & {45.3±1.7} & {57.1±1.8} & 38.2±5 & 50.1±1.2 \\ 
\midrule
% ERM + SWAD~\cite{cha2021swad} & 55.4 & 44.9 & 59.7 & 39.9 & 50.0 \\
% DIWA~\cite{rame2022diverse} & 57.2&  {50.1} & {60.3} & 39.8 & 51.9 \\
% MIRO~\cite{cha2022miro} + SWAD~\cite{cha2021swad}   (use ResNet-50) & - & -&- &- &52.9 \\
% \rowcolor{mygray}\textbf{GMDG} + SWAD (use ResNet-50)& {61.2±1.4} & 48.4±1.6 & 60.0±0.4 & {42.5±1.1} & {53.0±0.7} \\ \midrule
MIRO~\cite{cha2022miro}  (use RegNetY-16GF) & - & -&- &- &58.9±1.3 \\
{GMDG}~\cite{tan2024rethinking} (use RegNetY-16GF) & 73.3±3.3 & 54.7±1.4 & 67.1±0.3 & 48.6±6.5 & 60.7±1.8 \\ \midrule
% MIRO~\cite{cha2022miro} + SWAD~\cite{cha2021swad}  (use RegNetY-16GF) & - & -&- &- &64.3 \\
% \rowcolor{mygray}\textbf{GMDG} + SWAD (use RegNetY-16GF)& \textbf{74.3±1.5} & \textbf{59.2±1.2} & \textbf{70.6±1.1} & \textbf{56.0±0.8} & \textbf{65.0±0.2} \\ 
\rowcolor{mygray}{GMDG} + \textbf{L-Reg} (use RegNetY-16GF)& 
\textbf{73.9}±0.8&	\textbf{57.1}±2.3&	\textbf{67.9}±1.1&	\textbf{52.7}±4.0&	\textbf{62.9}±0.9
\\ 
\bottomrule
\end{tabular}%
}
\vspace{0.5cm}
% \end{table}
% \begin{table}[t]
\caption{MDG experiments on OfficeHome: More results of full GMDG+L-Reg for each category.}
\label{tab:mdg_res_detial2}
\centering
\resizebox{\linewidth}{!}{%
\begin{tabular}{l|cccc|c}
\toprule
 \textbf{OfficeHome} & \multicolumn{1}{c}{\textbf{art}} & \multicolumn{1}{c}{\textbf{clipart}} & \multicolumn{1}{c}{\textbf{product}} & \multicolumn{1}{c}{\textbf{real}} & \multicolumn{1}{c}{\textbf{Avg.}}  \\ \midrule
ERM~\cite{gulrajani2020search} & 63.1 & 51.9  & 77.2&  78.1& 67.6\\
MIRO~\cite{cha2022miro} (use ResNet-50) & - & -&- &- &70.5±0.4 \\
{GMDG}~\cite{tan2024rethinking} (use ResNet-50) & 68.9±0.3 & 56.2±1.7 & 79.9±0.6 & 82.0±0.4 & 70.7±0.2  \\
\midrule
% ERM + SWAD~\cite{cha2021swad} &66.1 &57.7 &78.4 &80.2& 70.6 \\
% DIWA~\cite{rame2022diverse} & 69.2 & 59 & 81.7& 82.2 &72.8 \\
% MIRO~\cite{cha2022miro} + SWAD~\cite{cha2021swad}   (use ResNet-50) & - & -&- &- &72.4±0.1 \\
% \rowcolor{mygray}\textbf{GMDG} + SWAD (use ResNet-50)& 68.9±0.6 & 58.2±0.6 & 80.4±0.3 & 82.6±0.4 & 72.5±0.2 \\ \midrule
MIRO~\cite{cha2022miro}  (use RegNetY-16GF) & - & -&- &- & 80.4±0.2 \\
{GMDG}~\cite{tan2024rethinking} (use RegNetY-16GF) & \textbf{79.7}±1.6 & 67.7±1.8 & 87.8±0.8 & 87.9±0.7 & 80.8±0.6 \\ \midrule
% MIRO~\cite{cha2022miro} + SWAD~\cite{cha2021swad}  (use RegNetY-16GF) & - & -&- &- &83.3±0.1 \\
% \rowcolor{mygray}\textbf{GMDG} + SWAD (use RegNetY-16GF)& \textbf{84.1±0.2} & \textbf{74.3±0.9} & \textbf{89.9±0.4} & \textbf{90.6±0.1} & \textbf{84.7±0.2} \\
\rowcolor{mygray}{GMDG} + \textbf{L-Reg} (use RegNetY-16GF)& 
78.4±0.3	&\textbf{69.3}±0.7&	\textbf{87.9}±0.6	&\textbf{88.0}±0.8	&\textbf{80.9}±0.5
\\
\bottomrule
\end{tabular}%
}
\vspace{0.5cm}
% \end{table}
% \begin{table}[t]
\caption{MDG experiments on VLCS: More results of full GMDG+L-Reg for each category.}
\label{tab:mdg_res_detial3}
\centering
\resizebox{\linewidth}{!}{%
\begin{tabular}{l|cccc|c}
\toprule
\textbf{VLCS} & \multicolumn{1}{c}{\textbf{caltech101}} & \multicolumn{1}{c}{\textbf{labelme}} & \multicolumn{1}{c}{\textbf{sun09}} & \multicolumn{1}{c|}{\textbf{voc2007}} & \multicolumn{1}{c}{\textbf{Avg.}}\\ \midrule
ERM~\cite{gulrajani2020search}& 97.7 &64.3& 73.4 &74.6 &77.3\\
MIRO~\cite{cha2022miro} (use ResNet-50) & - & -&- &- & 79.0±0.0 \\
{GMDG}~\cite{tan2024rethinking} (use ResNet-50)& 98.3±0.4 & 65.9±1 & 73.4±0.8 & 79.3±1.3 & 79.2±0.3\\
\midrule
% ERM + SWAD~\cite{cha2021swad}&98.8& 63.3& 75.3 &79.2& 79.1\\
% DIWA~\cite{rame2022diverse}&98.9& 62.4 &73.9 &78.9 &78.6 \\
% MIRO~\cite{cha2022miro} + SWAD~\cite{cha2021swad}(use ResNet-50) & - & -&- &- &  79.6±0.2\\
% \rowcolor{mygray}\textbf{GMDG} + SWAD (use ResNet-50) & 98.9±0.4 & 63.6±0.2 & 76.4±0.5 & 79.5±0.6 & 79.6±0.1 \\ \midrule
MIRO~\cite{cha2022miro}  (use RegNetY-16GF) & - & -&- &- & 79.9±0.6\\
{GMDG}~\cite{tan2024rethinking} (use RegNetY-16GF) & 97.9±1.3 & {66.8±2.1} & {\textbf{80.8}±1} & \textbf{83.9}±1.8 & {82.4±0.6} 
\\ \midrule
% MIRO~\cite{cha2022miro} + SWAD~\cite{cha2021swad}  (use RegNetY-16GF) & - & -&- &- & 81.7±0.1\\
% \rowcolor{mygray}\textbf{GMDG} + SWAD (use RegNetY-16GF)  & \textbf{98.4±0.1} & 65.5±1.4 & 79.9±0.4 & \textbf{84.9±0.9} & 82.2±0.3\\ 
\rowcolor{mygray}{GMDG} + \textbf{L-Reg} (use RegNetY-16GF)  & 
\textbf{98.6}±0.1	&\textbf{67.1}±0.1	&80.7±0.7	&83.0±0.8	&\textbf{82.4}±0.0
\\ 
\bottomrule
\end{tabular}%
}
\vspace{0.5cm}
% \end{table}
% \begin{table}[t]
\caption{MDG experiments on PACS: More results of full GMDG+L-Reg for each category.}
\label{tab:mdg_res_detial4}
\centering
\resizebox{\linewidth}{!}{%
\begin{tabular}{l|cccc|c}
\toprule
 \textbf{PACS} & \multicolumn{1}{c}{\textbf{art\_painting}} & \multicolumn{1}{c}{\textbf{cartoon}} & \multicolumn{1}{c}{\textbf{photo}} & \multicolumn{1}{c|}{\textbf{sketch}} & \multicolumn{1}{c}{\textbf{Avg.}}  \\ \midrule
ERM~\cite{gulrajani2020search}  &84.7 & 80.8 & 97.2 & 79.3  &84.2\\
MIRO~\cite{cha2022miro} (use ResNet-50)& - & -&- &- & 85.4±0.4 \\
{GMDG}~\cite{tan2024rethinking} (use ResNet-50) & 84.7±1.0 & 81.7±2.4 & 97.5±0.4 & 80.5±1.8 & 85.6±0.3\\
\midrule
% ERM + SWAD~\cite{cha2021swad} &89.3 &83.4& 97.3& 82.5 &88.1\\
% DIWA~\cite{rame2022diverse} &90.6 &83.4 &98.2 &83.8 &89.0\\
% MIRO~\cite{cha2022miro} + SWAD~\cite{cha2021swad}   (use ResNet-50) & - & -&- &- & 88.4±0.1 \\
% \rowcolor{mygray}\textbf{GMDG} + SWAD (use ResNet-50) & 90.1±0.6 & 83.9±0.2 & 97.6±0.5 & 82.3±0.7 & 88.4±0.1 \\ \midrule
MIRO~\cite{cha2022miro}  (use RegNetY-16GF)& - & -&- &- & 97.4±0.2\\
{GMDG}~\cite{tan2024rethinking} (use RegNetY-16GF) & 97.5±1.0 & 97.0±0.2 & \textbf{99.4}±0.2 & 95.2±0.4 & 97.3±0.1
\\ \midrule
% MIRO~\cite{cha2022miro} + SWAD~\cite{cha2021swad}  (use RegNetY-16GF) & - & -&- &- &96.8±0.2 \\
% \rowcolor{mygray}\textbf{GMDG} + SWAD (use RegNetY-16GF) & \textbf{98.3±0.3} & \textbf{98.0±0.1} & \textbf{99.5±0.3} & \textbf{95.3±0.8} & \textbf{97.9±0.0}\\ 
\rowcolor{mygray}{GMDG} + \textbf{L-Reg} (use RegNetY-16GF) & 
\textbf{97.6}±0.8& 	\textbf{97.1}±0.3& 	99.3±0.2& 	\textbf{95.3}±0.9& 	\textbf{97.4}±0.2
\\ 
\bottomrule
\end{tabular}%
}
\vspace{0.5cm}
% \end{table}
% \begin{table}[t]
\caption{MDG experiments on DomainNet: More results of full GMDG+L-Reg for each category.}
\label{tab:mdg_res_detial5}
\centering
\resizebox{\linewidth}{!}{%
\begin{tabular}{l|cccccc|c}
\toprule
 \textbf{DomainNet} &  \multicolumn{1}{c}{\textbf{clipart}} & \multicolumn{1}{c}{\textbf{info}} & \multicolumn{1}{c}{\textbf{painting}} & \multicolumn{1}{c}{\textbf{quickdraw}} & \multicolumn{1}{c}{\textbf{real}} & \multicolumn{1}{c|}{\textbf{sketch}} & \multicolumn{1}{c}{\textbf{Avg.}} \\ \midrule
ERM~\cite{gulrajani2020search}&50.1 &63.0& 21.2 &63.7& 13.9& 52.9 &44.0 \\
MIRO~\cite{cha2022miro} (use ResNet-50)& - & -&- &- & - & -& 44.3±0.2\\
\textbf{GMDG} (use ResNet-50) & 63.4±0.3 & 22.4±0.4 & 51.4±0.4 & 13.4±0.8 & 64.4±0.3 & 52.4±0.4 & 44.6±0.1 \\
\midrule
% ERM + SWAD~\cite{cha2021swad} &53.5 &66.0& 22.4 &65.8& 16.1 &55.5 &46.5\\
% DIWA~\cite{rame2022diverse} &55.4 &66.2& 23.3 &68.7& 16.5 &56.0 &47.7\\
% MIRO~\cite{cha2022miro} + SWAD~\cite{cha2021swad}   (use ResNet-50)   & - & -&- &- & - & -&47.0±0.0\\
% \rowcolor{mygray}\textbf{GMDG} + SWAD (use ResNet-50)  & 66.4±0.3 & 23.8±0.1 & 54.5±0.3 & 15.8±0.1 & 67.5±0.1 & 55.8±0.0 & 47.3±0.1\\ \midrule
MIRO~\cite{cha2022miro}  (use RegNetY-16GF) & - & -&- &- & - & -& 53.8±0.1\\
{GMDG} (use RegNetY-16GF) & 74.0±0.3 & 39.5±1.5 & 61.5±0.3 & \textbf{16.3}±1.2 & 73.9±1.5 & 62.8±2.4 & 54.6±0.1 
\\ \midrule
% MIRO~\cite{cha2022miro} + SWAD~\cite{cha2021swad}  (use RegNetY-16GF) & - & -&- &- & - & -& 60.7±0.0 \\
% \rowcolor{mygray}\textbf{GMDG} + SWAD (use RegNetY-16GF)  & \textbf{79.0±0.0} & \textbf{46.9±0.4} & \textbf{69.9±0.4} & \textbf{20.7±0.6} & \textbf{81.1±0.3} & \textbf{70.3±0.4} & \textbf{61.3±0.2}  \\ 
\rowcolor{mygray}{GMDG} + \textbf{L-Reg} (use RegNetY-16GF)  & 
\textbf{74.1}±0.1	&\textbf{42.6}±1.0	&\textbf{62.3}±2.9	&12.7±0.9	&\textbf{75.9}±0.8	&\textbf{64.6}±0.2	&\textbf{55.4}±0.0
\\ 
\bottomrule
\end{tabular}%
}
\end{table}

\begin{table}[t]
    % \resizebox{\columnwidth}{!}{%
    \centering
    \caption{Statistics of datasets.}
    \label{tab:stats}
    % \scriptsize
    \resizebox{\linewidth}{!}{
    % \begin{tabular}{p{1.5cm}|cccccc}
    \begin{tabular}{l|cccccc}
        \toprule
                & CUB  & Standford Cars & Herbarium19 & CIFAR10 & CIFAR100 & ImageNet-100 \\
        \hline
        Known classes & 100  & 98             & 341         & 5       & 80       & 50           \\
        Seen data & 1.5K & 2.0K           & 8.9K        & 12.5K   & 20K      & 31.9K        \\ \hline
        All classes & 200  & 196            & 683         & 10      & 100      & 100          \\
        Unseen data & 4.5K & 6.1K           & 25.4K       & 37.5K   & 30K      & 95.3K        \\
        \bottomrule
    \end{tabular}
    }
% \end{table}
\vspace{0.5cm}
% \begin{table}[t]
\caption{Tuned weight decay values for each dataset.}
\label{tab:tuned_wd_a}
\resizebox{\columnwidth}{!}{%
\begin{tabular}{l|cccccc}
\toprule 
                     & CUB    & Standford Cars & Herbarium19 & CIFAR10 & CIFAR100 & ImageNet-100 \\ \hline
Tuned weighted decay & 0.02/2 & 0.02/2         & 0.02/2      & 0.05/2  & 0.005/2  & 0.005/2     \\ 
$\alpha$ of L-Reg&  0.1 & 0.001 & 0.2 & 0.01 & 0.0025 & 0.01
\\ 
\bottomrule
\end{tabular}%
}
% \end{table}
\vspace{0.5cm}
% \begin{table}[t]
% \begin{center}
\centering
\caption{GCD results: Accuracy scores 
across fine-grained and generic PIM datasets with our L-Reg and other competitors. The best results of each group are highlighted in \textbf{bold}. 
Improvement and degradation in our approach from PIM are highlighted in \red{red} and \blue{blue}, respectively.
}
\label{tab:GCD_res_details}
% \begin{small}
% \begin{sc}
% \scriptsize
\resizebox{\linewidth}{!}{%
    \begin{tabular}{l|lll|lll|lll}
        \toprule
        & \multicolumn{3}{c|}{\textbf{CUB}}     & \multicolumn{3}{c|}{\textbf{Stanford Cars}} & \multicolumn{3}{c}{\textbf{Herbarium19}}                                                                                                  \\
        \hline Approach & \multicolumn{1}{l}{{All}} & \multicolumn{1}{l}{{Known}} & \multicolumn{1}{l|}{{Unknown}} & \multicolumn{1}{l}{{All}} & \multicolumn{1}{l}{{Known}} & \multicolumn{1}{l|}{{Unknown}} & \multicolumn{1}{l}{{All}} & \multicolumn{1}{l}{{Known}} & \multicolumn{1}{l}{{Unknown}}           \\
        \hline
        K-means                                                         & 34.3                        & 38.9                              & 32.1                             & 12.8          & 10.6          & 13.8          & 12.9          & 12.9          & 12.8          \\
        RankStats+ \cite{han2021autonovel} (TPAMI-21)                   & 33.3                        & 51.6                              & 24.2                             & 28.3          & 61.8          & 12.1          & 27.9          & 55.8          & 12.8          \\
        UNO+ \cite{fini2021unified} (ICCV-21)                           & 35.1                        & 49.0                              & 28.1                             & 35.5          & \textbf{70.5} & 18.6          & 28.3          & \textbf{53.7}          & 14.7          \\
        ORCA \cite{cao2022openworld} (ICLR-22)                          & 27.5                        & 20.1                              & 31.1                             & 15.9          & 17.1          & 15.3          & 22.9          & 25.9          & 21.3          \\ % ResNet50
        ORCA \cite{cao2022openworld} - ViTB16                           & 38.0                        & 45.6                              & 31.8                             & 33.8          & 52.5          & 25.1          & 25.0          & 30.6          & 19.8          \\ % 
        GCD \cite{vaze2022generalized} (CVPR-22)                        & \textbf{51.3}                        & \textbf{56.6}                              & \textbf{48.7}                             & \textbf{39.0}          & 57.6          & \textbf{29.9}          & \textbf{35.4}          & 51.0          & \textbf{27.0}          \\
        \hline
        \multicolumn{1}{c}{}& \multicolumn{9}{c}{InfoMax based methods} \\ \hline
        % RIM \cite{krause2010discriminative} (NeurIPS-10) (semi-sup.)    & 52.3                        & 51.8                              & 52.5                             & 38.9          & 57.3          & 30.1          & 40.1          & \textbf{57.6} & 30.7          \\
        RIM \cite{krause2010discriminative} (NeurIPS-10)  & 52.3                        & 51.8                              & 52.5                             & 38.9          & 57.3          & 30.1          & 40.1          & \textbf{57.6} & 30.7          \\
        TIM \cite{boudiaf2020information} (NeurIPS-20)                  & 53.4                        & 51.8                              & 54.2                             & 39.3          & 56.8          & 30.8          & 40.1          & 57.4          & 30.7          \\ \hline
        PIM~\cite{chiaroni2023parametric} (ICCV-23)                                                           & 62.7                        & \textbf{75.7 }                             & 56.2                             & 43.1          & \textbf{66.9}          & 31.6          & 42.3          & 56.1          & 34.8          \\
        \rowcolor{mygray}\textbf{PIM + L-Reg (Ours)}  
        &\textbf{65.3}\red{$^{2.6\uparrow}$}
        &\textbf{76.0} \red{$^{0.3\uparrow}$}
        &\textbf{60.0}\red{$^{3.8\uparrow}$}
        &\textbf{44.8}\red{$^{1.7\uparrow}$}
        &{66.0}\blue{$^{1.4\downarrow}$}
        &\textbf{34.6}\red{$^{3.0\uparrow}$}	&\textbf{43.7}\red{$^{2.4\uparrow}$}
        &55.8\blue{$^{0.3\downarrow}$}
        &\textbf{37.2}\red{$^{1.6\uparrow}$} \\ 
        % &  65.3  &    76.0   & 60.0 & 44.8  &     66.0    & 34.6  &  43.7    &  55.8  &  37.2\\
        \toprule
        \toprule
        & \multicolumn{3}{c|}{\textbf{CIFAR10}} & \multicolumn{3}{c|}{\textbf{CIFAR100}}      & \multicolumn{3}{c}{\textbf{ImageNet-100}}                                                                                                 \\
        \hline Approach & \multicolumn{1}{l}{{All}} & \multicolumn{1}{l}{{Known}} & \multicolumn{1}{l|}{{Unknown}} & \multicolumn{1}{l}{{All}} & \multicolumn{1}{l}{{Known}} & \multicolumn{1}{l|}{{Unknown}} & \multicolumn{1}{l}{{All}} & \multicolumn{1}{l}{{Known}} & \multicolumn{1}{l}{{Unknown}}           \\
        \hline
        K-means                                                         & 83.6                        & 85.7                              & 82.5                             & 52.0          & 52.2          & 50.8          & 72.7          & 75.5          & 71.3          \\
        RankStats+ \cite{han2021autonovel} (TPAMI-21)                   & 46.8                        & 19.2                              & 60.5                             & 58.2          & \textbf{77.6}          & 19.3          & 37.1          & 61.6          & 24.8          \\
        UNO+ \cite{fini2021unified} (ICCV-21)                           & 68.6                        & \textbf{98.3}                     & 53.8                             & 69.5          & 80.6          & 47.2          & 70.3          & \textbf{95.0}          & 57.9          \\
        ORCA \cite{cao2022openworld} (ICLR-22)                          & 88.9                        & 88.2                              & 89.2                             & 55.1          & 65.5          & 34.4          & 67.6          & 90.9          & 56.0          \\ % ResNet50
        ORCA \cite{cao2022openworld} - ViTB16                           & \textbf{97.1}               & 96.2                              & \textbf{97.6}                    & 69.6          & {76.4}          & 56.1          & \textbf{76.5}          & {92.2}          & \textbf{68.9}          \\ % 
        GCD \cite{vaze2022generalized} (CVPR-22)                        & 91.5                        & 97.9                              & 88.2                             & \textbf{70.8}          & \textbf{77.6}          & \textbf{57.0}          & 74.1          & 89.8          & 66.3          \\
        \hline
        \multicolumn{1}{c}{} & \multicolumn{9}{c}{InfoMax based methods} \\ \hline
        % RIM \cite{krause2010discriminative} (NeurIPS-10) (semi-sup.)    & 92.4                        & \textbf{98.1}                              & 89.5                             & 73.8          & 78.9          & 63.4          & 74.4          & 91.2          & 66.0          \\
         RIM \cite{krause2010discriminative} (NeurIPS-10)   & 92.4                        & \textbf{98.1}                              & 89.5                             & 73.8          & 78.9          & 63.4          & 74.4          & 91.2          & 66.0          \\
        TIM \cite{boudiaf2020information} (NeurIPS-20)                  & 93.1                        & 98.0                              & 90.6                             & 73.4          & 78.3          & 63.4          & 76.7          & 93.1          & 68.4          \\
        \hline
        PIM~\cite{chiaroni2023parametric} (ICCV-23)                                                          & 94.7                        & 97.4                              & 93.3                             & 78.3          & 84.2          & 66.5          & 83.1          & \textbf{95.3} & 77.0          \\
        \rowcolor{mygray} \textbf{PIM + L-Reg(Ours)}   
        &\textbf{94.8}\red{$^{0.1\uparrow}$}	& \textbf{97.6} \red{$^{0.2\uparrow}$} 	&\textbf{93.4}\red{$^{0.1\uparrow}$} &\textbf{80.8}\red{$^{2.5\uparrow}$}	&\textbf{84.6}\red{$^{0.2\uparrow}$}	&\textbf{73.2}\red{$^{6.7\uparrow}$} &\textbf{83.4}\red{$^{0.3\uparrow}$} &94.0\blue{$^{1.3\downarrow}$} &\textbf{78.0}\red{$^{1.0\uparrow}$}\\ 
        \bottomrule
         % & 94.8   &   97.6   & 93.4 & 80.8   &   84.6  &  73.2 & 83.4   &   94.0   &  78.0
    \end{tabular}
}
% \end{table}
\vspace{0.5cm}
% \begin{table}[t]
    \caption{GCD results: Accuracy scores across fine-grained and generic datasets of each setting.
        % Note that all other experiments use \textbf{tuned hyper-parameter} except baseline. 
        The best results are highlighted in \textbf{bold}. 
        To eliminate the impact of hyper-parameters on performance, we also present the results of PIM with tuned hyper-parameters (termed baseline tuned).
        % the tuned parameters are employed in the retrained PIM.
        % Improvement and degradation in our approach from the baseline are highlighted in \red{red$\uparrow$} and \blue{blue$\downarrow$}, respectively.
        ${L}_{main}$ denotes the losses used in PIM. 
        $g$ denotes the transformation applied to the input features.
    }
    \label{tab:gcd_ablation_res}
    \resizebox{\linewidth}{!}{%
        \begin{tabular}{llccc|ccc|ccc}
            \toprule
                                 &                                                                             &  \multicolumn{3}{c|}{\textbf{CUB}}     & \multicolumn{3}{c|}{\textbf{Stanford Cars}} & \multicolumn{3}{c}{\textbf{Herbarium19}}                                                                                                                                                                                                                                                                                                                                                 \\   \hline
            ID                   & Settings                                                                                    & \multicolumn{1}{c}{{\;\;\;\;\;\;All\;\;\;\;\;\;}} & \multicolumn{1}{c}{{Known}}         & \multicolumn{1}{c|}{{Unknown}}      & \multicolumn{1}{c}{{\;\;\;\;\;\;All\;\;\;\;\;\;}} & \multicolumn{1}{c}{{Known}}         & \multicolumn{1}{c|}{{Unknown}}       & \multicolumn{1}{c}{{\;\;\;\;\;\;All\;\;\;\;\;\;}} & \multicolumn{1}{c}{{Known}}   & \multicolumn{1}{c}{{Unknown}}       \\ \hline
            1                    & Baseline (${L}_{main}$)   & 62.7                                              & 75.7                                & 56.2                                & 43.1                                              & 66.9                                & 31.6                                 & 42.3                                              & 56.1                          & 34.8                                \\ \hline
            2                    & Baseline tuned (${L}_{main}$)                               & 64.8                                              & 75.1                                & 59.6                                & 42.6                                              & 59.3                                & 34.6                                 & 43.1                                              & 57.6                          & 35.4                                \\
            6                    & ${L}_{main}$ + $g$                              & 64.9                                              & \textbf{76.7}                                & 58.9                                & 44.7                                              & 65.8                                & 34.6                                 & 43.0                                              & \textbf{57.4}                          & 35.2                                \\
            \rowcolor{mygray} 9  & \textbf{Ours} (${L}_{main}$+$h$+${L}_{L-Reg}$) &  \textbf{65.3}  &    76.0   & \textbf{60.0} & \textbf{44.8}  &     \textbf{66.0}    & \textbf{34.6}  &  \textbf{43.7}    &  55.8  &  \textbf{37.2}

            \\
            \toprule
            \toprule
                                 &                                                                             & \multicolumn{3}{c|}{\textbf{CIFAR10}} & \multicolumn{3}{c|}{\textbf{CIFAR100}}      & \multicolumn{3}{c}{\textbf{ImageNet-100}}                                                                                                                                                                                                                                                                                                                                                \\
            \hline
            % \cmidrule(r){5-7} \cmidrule(r){8-10} \cmidrule(r){11-13}
            ID                   & Settings                                                                                   & \multicolumn{1}{c}{{All}}                         & \multicolumn{1}{c}{{Known}}         & \multicolumn{1}{c|}{{Unknown}}      & \multicolumn{1}{c}{{All}}                         & \multicolumn{1}{c}{{Known}}         & \multicolumn{1}{c|}{{Unknown}}       & \multicolumn{1}{c}{{All}}                         & \multicolumn{1}{c}{{Known}}   & \multicolumn{1}{c}{{Unknown}}       \\
            \hline
            1                    & Baseline (${L}_{main}$)                                                & 94.7                                              & 97.4                                & 93.3                                & 78.3                                              & 84.2                                & 66.5                                 & 83.1                                              & \textbf{95.3}                 & 77.0                                \\ \hline
            2                    & Baseline tuned (${L}_{main}$)                                               & 95.0                                              & 96.1                                & 94.4                                & 80.3                                              & 84.6                                & 71.8                                 & 83.5                                              & 95.0                          & 77.7                                \\
            6                    & ${L}_{main}$+ $g$                                                               & 94.7                                              & 97.5                                & 93.3                                & 80.8                                              & 84.6                                & 73.1                                 & 83.1                                              & 95.0                          & 77.1                                \\
            \rowcolor{mygray} 9  & \textbf{Ours} (${L}_{main}$+$g$+${L}_{L-Reg}$)                      & \textbf{94.8}   &   \textbf{97.6}   & \textbf{93.4}& \textbf{80.8}   &   \textbf{84.6}  &  \textbf{73.2} & \textbf{83.4}  &   94.0   &  \textbf{78.0}

            \\
            \bottomrule
        \end{tabular}%
    }
\end{table}

\subsection{Generalized category discovery}
\label{app:GCD}
% \subsection{Experimental settings}
\textbf{Competitors.}
We compare our proposed method with existing generalized category discovery methods:
GCD~\cite{vaze2022generalized},
and
PIM~\cite{chiaroni2023parametric}.
In particular, PIM based on information maximization is the current state-of-the-art (SOTA) generalized category discovery method.
Additionally, the traditional machine learning method, k-means~\cite{macqueen1967classification}; three novel category discovery methods:
RankStats+~\cite{han2021autonovel},
UNO+~\cite{fini2021unified},
ORCA~\cite{cao2022openworld}; and several information maximization methods:
RIM~\cite{krause2010discriminative},
and TIM~\cite{boudiaf2020information}
are adapted for generalized category discovery as competitors.
The results of the modified novel category discovery methods are reported in~\cite{vaze2022generalized}, and the modified information maximization methods are reported in~\cite{chiaroni2023parametric}.

\textbf{Usage details of datasets for GCD.}
% Six image datasets are adopted to validate the feasibility of our proposed RPIM compared to other competitors. These datasets include three well-known generic object recognition datasets, CIFAR10~\cite{krizhevsky2009learning},
% CIFAR100~\cite{krizhevsky2009learning} 
% and 
% ImageNet-100~\cite{deng2009imagenet}. Additionally, 
% % we have  
% two fine-grained datasets CUB~\cite{wah2011caltech} and Stanford Cars~\cite{krause20133d} that encompass fine-grained categories posited to be more challenging to differentiate than the more generic object classes; as well as the long-tail dataset Herbarium19~\cite{tan2019herbarium} that  
% mirroring real-world scenarios with pronounced class imbalances, large intra-class variations, and low inter-class variations
% are also incorporated.
% These diverse datasets are intended to validate our approach's effectiveness compared to other competitors comprehensively.
% Following GCD and PIM~\cite{vaze2022generalized,chiaroni2023parametric}, the initial training set of each dataset is partitioned into labeled and unlabeled subsets. To elaborate, half of the image samples affiliated with the known classes are allocated to the labeled subset, while the remaining half are assigned to the unlabeled subset. The unlabeled subset also includes all image samples from the remaining classes in the original dataset, designated as novel classes. Consequently, the unlabeled subset comprises instances from $K$ different classes. 
Following the protocols of GCD and PIM \cite{vaze2022generalized, chiaroni2023parametric}, the initial training set of each dataset is divided into labeled and unlabeled subsets; samples from half of the classes are assigned as unlabeled, and their labels are not used for training. 
Specifically, half of the image samples from known classes are allocated to the labeled subset, while the remaining half are assigned to the unlabeled subset. 
Additionally, the unlabeled subset includes all image samples from the novel classes in the original dataset. As a result, the unlabeled subset consists of instances from $K$ different classes. The detailed statistics of datasets are listed in \cref{tab:stats}.

{\textbf{Training details.} Consistent with PIM, we utilize latent features extracted by the feature encoder DINO (VIT-B/16)~\cite{caron2021emerging} that is pre-trained on ImageNet~\cite{deng2009imagenet} through self-supervised learning. 
The losses proposed in PIM are treated as $L_{main}$. 
The original PIM freezes the feature extractor during the training, directly using the pre-saved extracted features as the model input. For a fair comparison, we only added one linear layer as $g$ on the extracted features, which is the minimal modification.

% Specifically, the labeled set is divided into two subsets: a sub-labeled set comprising half of the samples from known classes, and a sub-unlabeled set encompassing all known classes, in line with the generalized category discovery setting. Hyper-parameters are then optimized based on these subsets to ensure robust performance.
% Our experiments unveil that the model exhibits sensitivity to hyper-parameters, especially weight decay. To address this, a methodology is devised for weight decay searching, involving the construction of smaller labeled and unlabeled subsets derived solely from the labeled data. 
% Specifically, the labeled set is divided into two subsets: a sub-labeled set comprising half of the known classes and a sub-unlabeled set encompassing all known classes, adhering to the generalized category discovery setting. 
% For details on the optimized weight decay values, please see \cref{app:experiments}. It is worth noticing that our PRIM improves both situations where the searched weight decay values are used or not (see more results and analysis in \cref{app:more results}).
}

\textbf{$L_2$ (weight decay) value searching.}
% We have found that the modified PIM model is sensitive to hyper-parameters, particularly weight decay. 
For a more fair comparison, we conduct weight decay value searching to ensure that the weight of $L_2$ is the best. 
To address this, we devised a methodology for weight decay searching involving the construction of smaller labeled and unlabeled subsets derived solely from the labeled data.
% To conduct parameter searching, we split labeled samples and constructed a `smaller' sub-labeled and sub-unlabeled set. Specifically, we take samples under $50\%$ of known classes as the sub-unlabeled samples from unknown classes; furthermore, we take $25\%$ samples from the other $50\%$ known classes as the sub-unlabeled samples from known classes. The left samples are treated as sub-labeled samples. Then,  the hyper-parameters are searched on the sub-labeled and un-labeled sets.
% We found that the values of weight decay affect the performance significantly. 
% The weight decay value searched from the list $[0.001, 0.002, 0.005, 0.01,$ $0.02, 0.05]$ that has the best performance on the sub-unlabeled set is chosen; due to that, the labeled and unlabeled sets are doubled-size of the subsets, its data manifold is more complex when using the complete dataset. 
% Therefore, the chosen weight decay is divided by two for the final training. 
% The searched weight decay values are exhibited as \cref{tab:tuned weight decay}.
To conduct parameter searching, we split the labeled samples to construct a 'smaller' sub-labeled and sub-unlabeled set. Specifically, we take $50\%$ of the samples from known classes as sub-unlabeled samples from unknown classes. Additionally, we take $25\%$ of the samples from the remaining $50\%$ of known classes as sub-unlabeled samples from known classes. The remaining samples are treated as sub-labeled samples. Hyper-parameters are then searched on these sub-labeled and sub-unlabeled sets.

\textbf{Parameters of L-Reg.}
The hyper-parameters of L-Reg values are shown in \cref{tab:tuned_wd_a}.

\textbf{Evaluation metric.}
{Following prior works \cite{vaze2022generalized,chiaroni2023parametric}, we use the proposed accuracy metric from \cite{vaze2022generalized} of all classes, known classes, and unknown classes for evaluation. 
}

\textbf{More results.} 
% The results of each dataset are listed in \cref{tab:GCD_res_details}. It can be seen that with L-Reg, the performances on half of the datasets are improved in both known and unknown classes.  On the other half of the datasets, though L-Reg compromises the performance of the known classes, with significant improvements in the unknown classes, all class's performances are improved. 
The results for each dataset are presented in \cref{tab:GCD_res_details}. It is evident that L-Reg yields enhanced performance across half of the datasets for both known and unknown classes. On the remaining datasets, while L-Reg may slightly compromise the performance of known classes, it demonstrates significant improvements in the unknown classes, resulting in an overall enhancement in the performance across all classes.

% \textbf{More ablation results.} 
% Since the tuned weight decay and extra $g$ are introduced, we conduct ablation studies of tuned weight decay and proposed components. The results are shown in \cref{tab:gcd_ablation_res}. In terms of all classes, it can be seen that the baseline using the tuned weight decays performs slightly better than the previous baseline.
% % on the unkown but comprim to the original weight decays. 
% The tuned weight decays lead to improvements in unknown classes but decreases in the known classes in most datasets. The extra proposed $g$ brings slight improvements in both seen and seen classes from the tuned baseline. 
% Our proposed L-Reg yields significant improvements in the unknown classes,
% verifying \cref{prop:target}.
% However, as we discussed in the main paper, L-Reg may lead to compromises in the known classes. 

\textbf{More ablation results.} 
% Since the tuned weight decay and extra $g$ are introduced, we conduct ablation studies of tuned weight decays and proposed components. The results are shown in \cref{tab:gcd_ablation_res}. It can be seen that the baseline using the tuned weight decays performs comparably to the original weight decays. The tuned weight decays lead to improvements in unknown classes but decreases in the known classes in most datasets. The extra proposed $g$ brings slight improvements in both seen and seen classes from the tuned baseline. 
% Our proposed L-Reg yields significant improvements in the unknown classes,
% verifying \cref{prop:target}.
% However, as we discussed in the main paper, L-Reg may lead to compromises in the known classes. 
Due to the introduction of tuned weight decay and the additional $g$ component, we have conducted ablation studies to assess their impact. The results are summarized in \cref{tab:gcd_ablation_res}. It is observed that the baseline model utilizing the tuned weight decays performs slightly better than the original weight decay settings. Notably, the tuned weight decays contribute to improvements in unknown classes while often leading to slight decreases in known classes across most datasets. 
Inclusion of the proposed extra component $g$ results in marginal improvements in both known and unknown classes compared to the tuned baseline.
Our proposed L-Reg demonstrates significant improvements specifically in the unknown classes, thereby corroborating \cref{prop:target}. However, as discussed in the main paper, it is acknowledged that L-Reg may entail compromises in the performance of known classes.

\subsection{Combination of multi-domain generalization and generalized category discovery}
\label{app:mDG+GCD}
\textbf{Datasets.} 
% We use the datasets used for mDG tasks to construct the mDG+GCD datasets. Specifically, labels from about half of the classes are masked during the seen domains of the training. For example, PCAS datasets contain $7$ classes; we keep the classes with label in $[0, 1, 2, 3]$ and $[4, 5, 6]$ are masked.   
% One may notice here all the unlabeled data is under unknown classes. However, we assume this prior is unknown. 
% \textbf{To align with the GCD setting, it is assumed that the unlabeled set may 
% come from known classes; with this assumption, we do not constrain the model by forcing the unlabeled data to be classified as unknown classes.}
% This presents a more challenging generalization setting. 
We leverage the datasets utilized in mDG tasks to construct the mDG+GCD datasets. Specifically, during the seen domains of training, labels from approximately half of the classes are masked. For instance, in the PCAS dataset comprising $7$ classes, classes labeled within the range $[0, 1, 2, 3]$ are retained, while classes in $[4, 5, 6]$ are masked.
It is noteworthy that data categorized as unknown classes in our setup are from unknown classes. 
However, we acknowledge that this prior is not explicitly known. \textbf{To align with the GCD setting, we operate under the assumption that the unlabeled set may potentially include samples from known classes.} Consequently, we refrain from constraining the model by mandating that unlabeled data be classified solely as unknown classes. This adjustment introduces a more challenging generalization scenario.

\textbf{Training details.}
For all experiments, the implementation directly adds L-Reg to their previously proposed loss sets.  The models are trained with the aforementioned labeled and unlabeled sets from the seen domains and tested on the samples from the unseen domain.

% For PIM applied to the mDG+GCD task, 
% we additionally propose a predictor for getting pseudo-labels and uses their  

\textbf{Parameters.} We include all the parameters for reproducing our experiments in the code. Please refer to the code for details.  
% \red{parameters}

\textbf{Evaluation metric.}
We use the same metric from the GCD task for the mDG+GCD task. Similarly, the metrics include the accuracy for all, known and unknown classes.   

\textbf{More results.}
The averaged results of each dataset are exhibited in \cref{tab:avg_mdg_gcd_more_results}, while the detailed results of each dataset are presented in \cref{tab:mdg_gcd_more_results1,tab:mdg_gcd_more_results2,tab:mdg_gcd_more_results3,tab:mdg_gcd_more_results4,tab:mdg_gcd_more_results5}.

\begin{table}[t]
\caption{MDG+GCD results: accuracy scores of each dataset. Improvements are highlighted in \red{red}.}
\label{tab:avg_mdg_gcd_more_results}
\resizebox{\linewidth}{!}{%
\begin{tabular}{@{}l|l|ccc|ccc|ccc|ccc|ccc@{}}
\toprule
             &                      & \multicolumn{3}{c|}{\textbf{PACS}}                                                         & \multicolumn{3}{c|}{\textbf{HomeOffice}}                                                   & \multicolumn{3}{c|}{\textbf{VLCS}}                                                         & \multicolumn{3}{c|}{\textbf{TerraIncognita}}                                                & \multicolumn{3}{c}{\textbf{DomainNet}}                                                     \\ 
Method       & Domain gap           & {All}  & {Known}  & {Unknown} & {All}  & {Known}  & {Unknown} & {All}  & {Known}  & {Unknown} & {All}  & {Known}  & {Unknown}        & {All}  & {Known}  & {Unknown}         \\ \midrule
ERM           & Not         & 57.26                        & 77.77                        & 22.33                        & 44.80                        & 74.67                        & 8.50                         & 61.51                        & 82.89                        & 34.88                        & 37.34                        & 20.46                        & 45.15                         & 22.56                        & 40.89                        & 6.85                         \\
\textbf{+L-Reg}        &   minimized                   & 55.86                        & 77.69                        & 19.06                        & 43.56                        & 71.78                        & 9.68                         & 61.49                        & 81.33                        & 36.65                        & 40.73                        & 29.27                        & 35.56                         & 25.86                        & 47.07                        & 7.19                         \\
\rowcolor{mygray}Improvements &                      & {-1.40} & {-0.08} & {-3.27} & {-1.24} & {-2.89} & \red{1.18}  & {-0.02} & {-1.55} & \red{1.77}  & \red{3.38}  & \red{8.81}  & {-9.58}  & \red{3.31}  & \red{6.18}  & \red{0.34}  \\ \midrule
PIM          & Not         & 56.35                        & 71.06                        & 27.43                        & 43.42                        & 72.44                        & 8.13                         & 63.19                        & 80.34                        & 40.24                        & 47.75                        & 35.31                        & 50.85                         & 24.03                        & 42.59                        & 7.86                         \\
\textbf{+L-Reg}        &  minimized                    & 58.47                        & 76.49                        & 26.22                        & 44.20                        & 71.75                        & 10.85                        & 59.29                        & 77.96                        & 36.81                        & 49.74                        & 34.08                        & 50.01                         & 24.66                        & 43.86                        & 7.78                         \\
\rowcolor{mygray}Improvements &                      & \red{2.12}  & \red{5.43}  & {-1.21} & \red{0.78}  & {-0.70} & \red{2.72}  & {-3.90} & {-2.38} & {-3.43} & \red{2.00}  & {-1.23} & {-0.84}  & \red{0.63}  & \red{1.27}  & {-0.08} \\ \midrule
MIRO         & Minimized            & 56.83                        & 85.62                        & 24.85                        & 48.28                        & 80.61                        & 9.03                         & 61.53                        & 82.72                        & 35.03                        & 50.22                        & 39.92                        & 49.45                         & 31.49                        & 55.44                        & 10.57                        \\
\textbf{+L-Reg}        &                      & 68.44                        & 97.77                        & 25.64                        & 53.59                        & 79.50                        & 22.21                        & 62.07                        & 83.18                        & 35.21                        & 44.85                        & 40.87                        & 38.42                         & 31.58                        & 54.97                        & 10.98                        \\
\rowcolor{mygray}Improvements &                      & \red{11.61} & \red{12.14} & \red{0.79}  & \red{5.31}  & {-1.11} & \red{13.18} & \red{0.54}  & \red{0.46}  & \red{0.18}  & {-5.37} & \red{0.95}  & {-11.03} & \red{0.10}  & {-0.47} & \red{0.41}  \\ \midrule
GMDG         & Sufficiently  & 58.33                        & 91.46                        & 10.18                        & 48.85                        & 81.41                        & 9.22                         & 61.36                        & 83.31                        & 33.75                        &  40.02	& 32.38	 & 40.07 & {31.15} & 55.17                        & 10.18                        \\
\textbf{+L-Reg}        & minimized                     & 67.82                        & 91.86                        & 31.33                        & 51.96                        & 79.74                        & 18.15                        & 62.32                        & 82.77                        & 36.09                        &45.86& 39.77&	41.55                         & 31.75                        & 55.18                        & 11.30                        \\
\rowcolor{mygray}Improvements &                      & \red{9.50}  & \red{0.40}  & \red{21.15} & \red{3.11}  & {-1.68} & \red{8.92}  & \red{0.97}  & {-0.54} & \red{2.34}  & \red{5.83}	& \red{7.39}	& \red{1.49}  & \red{0.60}  & \red{0.01}  & \red{1.13} \\
\bottomrule
\end{tabular}%
}
\end{table}

\begin{table}[t]
\caption{MDG+GCD results: accuracy scores of each domain in PACS dataset.}
\label{tab:mdg_gcd_more_results1}
\resizebox{\linewidth}{!}{%
\begin{tabular}{l|ccc|ccc|ccc|ccc|ccc}
\toprule
PACS         & \multicolumn{3}{c|}{\textbf{Avg}}                                                          & \multicolumn{3}{c|}{\textbf{art\_painting}} & \multicolumn{3}{c|}{\textbf{cartoon}}      & \multicolumn{3}{c|}{\textbf{photo}}        & \multicolumn{3}{c}{\textbf{sketch}}        \\ 
Method      & {All}  & {Known}  & {Unknown} & {All}  & {Known}  & {Unknown} & {All}  & {Known}  & {Unknown} & {All}  & {Known}  & {Unknown}        & {All}  & {Known}  & {Unknown}         \\  \hline
ERM           & 57.26                        & 77.77                        & 22.33                        & 47.77         & 90.00        & 0.00         & 56.08        & 83.49        & 20.47        & 59.13        & 47.35        & 68.85        & 66.06        & 90.23        & 0.00         \\
with our reg & 55.86                        & 77.69                        & 19.06                        & 45.33         & 85.40        & 0.00         & 50.91        & 90.09        & 0.00         & 63.70        & 48.51        & 76.23        & 63.52        & 86.75        & 0.00         \\
Improvements & \blue{-1.40} & \blue{-0.08} & \blue{-3.27} & -2.44         & -4.60        & 0.00         & -5.17        & 6.60         & -20.47       & 4.57         & 1.16         & 7.38         & -2.54        & -3.48        & 0.00         \\ \hline
PIM          & 56.35                        & 71.06                        & 27.43                        & 46.80         & 55.17        & 37.32        & 50.37        & 89.15        & 0.00         & 62.05        & 49.50        & 72.40        & 66.19        & 90.40        & 0.00         \\
with our reg & 58.47                        & 76.49                        & 26.22                        & 46.74         & 88.05        & 0.00         & 56.50        & 78.77        & 27.57        & 64.30        & 48.51        & 77.32        & 66.35        & 90.62        & 0.00         \\
Improvements & \red{2.12}  & \red{5.43}  & \blue{-1.21} & -0.06         & 32.87        & -37.32       & 6.13         & -10.38       & 27.57        & 2.25         & -0.99        & 4.92         & 0.16         & 0.22         & 0.00         \\ \hline
MIRO         & 56.83                        & 85.62                        & 24.85                        & 51.86         & 97.70        & 0.00         & 56.45        & 99.91        & 0.00         & 48.35        & 75.17        & 26.23        & 70.64        & 69.72        & 73.16        \\
with our reg & 68.44                        & 97.77                        & 25.64                        & 68.46         & 97.82        & 35.24        & 61.51        & 98.02        & 14.09        & 72.60        & 98.84        & 50.96        & 71.18        & 96.39        & 2.26         \\
Improvements & \red{11.61} & \red{12.14} & \red{0.79}  & 16.60         & 0.11         & 35.24        & 5.06         & -1.89        & 14.09        & 24.25        & 23.68        & 24.73        & 0.54         & 26.67        & -70.90       \\ \hline
GMDG         & 58.33                        & 91.46                        & 10.18                        & 51.92         & 97.82        & 0.00         & 54.80        & 96.98        & 0.00         & 56.14        & 74.83        & 40.71        & 70.45        & 96.22        & 0.00         \\
with our reg & 67.82                        & 91.86                        & 31.33                        & 79.26         & 98.05        & 58.00        & 68.18        & 99.25        & 27.82        & 52.40        & 74.50        & 34.15        & 71.47        & 95.66        & 5.34         \\
Improvements & \red{9.50}  & \red{0.40}  & \red{21.15} & 27.33         & 0.23         & 58.00        & 13.38        & 2.26         & 27.82        & -3.74        & -0.33        & -6.56        & 1.02         & -0.56        & 5.34    \\
\bottomrule
\end{tabular}%
}
\end{table}

\begin{table}[t]
\caption{MDG+GCD results: accuracy scores of each domain in HomeOffice dataset.}
\label{tab:mdg_gcd_more_results2}
\resizebox{\linewidth}{!}{%
\begin{tabular}{l|ccc|ccc|ccc|ccc|ccc}
\toprule
HomeOffice   & \multicolumn{3}{c|}{\textbf{Avg}}                                                          & \multicolumn{3}{c|}{\textbf{Art}}          & \multicolumn{3}{c|}{\textbf{Clipart}}      & \multicolumn{3}{c|}{\textbf{Product}}      & \multicolumn{3}{c}{\textbf{Real World}}    \\ 
Method      & {All}  & {Known}  & {Unknown} & {All}  & {Known}  & {Unknown} & {All}  & {Known}  & {Unknown} & {All}  & {Known}  & {Unknown}        & {All}  & {Known}  & {Unknown}         \\  \hline
ERM           & 44.80                        & 74.67                        & 8.50                         & 45.26        & 72.68        & 3.26         & 37.94        & 64.48        & 10.19        & 46.71        & 78.74        & 9.87         & 49.28        & 82.80        & 10.68        \\
with our reg & 43.56                        & 71.78                        & 9.68                         & 41.30        & 62.72        & 8.47         & 35.91        & 60.78        & 9.90         & 48.34        & 78.95        & 13.14        & 48.68        & 84.67        & 7.22         \\
Improvements & \blue{-1.24} & \blue{-2.89} & \blue{1.18}  & -3.96        & -9.96        & 5.22         & -2.03        & -3.70        & -0.29        & 1.63         & 0.21         & 3.27         & -0.60        & 1.88         & -3.46        \\ \hline
PIM          & 43.42                        & 72.44                        & 8.13                         & 42.53        & 68.09        & 3.39         & 35.77        & 56.75        & 13.83        & 47.27        & 77.58        & 12.41        & 48.11        & 87.35        & 2.90         \\
with our reg & 44.20                        & 71.75                        & 10.85                        & 44.64        & 68.85        & 7.56         & 35.48        & 60.90        & 8.90         & 47.49        & 76.32        & 14.35        & 49.17        & 80.92        & 12.59        \\
Improvements & \red{0.78}  & \blue{-0.70} & \red{2.72}  & 2.11         & 0.77         & 4.17         & -0.29        & 4.15         & -4.92        & 0.23         & -1.26        & 1.94         & 1.06         & -6.43        & 9.69         \\ \hline
MIRO         & 48.28                        & 80.61                        & 9.03                         & 50.57        & 79.57        & 6.13         & 39.55        & 67.23        & 10.60        & 51.35        & 86.16        & 11.32        & 51.66        & 89.50        & 8.09         \\
with our reg & 53.59                        & 79.50                        & 22.21                        & 54.02        & 77.87        & 17.47        & 43.87        & 70.98        & 15.52        & 59.94        & 83.95        & 32.32        & 56.54        & 85.21        & 23.52        \\
Improvements & \red{5.31}  & \blue{-1.11} & \red{13.18} & 3.45         & -1.70        & 11.34        & 4.32         & 3.75         & 4.92         & 8.59         & -2.21        & 21.00        & 4.88         & -4.29        & 15.43        \\ \hline
GMDG         & 48.85                        & 81.41                        & 9.22                         & 51.60        & 81.96        & 5.08         & 40.89        & 69.30        & 11.19        & 51.15        & 87.53        & 9.32         & 51.75        & 86.87        & 11.30        \\
with our reg & 51.96                        & 79.74                        & 18.15                        & 52.83        & 79.15        & 12.52        & 43.59        & 69.02        & 16.99        & 56.31        & 83.11        & 25.48        & 55.11        & 87.67        & 17.59        \\
Improvements & \red{3.11}  & \blue{-1.68} & \red{8.92}  & 1.24         & -2.81        & 7.43         & 2.69         & -0.28        & 5.80         & 5.15         & -4.42        & 16.16        & 3.36         & 0.80         & 6.30     \\
\bottomrule
\end{tabular}%
}
\end{table}

\begin{table}[t]
\caption{MDG+GCD results: accuracy scores of each domain in VLCS dataset.}
\label{tab:mdg_gcd_more_results3}
\resizebox{\linewidth}{!}{%
\begin{tabular}{@{}l|ccc|ccc|ccc|ccc|ccc@{}} 
\toprule
VLCS         & \multicolumn{3}{c|}{\textbf{Avg}}                                                          & \multicolumn{3}{c|}{\textbf{Caltech101}}   & \multicolumn{3}{c|}{\textbf{LabelMe}}      & \multicolumn{3}{c|}{\textbf{SUN09}}        & \multicolumn{3}{c}{\textbf{VOC2007}}       \\ 
Method      & {All}  & {Known}  & {Unknown} & {All}  & {Known}  & {Unknown} & {All}  & {Known}  & {Unknown} & {All}  & {Known}  & {Unknown}        & {All}  & {Known}  & {Unknown}         \\  \hline
ERM           & 61.51                        & 82.89                        & 34.88                        & 82.07        & 74.87        & 85.85        & 50.54        & 92.01        & 4.85         & 62.07        & 95.15        & 11.38        & 51.35        & 69.51        & 37.45        \\
with our reg & 61.49                        & 81.33                        & 36.65                        & 76.59        & 75.13        & 77.36        & 50.64        & 91.02        & 6.13         & 60.70        & 92.83        & 11.48        & 58.02        & 66.35        & 51.63        \\
Improvements & \blue{-0.02} & \blue{-1.55} & \blue{1.77}  & -5.48        & 0.26         & -8.49        & 0.09         & -0.99        & 1.29         & -1.37        & -2.33        & 0.10         & 6.66         & -3.16        & 14.18        \\ \midrule
PIM          & 63.19                        & 80.34                        & 40.24                        & 80.39        & 72.05        & 84.77        & 53.84        & 91.74        & 12.07        & 62.22        & 94.21        & 13.21        & 56.31        & 63.36        & 50.92        \\
with our reg & 59.29                        & 77.96                        & 36.81                        & 72.61        & 73.33        & 72.24        & 53.98        & 90.75        & 13.45        & 56.85        & 83.64        & 15.81        & 53.72        & 64.13        & 45.75        \\
Improvements & \blue{-3.90} & \blue{-2.38} & \blue{-3.43} & -7.77        & 1.28         & -12.53       & 0.14         & -0.99        & 1.38         & -5.37        & -10.57       & 2.60         & -2.59        & 0.77         & -5.16        \\ \midrule
MIRO         & 61.53                        & 82.72                        & 35.03                        & 82.77        & 74.10        & 87.33        & 51.81        & 91.83        & 7.72         & 62.22        & 95.59        & 11.09        & 49.32        & 69.34        & 33.99        \\
with our reg & 62.07                        & 83.18                        & 35.21                        & 82.51        & 74.62        & 86.66        & 49.51        & 94.43        & 0.00         & 60.97        & 94.65        & 9.35         & 55.31        & 69.00        & 44.84        \\
Improvements & \red{0.54}  & \red{0.46}  & \red{0.18}  & -0.27        & 0.51         & -0.67        & -2.31        & 2.60         & -7.72        & -1.26        & -0.94        & -1.74        & 6.00         & -0.34        & 10.85        \\ \midrule
GMDG         & 61.36                        & 83.31                        & 33.75                        & 82.51        & 74.87        & 86.52        & 49.93        & 95.24        & 0.00         & 59.86        & 93.96        & 7.62         & 53.13        & 69.17        & 40.85        \\
with our reg & 62.32                        & 82.77                        & 36.09                        & 84.54        & 74.62        & 89.76        & 49.98        & 92.01        & 3.66         & 61.39        & 95.03        & 9.84         & 53.39        & 69.43        & 41.11        \\
Improvements & \red{0.97}  & \blue{-0.54} & \red{2.34}  & 2.03         & -0.26        & 3.23         & 0.05         & -3.23        & 3.66         & 1.52         & 1.07         & 2.22         & 0.26         & 0.26         & 0.26     \\
\bottomrule
\end{tabular}%
}
\end{table}

\begin{table}[t]
\caption{MDG+GCD results: accuracy scores of each domain in TerraIncognita dataset.}
\label{tab:mdg_gcd_more_results4}
\resizebox{\linewidth}{!}{%
\begin{tabular}{@{}l|ccc|ccc|ccc|ccc|ccc@{}}
\toprule
TerraIncognita     & \multicolumn{3}{c|}{\textbf{Avg}}                                                          & \multicolumn{3}{c|}{\textbf{Local 100}} & \multicolumn{3}{c|}{\textbf{Local 38}}      & \multicolumn{3}{c|}{\textbf{Local 43}}        & \multicolumn{3}{c}{\textbf{Local 46}}          \\ 
Method      & {All}  & {Known}  & {Unknown} & {All}  & {Known}  & {Unknown} & {All}  & {Known}  & {Unknown} & {All}  & {Known}  & {Unknown}        & {All}  & {Known}  & {Unknown}         \\  \hline
ERM           & 37.34                        & 20.46                        & 45.15                         & 46.51         & 1.25         & 57.07        & 39.88        & 28.22        & 44.91        & 29.41        & 24.65        & 40.25        & 33.59        & 27.70        & 38.36        \\
with our reg & 40.73                        & 29.27                        & 35.56                         & 52.94         & 0.14         & 65.27        & 39.26        & 33.76        & 41.64        & 40.24        & 57.45        & 1.03         & 30.47        & 25.71        & 34.32        \\
Improvements & \red{3.38}  & \red{8.81}  & \blue{-9.58}  & 6.43          & -1.11        & 8.20         & -0.62        & 5.53         & -3.27        & 10.83        & 32.80        & -39.22       & -3.12        & -1.99        & -4.04        \\ \midrule
PIM          & 47.75                        & 35.31                        & 50.85                         & 50.20         & 28.97        & 55.15        & 56.22        & 19.71        & 71.99        & 46.69        & 47.94        & 43.86        & 37.88        & 44.64        & 32.40        \\
with our reg & 49.74                        & 34.08                        & 50.01                         & 53.94         & 34.12        & 58.57        & 59.87        & 19.58        & 77.26        & 47.07        & 62.75        & 11.35        & 38.09        & 19.88        & 52.87        \\
Improvements & \blue{2.00}  & \blue{-1.23} & \blue{-0.84}  & 3.74          & 5.15         & 3.41         & 3.65         & -0.13        & 5.28         & 0.38         & 14.82        & -32.51       & 0.21         & -24.76       & 20.47        \\ \midrule
MIRO         & 50.22                        & 39.92                        & 49.45                         & 52.23         & 51.25        & 52.46        & 55.54        & 14.73        & 73.16        & 48.93        & 62.89        & 17.13        & 44.19        & 30.79        & 55.06        \\
with our reg & 44.85                        & 40.87                        & 38.42                         & 56.26         & 22.42        & 64.16        & 31.27        & 23.75        & 34.52        & 54.03        & 65.11        & 28.79        & 37.84        & 52.18        & 26.20        \\
Improvements & \blue{-5.37} & \red{0.95}  & \blue{-11.03} & 4.03          & -28.83       & 11.71        & -24.26       & 9.03         & -38.64       & 5.10         & 2.22         & 11.66        & -6.35        & 21.39        & -28.86       \\ \midrule
% GMDG         & 44.21                        & 26.89                        & 55.54                         & 49.25         & 21.45        & 55.74        & 43.57        & 36.57        & 46.60        & 47.32        & 41.96        & 59.55        & 36.69        & 7.59         & 60.29        \\
% with our reg & 52.66                        & 39.33                        & 52.73                         & 59.06         & 30.78        & 65.66        & 59.34        & 35.93        & 69.45        & 46.76        & 58.86        & 19.20        & 45.49        & 31.74        & 56.64        \\
% Improvements & \red{8.45}  & \red{12.43} & \blue{-2.81}  & 9.81          & 9.33         & 9.92         & 15.77        & -0.64        & 22.85        & -0.57        & 16.90        & -40.35       & 8.80         & 24.15        & -3.66    \\
GMDG & 40.02 & 32.38 & 40.07 & 36.70 & 35.65 & 36.94 & 36.69 & 20.86 & 43.53 & 49.46 & 61.76 & 21.47 & 37.24 & 11.24 & 58.33 \\
with our reg & 45.86 & 39.77 & 41.55 & 51.89 & 38.16 & 55.09 & 41.30 & 22.05 & 49.61 & 50.47 & 65.29 & 16.72 & 39.77 & 33.59 & 44.79 \\
Improvements & \red{5.83} & \red{7.39} & \red{1.49} & 15.19 & 2.51 & 18.15 & 4.61 & 1.19 & 6.08 & 1.01 & 3.53 & -4.75 & 2.53 & 22.34 & -13.54 \\
\bottomrule
\end{tabular}%
}

\end{table}

\begin{table}[t]
\caption{MDG+GCD results: accuracy scores of each domain in DomainNet dataset.}
\label{tab:mdg_gcd_more_results5}
\resizebox{\linewidth}{!}{%
% \begin{tabular}{l|ccc|ccc|ccc|ccc|ccc}
\begin{tabular}{@{}l|ccc|ccc|ccc|ccc}
\toprule

DomainNet    & \multicolumn{3}{c|}{\textbf{Avg}}                                                         & \multicolumn{3}{c|}{\textbf{clipart}}      & \multicolumn{3}{c|}{\textbf{info}}         & \multicolumn{3}{c}{\textbf{painting}}   
\\ Method      & {All}  & {Known}  & {Unknown} & {All}  & {Known}  & {Unknown} & {All}  & {Known}  & {Unknown} & {All}  & {Known}  & {Unknown}        \\ 
\hline
ERM           & 22.56                       & 40.89                        & 6.85                         & 31.04        & 58.32        & 7.15         & 17.94        & 34.71        & 6.85         & 30.59        & 51.82        & 9.34                \\
with our reg & 25.86                       & 47.07                        & 7.19                         & 32.03        & 58.43        & 8.91         & 18.17        & 34.31        & 7.50         & 31.93        & 52.58        & 11.24               \\
Improvements & \red{3.31} & \red{6.18}  & \red{0.34}  & 0.99         & 0.11         & 1.76         & 0.23         & -0.41        & 0.65         & 1.33         & 0.76         & 1.90            
\\ \midrule
PIM          & 24.03                       & 42.59                        & 7.86                         & 32.01        & 57.38        & 9.80         & 18.80        & 33.56        & 9.03         & 22.22        & 36.62        & 7.80               \\
with our reg & 24.66                       & 43.86                        & 7.78                         & 31.91        & 57.76        & 9.26         & 16.99        & 30.77        & 7.89         & 28.17        & 45.94        & 10.37             \\
Improvements & \red{0.63} & \red{1.27}  & \blue{-0.08} & -0.11        & 0.38         & -0.54        & -1.80        & -2.79        & -1.15        & 5.95         & 9.32         & 2.57             
\\ \midrule
MIRO         & 31.49                       & 55.44                        & 10.57                        & 40.13        & 67.55        & 16.11        & 25.84        & 48.53        & 10.84        & 37.89        & 62.45        & 13.29             \\
with our reg & 31.58                       & 54.97                        & 10.98                        & 40.61        & 66.72        & 17.75        & 25.58        & 45.83        & 12.19        & 36.74        & 62.29        & 11.15              \\
Improvements & \red{0.10} & \blue{-0.47} & \red{0.41}  & 0.49         & -0.83        & 1.64         & -0.26        & -2.70        & 1.35         & -1.15        & -0.16        & -2.14              
\\ \midrule
GMDG         & 31.15                       & 55.17                        & 10.18                        & 40.38        & 70.69        & 13.84        & 24.96        & 46.50        & 10.72        & 36.29        & 59.80        & 12.75               \\
with our reg & 31.75                       & 55.18                        & 11.30                        & 40.91        & 68.17        & 17.05        & 26.60        & 49.11        & 11.71        & 36.82        & 60.76        & 12.85              \\
Improvements & \red{0.60} & \red{0.01}  & \red{1.13}  & 0.53         & -2.52        & 3.21         & 1.63         & 2.61         & 0.99         & 0.53         & 0.96         & 0.10          \\
% \bottomrule
% \end{tabular}%
% }
% \resizebox{\linewidth}{!}{%
% \begin{tabular}{@{}l|ccc|ccc|ccc|ccc}
\toprule

DomainNet    & \multicolumn{3}{c|}{\textbf{Avg}}                                                       & \multicolumn{3}{c|}{\textbf{quickdraw}}    & \multicolumn{3}{c|}{\textbf{real}}      &
\multicolumn{3}{c}{\textbf{sketch}}
\\ Method      & {All}  & {Known}  & {Unknown} & {All}  & {Known}  & {Unknown} & {All}  & {Known}  & {Unknown} & {All}  & {Known}  & {Unknown}        \\ 
\hline
ERM           & -                       & -                       & -                          & 8.88         & 12.83        & 4.91         & 17.88        & 31.20        & 4.10         & 29.01        & 56.45        & 8.76         \\
with our reg& -                       & -                       & -                            & 9.04         & 14.73        & 3.31         & 34.34        & 63.94        & 3.69         & 29.68        & 58.41        & 8.49         \\
Improvements & -                       & -                       & -                           & 0.16         & 1.91         & -1.59        & 16.45        & 32.74        & -0.41        & 0.67         & 1.96         & -0.27        \\ \midrule
PIM         & -                       & -                       & -                            & 9.92         & 14.73        & 5.09         & 29.09        & 53.88        & 3.42         & 32.12        & 59.35        & 12.03        \\
with our reg & -                       & -                       & -                           & 9.94         & 15.11        & 4.74         & 30.26        & 56.13        & 3.47         & 30.68        & 57.43        & 10.95        \\
Improvements & -                       & -                       & -                           & 0.02         & 0.38         & -0.35        & 1.17         & 2.25         & 0.05         & -1.44        & -1.93        & -1.08        \\ \midrule
MIRO         & -                       & -                       & -                           & 8.06         & 12.12        & 3.98         & 42.19        & 75.49        & 7.72         & 34.83        & 66.51        & 11.46        \\
with our reg & -                       & -                       & -                           & 9.36         & 15.73        & 2.95         & 42.00        & 74.36        & 8.50         & 35.23        & 64.89        & 13.34        \\
Improvements& -                       & -                       & -                            & 1.30         & 3.61         & -1.03        & -0.20        & -1.14        & 0.78         & 0.40         & -1.62        & 1.88         \\ \midrule
GMDG         & -                       & -                       & -                           & 7.43         & 11.83        & 3.01         & 42.84        & 75.27        & 9.27         & 35.01        & 66.95        & 11.46        \\
with our reg & -                       & -                       & -                           & 9.11         & 13.51        & 4.70         & 42.63        & 74.42        & 9.72         & 34.44        & 65.13        & 11.80        \\
Improvements & -                       & -                       & -                           & 1.68         & 1.67         & 1.69         & -0.21        & -0.84        & 0.45         & -0.58        & -1.81        & 0.34  \\
\bottomrule
\end{tabular}%
}
\end{table}

\subsection{More GradCAM visualizations}
\label{app:vis}
We provide more visualized examples of L-Reg. Examples of known classes can be seen in \cref{fig:dog,fig:elephant,fig:giraffe,fig:guitar} and unknown classes in \cref{fig:horse,fig:person}. Compromises in known sets, as discussed in the limitations, can be seen in \cref{fig:elephant,fig:person}.

\begin{figure}[t]
  \centering
  \includegraphics[width=0.8\linewidth]{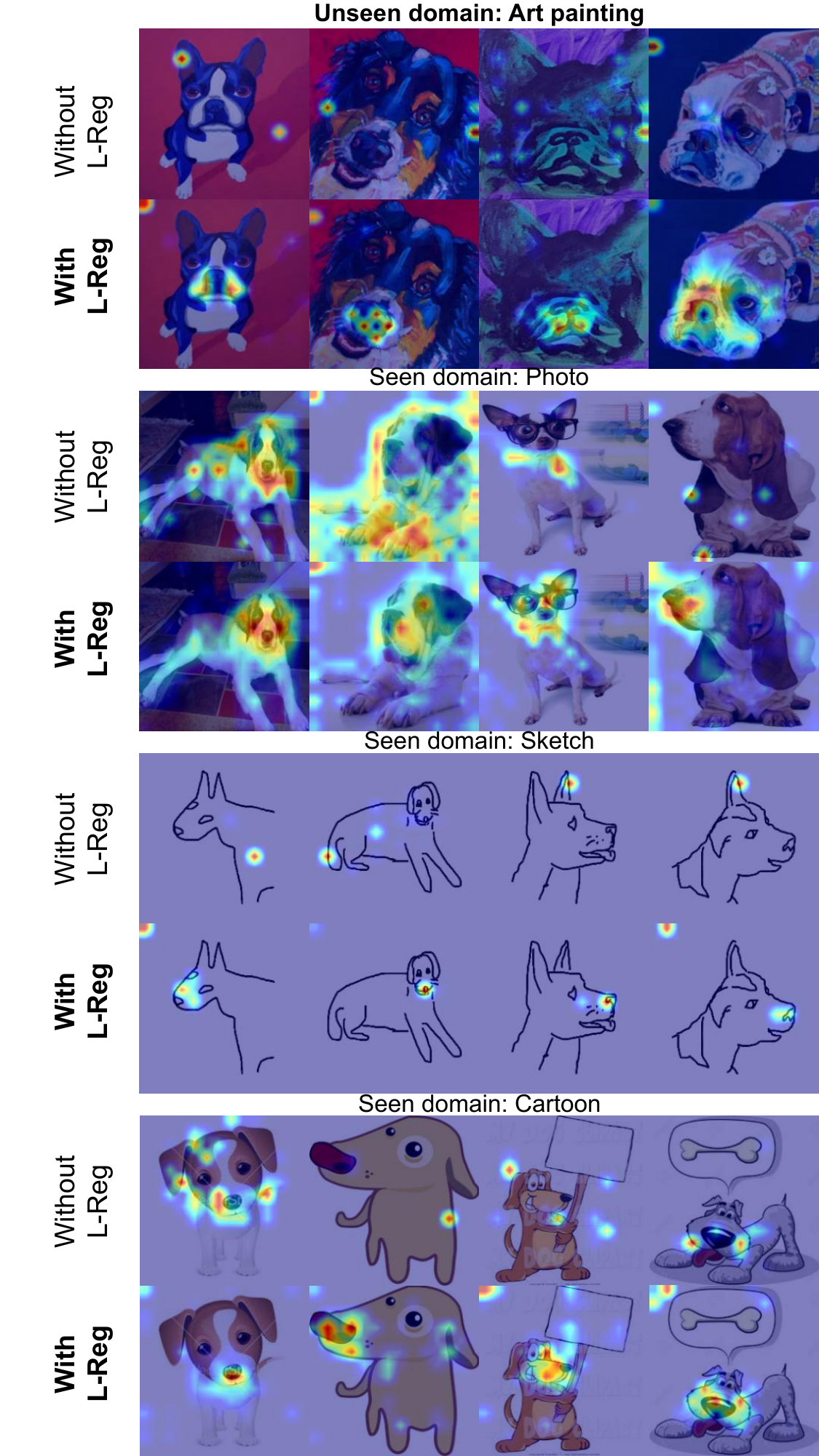}
  \caption{GradCAM visualizations: Baseline is GMDG. The used dataset is PACS. The model is trained under uDG+GCD setting with and without L-Reg, respectively.  It can be seen that for the \textbf{known} class `dog,' the model trained with L-Reg extracts the area around the nose area for classification across all seen and unseen domains.}
  \label{fig:dog}
\end{figure}

\begin{figure}[t]
  \centering
  \includegraphics[width=0.8\linewidth]{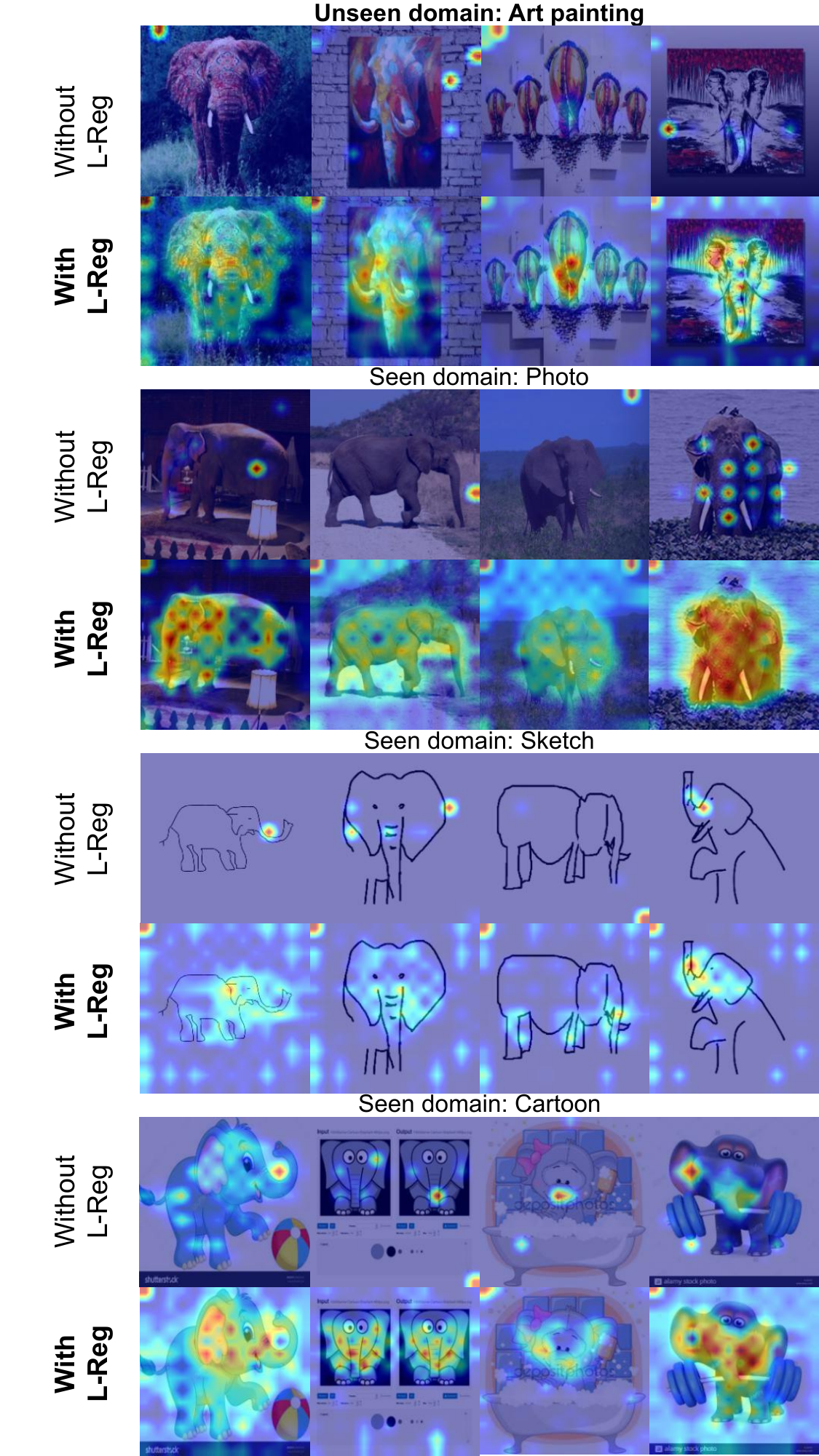}
  \caption{GradCAM visualizations: Baseline is GMDG. The used dataset is PACS. The model is trained under uDG+GCD setting with and without L-Reg, respectively.  It can be seen that for the \textbf{known} class `elephant,' the model trained with L-Reg extracts the shape of long noses, teeth, and big ears for classification across all seen and unseen domains. The compromise of the known sets can be seen in the sketch domain, where those features are not significant. }
  \label{fig:elephant}
\end{figure}

\begin{figure}[t]
  \centering
  \includegraphics[width=0.8\linewidth]{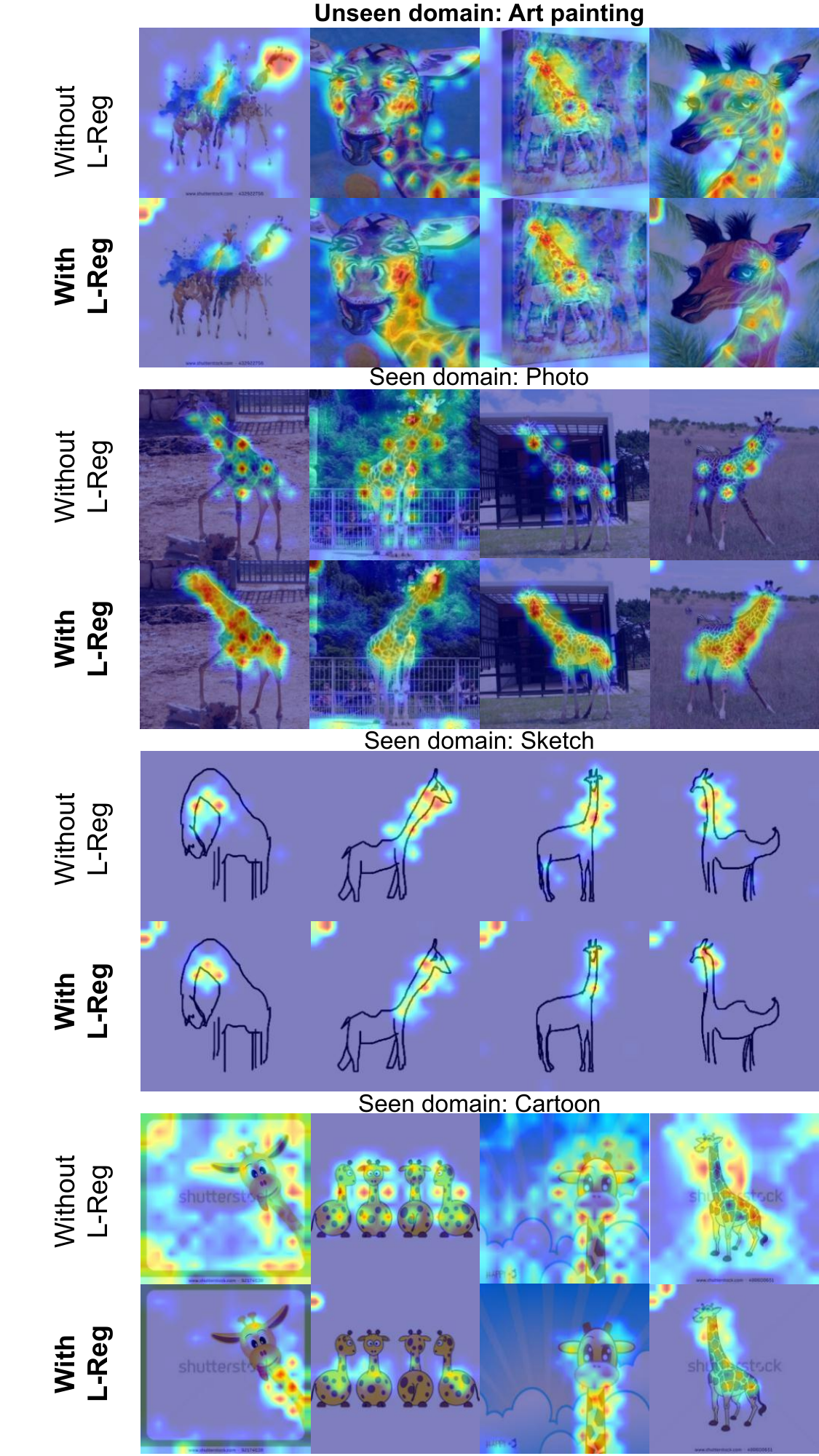}
  \caption{GradCAM visualizations: Baseline is GMDG. The used dataset is PACS. The model is trained under uDG+GCD setting with and without L-Reg, respectively.  It can be seen that for the \textbf{known} class `giraffe,' the model trained with L-Reg extracts the feature of the long necks for classifying across all seen and unseen domains.}
  \label{fig:giraffe}
\end{figure}

\begin{figure}[t]
  \centering
  \includegraphics[width=0.8\linewidth]{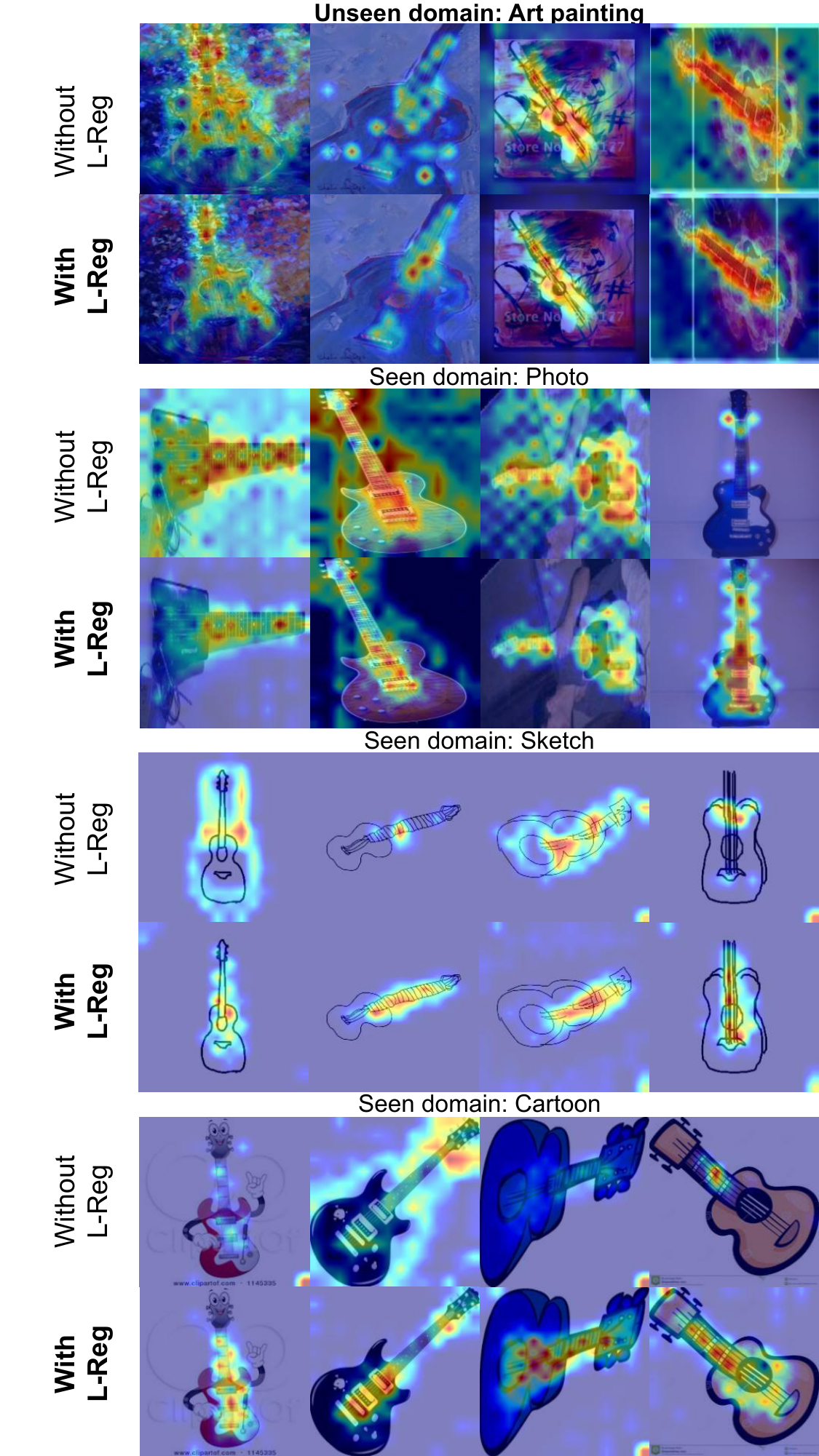}
  \caption{GradCAM visualizations: Baseline is GMDG. The used dataset is PACS. The model is trained under uDG+GCD setting with and without L-Reg, respectively.  It can be seen that for the \textbf{known} class `guitar,' the model trained with L-Reg extracts the features of the necks and the strings of the guitar for classification across all seen and unseen domains.}
  \label{fig:guitar}
\end{figure}

\begin{figure}[t]
  \centering
  \includegraphics[width=0.8\linewidth]{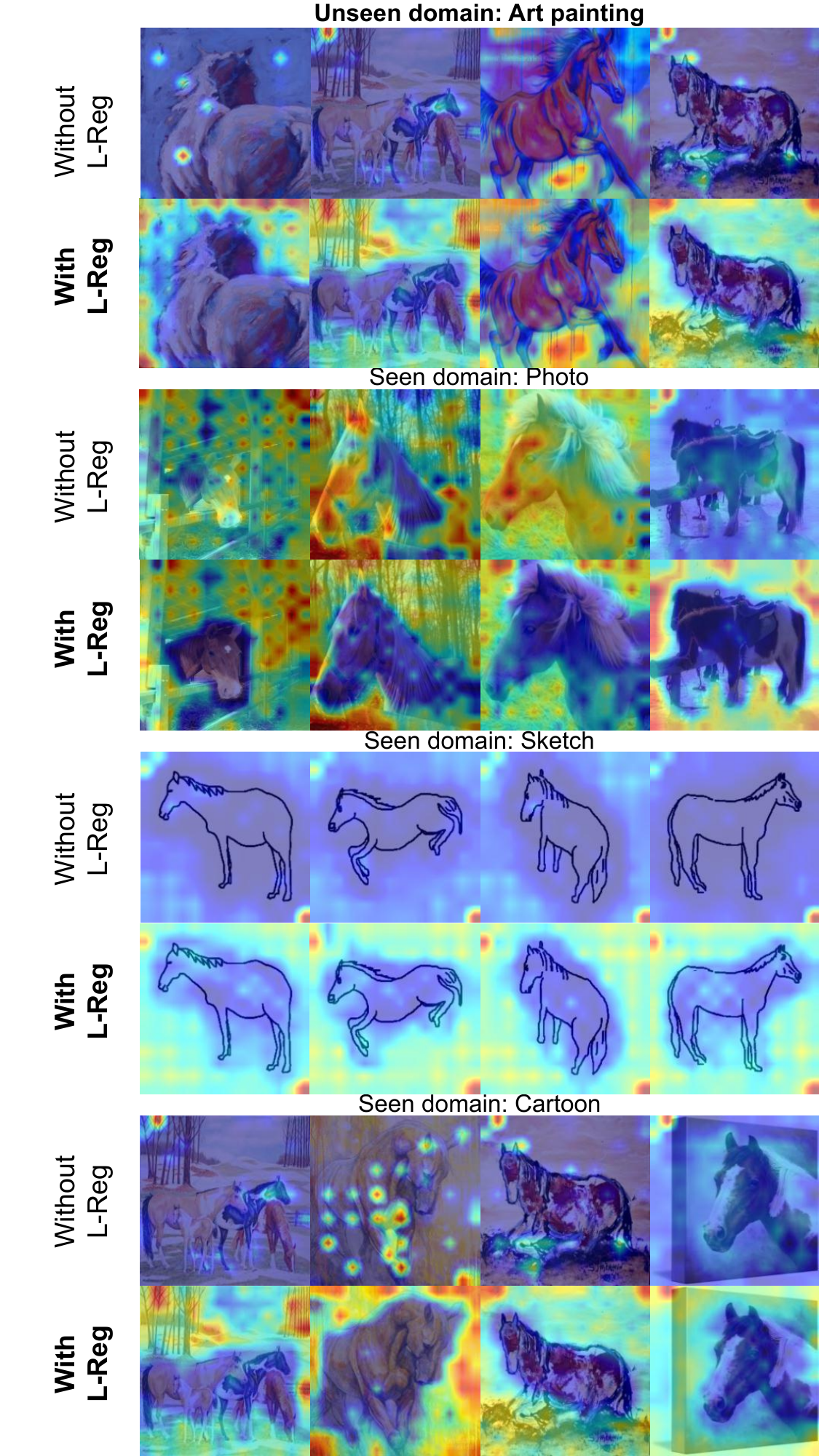}
  \caption{GradCAM visualizations: Baseline is GMDG. The used dataset is PACS. The model is trained under uDG+GCD setting with and without L-Reg, respectively.  It can be seen that for the \textbf{unknown} class `horse,' the model trained with L-Reg extracts the features of the overall outline shapes of horses for classification across all seen and unseen domains.}
  \label{fig:horse}
\end{figure}

\begin{figure}[t]
  \centering
  \includegraphics[width=0.8\linewidth]{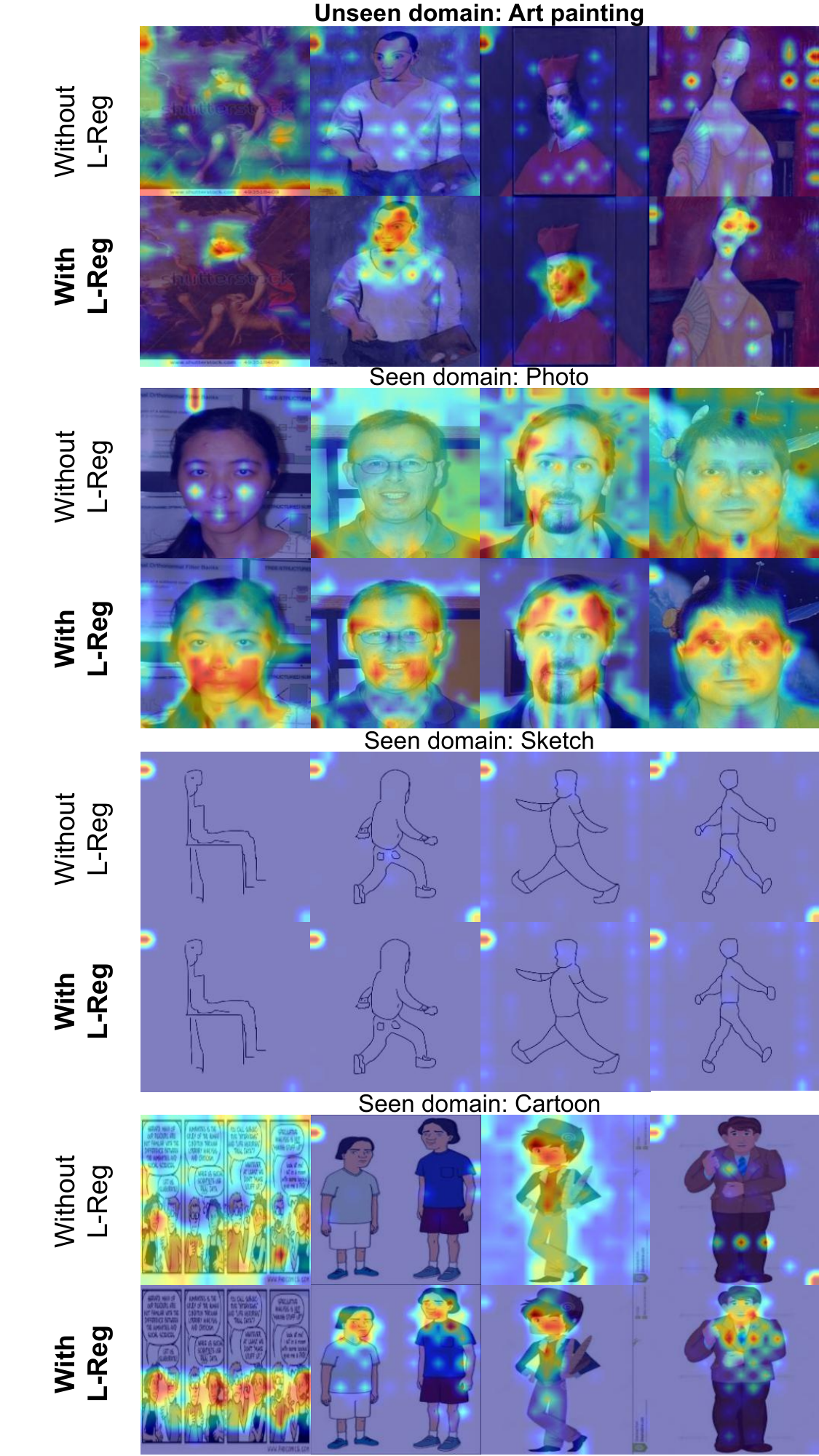}
  \caption{GradCAM visualizations: Baseline is GMDG. The used dataset is PACS. The model is trained under uDG+GCD setting with and without L-Reg, respectively.  It can be seen that for the \textbf{unknown} class `person,' the model trained with L-Reg extracts the features of human faces for classification across all seen and unseen domains. The compromise of the known sets can be seen in the sketch domain, where those faces are not drawn.}
  \label{fig:person}
\end{figure}

\end{document}